\def\eqref#1{(\ref{#1})}
\def\floor#1{\lfloor #1 \rfloor}
\def\1{\bm{1}}
\DeclareMathAlphabet{\mathsfit}{\encodingdefault}{\sfdefault}{m}{sl}
\SetMathAlphabet{\mathsfit}{bold}{\encodingdefault}{\sfdefault}{bx}{n}
\newcommand{\E}{\mathbb{E}}
\newcommand{\Cov}{\mathrm{Cov}}
\DeclareMathOperator*{\argmin}{arg\,min}
\newcommand{\infbound}{R^2}
\newcommand{\defeq}{\stackrel{\mathrm{def}}{=}}
\renewcommand{\vec}[1]{\mathbf{#1}}
\newcommand{\err}[2]{\textrm{err}_{#2}^{(#1)}}
\newcommand{\mat}[1]{\mathbf{#1}}
\newcommand{\Rd}{\mathbb{R}^d}
\renewcommand{\a}{\vec{a}}
\newcommand{\A}{\mat{A}}
\newcommand{\bb}{\vec{b}}
\newcommand{\B}{\mat{B}}
\newcommand{\ws}{\vec{w}^*}
\newcommand{\w}{\vec{w}}
\newcommand{\x}{\vec{x}}
\newcommand{\var}[1]{v^{\left(#1\right)}}
\newcommand{\iprod}[2]{\left\langle #1, #2 \right\rangle}
\renewcommand{\E}[1]{\mathbb{E}\left[#1\right]}
\renewcommand{\Cov}{\mat{H}}
\newcommand{\V}{\mat{V}}
\newcommand{\red}[1]{{\color{red} #1}}
\newcommand{\rahul}[1]{}
\newcommand{\rong}[1]{}
\newcommand{\eye}{\Id}
\newcommand{\Id}{\mathbf{I}}
\newcommand{\eqdef}{\stackrel{\textrm{def}}{=}}
\newcommand{\order}[1]{\mathcal{O}\left(#1\right)}
\newcommand{\osgd}{OSGD}
\newcommand{\cifar}{cifar-10}
\newcommand{\lamminH}{\lambda_{\textrm{min}}(\Cov)}
\newcommand{\norm}[1]{\left\| #1 \right\|}
\newcommand{\twonorm}[1]{\norm{#1}_2}
\renewcommand{\c}{c}
\title{The Step Decay Schedule: A Near Optimal, Geometrically Decaying Learning Rate Procedure For Least Squares\footnote{This paper appears in the proceedings of the conference in Neural Information Processing Systems (NeurIPS), $2019$, held in Vancouver Canada.}}
\author[1]{Rong Ge}
\author[2]{Sham M. Kakade}
\author[3]{Rahul Kidambi}
\author[4]{Praneeth Netrapalli}
\affil[1]{Duke University, Durham, NC, USA,    \url{rongge@cs.duke.edu}}
\affil[2]{University of Washington, Seattle, WA, USA,  \url{sham@cs.washington.edu}}
\affil[3]{Cornell University, Ithaca, NY, USA,  \url{rkidambi@cornell.edu}}
\affil[4]{Microsoft Research, Bangalore, KA, India, \url{praneeth@microsoft.com}.}
\date{}
\newtheorem{claim}{Claim}
\newtheorem{theorem}{Theorem}
\newtheorem{lemma}[theorem]{Lemma}
\newtheorem{proposition}[theorem]{Proposition}
\theoremstyle{definition}
\begin{document} 
\maketitle

\begin{abstract}
Minimax optimal convergence rates for numerous classes of stochastic convex optimization problems are well characterized, where the
majority of results utilize iterate averaged stochastic gradient descent (SGD) with polynomially decaying step sizes. In contrast, the behavior of SGD’s final iterate has received much less attention despite the widespread use in practice. Motivated by this observation, this work provides a detailed study of the following question: what rate is achievable using the final iterate of SGD for the streaming least squares regression problem with and without strong convexity? 

First, this work shows that even if the time horizon $T$ (i.e. the number of iterations that SGD is run for) is known in advance, the behavior of SGD’s final iterate with \emph{any} polynomially decaying learning rate scheme is highly sub-optimal compared to the statistical minimax rate (by a condition number factor in the strongly convex case and a factor of $\sqrt{T}$ in the non-strongly convex case). In contrast, this paper shows that \emph{Step Decay} schedules, which cut the learning rate by a constant factor every constant number of epochs (i.e., the learning rate decays geometrically) offer significant improvements over \emph{any} polynomially decaying step size schedule. In particular, the behavior of the final iterate with step decay schedules is off from the statistical minimax rate by only \emph{log} factors (in the condition number for the strongly convex case, and in $T$ in the non-strongly convex case). Finally, in stark contrast to the known horizon case, this paper shows that the anytime (i.e. the limiting) behavior of SGD’s final iterate is poor (in that it queries iterates with highly sub-optimal function value infinitely often, i.e. in a limsup sense) irrespective of the stepsize scheme employed. These results demonstrate the subtlety in establishing optimal learning rate schedules (for the final iterate) for stochastic gradient procedures in fixed time horizon settings.

\end{abstract}

\section{Introduction}

Large scale machine learning relies almost exclusively on stochastic optimization methods~\citep{BottouB07}, which include stochastic gradient descent (SGD)~\citep{RobbinsM51} and its variants~\cite{Duchi2011Adagrad, JohnsonZ13}.  In this work, we restrict our attention to the SGD algorithm where we are concerned with the behavior of the final iterate (i.e. the last point when we terminate the algorithm). A majority of (minimax optimal) theoretical results for SGD focus on polynomially decaying stepsizes~\citep{dekel12,RakhlinSS12,JulienSB12,Bubeck14} (or constant stepsizes~\citep{BachM13,defossez15,jain2016parallelizing} for the case of least squares regression) coupled with iterate averaging \citep{Ruppert88, PolyakJ92} to achieve minimax optimal rates of convergence. However, practical SGD implementations typically return the final iterate of a stochastic gradient procedure. This line of work in theory (based on iterate averaging) and its discrepancy with regards to practice leads to the question with regards to the behavior of SGD's final iterate. Indeed, this question has motivated several efforts in stochastic convex optimization literature as elaborated below.

{\bf Non-Smooth Stochastic Optimization: }The work of~\cite{Shamir12} raised the question with regards to the behavior of SGD's final iterate for non-smooth stochastic optimization (with/without strong convexity). The work of~\cite{ShamirZ12} answered this question, indicating that SGD's final iterate with polynomially decaying stepsizes achieves near minimax rates (up to $\log$ factors) in an anytime (i.e. in a limiting) sense (when number of iterations SGD is run for is not known in advance). Under specific choices of step size sequences, \cite{ShamirZ12}'s result on SGD's final iterate is tight owing to the recent work of~\citet{HarveyLPR18}. More recently~\cite{JainNN19} presented an approach indicating that a more nuanced stepsize sequence serves to achieve minimax rates (up to constant factors) for the non-smooth stochastic optimization setting when the end time $T$ is known in advance. 
\begin{figure}[t!]\vspace*{-14pt}
	\begin{minipage}{.45\linewidth}
		\RestyleAlgo{boxruled}
		\begin{algorithm}[H]
			\caption{Step Decay scheme}
			\label{algo:StepDecay}
			\ \KwIn{Initial vector $\w$, starting learning rate $\eta_0$, number of iterations $T$}
			\ \KwOut{$\w$}
			\ \For{$\ell \gets 1$ \KwTo $\log T$}{\vspace{1mm}
				$\eta_{\ell} \leftarrow {\eta_0}/{2^\ell}$ \vspace{1.5mm}\\
				\For{$t \gets 1$ \KwTo ${T}/{\log T}$}{
					$\w \leftarrow \w - \eta_\ell \widehat{\nabla}f(\w)$
				}\ 
			}
		\end{algorithm}
	\end{minipage}\hspace*{5pt}
	\begin{minipage}{.55\linewidth}
		\includegraphics[width=\columnwidth]{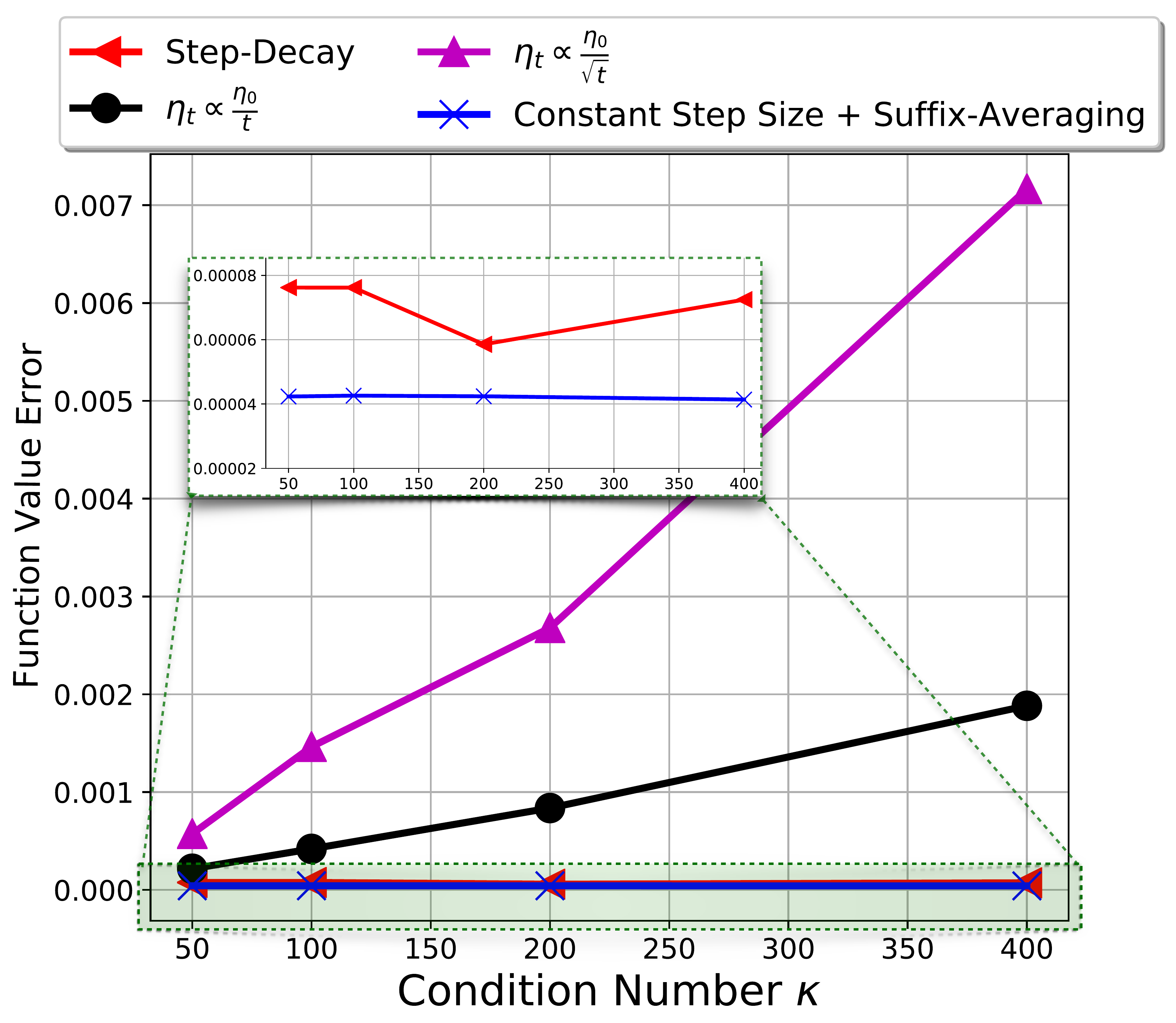}
	\end{minipage}
	\caption{(Left) The Step Decay scheme for stochastic gradient descent. Note that the algorithm requires just two parameters - the starting learning rate $\eta_0$ and number of iterations $T$.\\ (Right) Plot of function value error vs. condition number for the final iterate of polynomially decaying stepsizes i.e., equation(\ref{eq:p1},\ref{eq:p2}), step-decay schedule (Algorithm~\ref{algo:StepDecay}) compared against the minimax optimal suffix averaged iterate with a constant stepsize \citep{jain2016parallelizing} for a synthetic two-dimensional least squares regression problem(\ref{eq:objectiveFn}). The condition number $\kappa$ is varied as $\{50,100,200,400\}$. Exhaustive grid search is performed on starting stepsize and decay parameters. Initial excess risk is ${d\sigma^2}$ and the algorithm is run for $T=\kappa_{\max}^2=400^2$ steps (for all experiments); results are averaged over $5$ random seeds. Observe that the final iterate's error grows linearly as a function of the condition number $\kappa$ for the polynomially decaying stepsize schemes, whereas, the error does not grow as a function of $\kappa$ for the geometric ``step-decay'' stepsize scheme. See section~\ref{sapp:syn2d} in  the appendix  for details.}\label{fig:synthetic}
\end{figure}

{\bf Least Squares Regression (LSR):} In contrast to the non-smooth setting, the state of our understanding of SGD's final iterate for smooth stochastic convex optimization, or, say, the streaming least squares regression setting is far less mature $-$ this gap motivates our paper's contributions. In particular, this paper studies SGD's final iterate behavior under various stepsize choices for least squares regression (with and without strong convexity). The use of SGD's final iterate for the least mean squares objective has featured in several efforts~\citep{WidrowH60, Proakis74, WidrowS85, RoyS90}, but these results {\em do not} achieve minimax rates of convergence, which leads to the following question:

{\bf ``} Can polynomially decaying stepsizes (known to achieve minimax rates when coupled with iterate averaging~\citep{Ruppert88, PolyakJ92}) offer minimax optimal rates on SGD's {\em final} iterate when optimizing the streaming least squares regression objective? If not, is there {\em any} other family of stepsizes that can guarantee minimax rates on the final iterate of stochastic gradient descent? {\bf ''}

This paper presents progress on answering the above question $-$ refer to contributions below for more details. Note that the oracle model employed by this work (to quantify SGD's final iterate behavior) has featured in a string of recent results that present a non-asymptotic understanding of SGD for least squares regression, with the caveat being that these results crucially rely on {\em iterate averaging} with constant stepsize sequences~\citep{BachM13, defossez15, jain2016parallelizing, jain2017accelerating, JainKKNPS17, NeuR18}.

\begin{table*}[t]
	\begin{center}\captionsetup{singlelinecheck=off}
		\begin{adjustbox}{max width=\textwidth}
			\begin{tabular}{| c | c | c | c | c |}
				\hline
				& Assumptions & Minimax rate & \begin{tabular}{@{}c@{}}
					Rate w/ Final iterate \\ using best poly-decay
				\end{tabular} &  \begin{tabular}{@{}c@{}} Rate w/ Final iterate  \\ using Step Decay  \end{tabular} \\ 
				\hline
				\begin{tabular}{@{}c@{}} General \\ convex functions \end{tabular} & \begin{tabular}{@{}c@{}} $\E{\norm{\widehat{\nabla} f}^2} \leq G^2$ \\ $\mathrm{Diam}\left(\mathrm{Constraint Set}\right) \leq D$ \end{tabular} & $\frac{GD}{\sqrt{T}} $ & \begin{tabular}{@{}c@{}} $\Theta\left( \frac{GD}{\sqrt{T}}\cdot \red{\log T} \right) $\\ \citep{ShamirZ12,HarveyLPR18}\end{tabular}& -- \\
				\hline
				\begin{tabular}{@{}c@{}} Non-strongly convex \\ least squares regression \end{tabular} & Eq.~\eqref{eq:varAtOpt} & $\frac{\sigma^2 d}{T}$ & \begin{tabular}{@{}c@{}} $\Omega\left( \frac{\sigma^2 d}{T} \cdot \red{\frac{\sqrt{T}}{\log T}} \right)$ \\ (This work - Theorem~\ref{thm:poly}) \end{tabular} & \begin{tabular}{@{}c@{}} $\order{ \frac{\sigma^2 d}{T} \cdot \red{\log T }}$ \\ (This work - Theorem~\ref{thm:const-cut}) \end{tabular} \\
				\hline
				\begin{tabular}{@{}c@{}} General strongly \\ convex functions \end{tabular} & \begin{tabular}{@{}c@{}} $\E{\norm{\widehat{\nabla} f}^2} \leq G^2$ \\ $\nabla^2 f \succeq \mu \eye$ \end{tabular} & $\frac{G^2}{\mu T} $ & \begin{tabular}{@{}c@{}}$\Theta\left( \frac{G^2}{ \mu T} \cdot \red{\log T} \right) $\\\citep{ShamirZ12,HarveyLPR18}\end{tabular} & -- \\
				\hline
				\begin{tabular}{@{}c@{}} Strongly convex \\ least squares regression\end{tabular} & \begin{tabular}{@{}c@{}} Eq.~\eqref{eq:varAtOpt} \\ $\nabla^2 f \succeq \mu \eye$ \end{tabular} & $\frac{\sigma^2 d}{T}$ & \begin{tabular}{@{}c@{}} $\Omega\left( \frac{\sigma^2 d}{T} \cdot \red{\kappa} \right)$ \\ ({This work} - Theorem~\ref{thm:poly}) \end{tabular} &\begin{tabular}{@{}c@{}} $\order{ \frac{\sigma^2 d}{T} \cdot \red{\log T}}$ \\ (This work - Theorem~\ref{thm:const-cut}) \end{tabular} \\
				\hline
			\end{tabular}
		\end{adjustbox}
		\caption[]{Comparison of sub-optimality for {\em final} iterate of SGD (i.e., $\E{f(\w_T)} - f(\w^*)$) for stochastic convex optimization problems. This paper's focus is on SGD's final iterate for streaming least squares regression. The minimax rate refers to the best possible worst case rate with access to stochastic gradients (typically achieved with iterate averaging methods~\citep{PolyakJ92,dekel12,RakhlinSS12}); the \textcolor{red}{red} shows the multiplicative factor increase (over the minimax rate) using the final iterate, under two different learning rate schedules - the polynomial decay and the step decay (refer to Algorithm~\ref{algo:StepDecay}). Polynomial decay schedules are of the form  $\eta_t\propto1/t^\alpha$ (for appropriate $\alpha\in[0.5,1]$). For the general convex cases below, the final iterate with a polynomial decay scheme is off minimax rates by a $\log{T}$ factor (in an anytime/limiting sense)~\citep{ShamirZ12}. Here $\widehat{\nabla}f, \nabla f = \E{\widehat{\nabla} f}, \nabla^2 f$ denotes the stochastic gradient, gradient and the Hessian of the function $f$. With regards to least squares, we assume equation~\eqref{eq:varAtOpt}, following recent efforts~\citet{BachM13, defossez15, jain2016parallelizing}. While polynomially decaying stepsizes are nearly minimax optimal for general (strongly) convex functions, this paper indicates they are highly suboptimal on the final iterate for least squares. The geometrically decaying Step Decay schedule (Algorithm~\ref{algo:StepDecay}) provides marked improvements over any polynomial decay scheme on the final iterate for least squares. For simplicity of presentation, the results for least squares regression do not show dependence on initial error. See Theorems~\ref{thm:poly} and~\ref{thm:const-cut} for precise statements (and \cite{Nemirovsky83, ShamirZ12, HarveyLPR18} for precise statements of the general case).}\label{tab:main1}

	\end{center}
\end{table*}
\paragraph{Our contributions:}
This work establishes upper and lower bounds on the behavior of SGD's final iterate, as run with polynomially decaying stepsizes as well as {\em step decay} schedules which tends to cut the stepsize by a constant factor after every constant number of epochs (see algorithm~\ref{algo:StepDecay}), by considering the streaming least squares regression problem (with and without strong convexity). Our main result indicates that step decay schedules offer significant improvements in achieving near minimax rates over polynomially decaying stepsizes in the known horizon case (when the end time $T$ is known in advance). Figure~\ref{fig:synthetic} illustrates that this difference is evident (empirically) even when optimizing a two-dimensional synthetic least squares objective. Table~\ref{tab:main1} provides a summary. Finally, we present results that indicate the subtle (yet significant) differences between the known time horizon case and the anytime (i.e. the limiting) behavior of SGD's final iterate (see below). Note that proofs of our main claims can be found in the appendix.

Our main contributions are as follows:\vspace*{-2mm}
\begin{itemize}
	\item {\em Sub-optimality of polynomially decaying stepsizes:} For the strongly convex least squares case, this work shows that the final iterate of a polynomially decaying stepsize scheme (i.e. with $\eta_t\propto 1/t^\alpha$, with $\alpha\in[0.5,1]$) is off the minimax rate $d\sigma^2/T$ by a factor of the {\em condition number} of the problem. For the non-strongly convex case of least squares, we show that {\em any} polynomially decaying stepsize can achieve a rate no better than $d\sigma^2/\sqrt{T}$ (up to $\log$ factors), while the minimax rate is $d\sigma^2/T$.\vspace*{-0.12em}
	\item {\em Near-optimality of the step-decay scheme:} Given a fixed end time $T$, the step-decay scheme (algorithm~\ref{algo:StepDecay}) presents a final iterate that is off the statistical minimax rate by just a $\log(T)$ factor when optimizing the strongly convex and non-strongly convex least squares regression \footnote{This dependence can be improved to $\log$ of the condition number of the problem (for the strongly convex case) using a more refined stepsize decay scheme.}, thus indicating vast improvements over polynomially decaying stepsize schedules. We note here that our Theorem~\ref{thm:const-cut} for the non-strongly case offers a rate on the initial error (i.e., the bias term) that is off the best known rate~\citep{BachM13} (that employs iterate averaging) by a dimension factor. That said, Algorithm~\ref{algo:StepDecay} is rather straightforward and employs the knowledge of just an initial learning rate and number of iterations for its implementation.
	\item {\em SGD has to query bad iterates infinitely often:} For the case of optimizing strongly convex least squares regression, this work shows that any stochastic gradient procedure (in a $\limsup$ sense) {\em must} query sub-optimal iterates (off by nearly a condition number) infinitely often.
	\item Complementary to our theoretical results for the stochastic linear regression, we evaluate the empirical performance of different learning rate schemes when training a residual network on the~\cifar~ dataset and observe that the continuous variant of step decay schemes (i.e. an exponential decay) indeed compares favorably to polynomially decaying stepsizes.
\end{itemize}
While the upper bounds established in this paper (section~\ref{sec:const-cut}) merit extensions towards broader smooth convex functions (with/without strong convexity), the lower bounds established in sections~\ref{sec:poly}, \ref{sec:inf-horizon} present implications towards classes of smooth stochastic convex optimization. Even in terms of upper bounds, note that there are fewer results on non-asymptotic behavior of SGD (beyond least squares) when working in the oracle model considered in this work (see below). \citet{BachM11,BachM13,Bach14,NeedellSW16} are exceptions, yet they do not achieve minimax rates on appropriate problem classes;~\citet{FrostigGKS15} does not work in standard stochastic first order oracle model~\citep{Nemirovsky83,AgarwalBRW12}, so their work is not directly comparable to examine extensions towards broader function classes.

As a final note, this paper's result on the sub-optimality of standard polynomially decaying stepsizes for classes of smooth and strongly convex optimization doesn't contradict the (minimax) optimality results in stochastic approximation~\citep{PolyakJ92}. Iterate averaging coupled with polynomially decaying learning rates (or constant learning rates for least squares~\citep{BachM13, defossez15, jain2016parallelizing}) does achieve minimax rates~\citep{Ruppert88, PolyakJ92}. However, as mentioned previously, this work deals with SGD's final iterate behavior (i.e. without iterate averaging), since this bears more relevance towards practice.
{\bf Related work:}
\cite{RobbinsM51} introduced the stochastic approximation problem and Stochastic Gradient Descent (SGD). They present conditions on stepsize schemes satisfied by asymptotically convergent algorithms: these schemes are referred to as ``convergent'' stepsize sequences. ~\cite{Ruppert88,PolyakJ92} proved the asymptotic optimality of iterate averaged SGD with larger stepsize sequences. In terms of oracle models and notions of optimality, there exists two lines of thought (see also~\cite{jain2017accelerating}):

{\em Towards statistically optimal estimation procedures:} The goal of this line of thought is to match the excess risk of the statistically optimal estimator~\citep{anbar1971optimal, KushnerClark, PolyakJ92, lehmann1998theory} on every problem instance. Several efforts consider SGD in this oracle~\citep{BachM11, Bach14, DieuleveutB15, FrostigGKS15, NeedellSW16} presenting non-asymptotic results, often with iterate averaging. With regards to least squares, \citet{BachM13, defossez15, FrostigGKS15,jain2016parallelizing, jain2017accelerating,NeuR18} use constant step-size SGD with iterate averaging to achieve minimax rates (on a per-problem basis) in this oracle model. SGD's final iterate behavior for least squares has featured in several efforts in the signal processing/controls literature~\citep{WidrowH60, NagumoN67, Proakis74, WidrowS85, RoyS90, SharmaSB98}, without achieving minimax rates. This paper works in this oracle model and analyzes SGD's final iterate behavior with various stepsize choices.

{\em Towards optimality under bounded noise assumptions:} The other line of thought presents algorithms with access to stochastic gradients satisfying bounded noise assumptions, aiming to match lower bounds provided in~\cite{Nemirovsky83, RaginskyR11, AgarwalBRW12}. Asymptotic properties of ``convergent'' stepsize schemes have been studied in great detail \citep{KushnerClark, Benveniste90, ljungPW92, BharathB99, KushnerY03, Lai03, Borkar08}. ~\cite{dekel12, JulienSB12, RakhlinSS12, ghadimi2012optimal, ghadimi2013optimal, HazanK14, Bubeck14, DieuleveutFB16} use iterate averaged SGD to achieve minimax rates for various problem classes non-asymptotically. ~\citet{AllenZhu18} present an alternative approach towards minimizing the gradient norm with access to stochastic gradients. As noted,~\cite{ShamirZ12} achieves anytime optimal rates (upto a $\log{T}$ factor) with the final iterate of an SGD procedure, and this is shown to be tight with the recent work of~\citet{HarveyLPR18}. \citet{JainNN19} achieve minimax rates on the final iterate using a nuanced stepsize scheme when the number of iterations is fixed in advance. 

{\em Geometrically Decaying Stepsize Schedules} date to~\citet{Goffin:subgrad}.~\citet{DavisD19} employ the stepdecay schedule to prove high-probability guarantees for SGD with strongly convex objectives. In stochastic optimization, several other works, including~\citet{GhadimiL13, HazanK14, AybatFGO19, KulunchakovM19} consider doubling argument based approaches, where the epoch length is doubled everytime the stepsizes are halved. The step decay schedule is employed to yield faster rates of convergence under certain growth (and related) conditions both in convex~\citep{XuLY16} and non-convex settings~\citep{YangYYJ18,DavisDC19}.

{\bf Paper organization:} 
Section~\ref{sec:setup} describes notation and problem setup. Section~\ref{sec:comparison} presents our results on the sub-optimality of polynomial decay schemes and the near optimality of the step decay scheme. Section~\ref{sec:inf-horizon} presents results on the anytime behavior of SGD (i.e. the asymptotic/infinite horizon case). Section~\ref{sec:exp} presents experimental results and Section~\ref{sec:conc} presents conclusions.


\section{Problem Setup}\label{sec:setup}
\textbf{Notation}: We present the setup and associated notation in this section. We represent scalars with normal font $a, b, L$ etc., vectors with boldface lowercase characters $\a,\bb$ etc. and matrices with boldface uppercase characters $\A,\B$ etc. We represent positive semidefinite (PSD) ordering between two matrices using $\succeq$. The symbol $\gtrsim$ represents that the inequality holds for some universal constant. 

We consider here the minimization of the following expected square loss objective:
\begin{align}\label{eq:objectiveFn}
\min_{\w} f(\w) \, \textrm{ where } f(\w) \eqdef \tfrac{1}{2} \mathbb{E}_{(\x,y)\sim\mathcal{D}}[(y-\langle\w,\x\rangle)^2].
\end{align}
Note that the hessian of the objective $\Cov\eqdef\nabla^2 f(\w)=\E{\x\x^\top}$. We are provided access to stochastic gradients obtained by sampling a new example $(\x_t,y_t)\sim\mathcal{D}$. These examples satisfy:
\begin{align*}
y = \langle \ws,\x\rangle +\epsilon,
\end{align*}
where, $\epsilon$ is the noise on the example pair $(\x,y)\sim\mathcal{D}$ and $\ws$ is a minimizer of the objective $f(\w)$. Given an initial iterate $\w_0$ and stepsize sequence $\{\eta_t\}$, our stochastic gradient update is:
\begin{align}
\w_{t+1}\leftarrow\w_t-\eta_t\widehat{\nabla}f(\w_{t-1});\ \ \ \ \widehat{\nabla}f(\w_t) = -(y_t - \langle \w_t,\x_t\rangle)\cdot \x_t\label{eq:sgOracle}.
\end{align}
We assume that the noise $\epsilon = y-\langle \ws,\x\rangle\ \ \forall \ \ (\x,y)\sim\mathcal{D}$ satisfies the following condition:
\begin{align}\label{eq:varAtOpt}
\Sigma \eqdef \E{\widehat{\nabla}f(\ws) \widehat{\nabla}f(\ws)^\top} = \mathbb{E}_{(\x,y)\sim\mathcal{D}}[(y-\langle\ws,\x\rangle)^2\x\x^\top]\preceq\sigma^2\Cov.
\end{align}
Next, assume that covariates $\x$ satisfy the following fourth moment inequality:
\begin{align}\label{eq:fourthmoment}
\E{\norm{\x}^2 \x\x^\top} \preceq \infbound\ \Cov
\end{align}
This assumption is satisfied, say, when the norm of the covariates $\sup \norm{\x}^2 < \infbound$, but is true more generally. Finally, note that both~\ref{eq:varAtOpt} and~\ref{eq:fourthmoment} are general and are used in recent works~\citep{BachM13,jain2016parallelizing} that present a sharp analysis of SGD for streaming least squares problem. Next, we denote by
\begin{align*}
\mu \eqdef \lambda_{\textrm{min}}\left(\Cov\right), \quad
L \eqdef \lambda_{\textrm{max}}\left(\Cov\right), \mbox{  and },
\kappa \eqdef {\infbound}/{\mu}
\end{align*}
the smallest eigenvalue, largest eigenvalue and condition number of $\Cov$ respectively. $\mu > 0$ in the strongly convex case but not necessarily so in the non-strongly convex case (in section~\ref{sec:comparison} and beyond, the non-strongly case is referred to as the ``smooth'' case). 
Let $\ws \in \argmin_{\w \in \Rd} f(\w)$. The excess risk of an estimator $\w$ is $f(\w)-f(\ws)$. Given $t$ accesses to the stochastic gradient oracle in equation~\ref{eq:sgOracle}, any algorithm that uses these stochastic gradients and outputs $\widehat{\w}_t$ has sub-optimality that is lower bounded by $\frac{\sigma^2 d}{t}$. More concretely, we have that~\citep{van2000asymptotic}
\[
\lim_{t\rightarrow \infty} \frac{\E{f(\widehat{\w}_t)}-f(\ws)}{\sigma^2 d /t} \geq 1 \, .
\] The rate of $\left(1+o(1)\right)\cdot{\sigma^2 d}/{t}$ is achieved using iterate averaged SGD~\citep{Ruppert88,PolyakJ92} with constant stepsizes~\citep{BachM13,defossez15,jain2016parallelizing}. This rate of $\sigma^2 d /t$ is called the statistical minimax rate.

\section{Main results}\label{sec:comparison}

Sections~\ref{sec:poly},~\ref{sec:const-cut} consider the fixed time horizon setting; the former presents the significant sub-optimality of polynomially decaying stepsizes on SGD's final iterate behavior, the latter section presenting the near-optimality of SGD's final iterate. Section~\ref{sec:inf-horizon} presents negative results on SGD's final iterate behavior (irrespective of stepsizes employed), in the anytime (i.e. limiting) sense.

\subsection{Suboptimality of polynomial decay schemes}
\label{sec:poly}
This section begins by showing that there exist problem instances where polynomially decaying stepsizes considered stochastic approximation theory~\citep{RobbinsM51,PolyakJ92} i.e., those of the form $\frac{a}{b+t^{\alpha}}$, for any choice of $a,b>0$ and $\alpha\in[0.5,1]$ are significantly suboptimal (by a factor of the condition number of the problem, or by $\sqrt{T}$ in the smooth case) compared to the statistical minimax rate~\citep{KushnerClark}.
\begin{theorem}\label{thm:poly}
Under assumptions~\ref{eq:varAtOpt},~\ref{eq:fourthmoment}, there exists a class of problem instances where the following lower bounds on excess risk hold on SGD's final iterate with polynomially decaying stepsizes when given access to the oracle as written in equation~\ref{eq:sgOracle}.
  
\noindent\textbf{Strongly convex case}: Suppose $\mu > 0$. For any condition number $\kappa$, there exists a least squares problem instance with initial suboptimality $f(\w_0) - f(\ws) \le \sigma^2 d$ such that, for any $T \ge \kappa^{\frac{4}{3}}$, and for all $a,b\geq 0$ and $0.5 \leq \alpha \leq 1$, and for the learning rate scheme $\eta_t = \frac{a}{b+t^{\alpha}}$, we have
	\begin{align*}
		\E{f(\w_T)} -f(\ws) \geq \exp\left(-\frac{T}{\kappa\log T}\right)\left(f(\w_0)-f(\ws)\right) + \frac{\sigma^2 d}{{64}} \cdot \frac{\kappa}{T} .
	\end{align*}
	\textbf{Smooth case}: For any fixed $T> 1$, there exists a least squares problem instance 
	such that, for all $a,b\geq 0$ and $0.5 \leq \alpha \leq 1$, and for the learning rate scheme $\eta_t = \frac{a}{b+t^{\alpha}}$, we have
	\begin{align*}
	\E{f(\w_T)} -f(\ws) \geq \left( L \cdot \norm{\w_{0} - \ws}^2 + \sigma^2 d \right) \cdot \frac{1}{\sqrt{T} \log T}.
	\end{align*}
\end{theorem}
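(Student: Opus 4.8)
The plan is to exhibit one least-squares instance (with a worst-case initial point) on which SGD's final iterate is bad for \emph{every} schedule $\eta_t=a/(b+t^{\alpha})$, $a,b\ge 0$, $\alpha\in[\tfrac12,1]$. For the strongly convex part I would take $\Cov$ diagonal with two eigenvalues: a ``slow'' eigenvalue $\mu$ on a constant fraction of the coordinates and a ``fast'' eigenvalue $L=\Theta(R^2)=\Theta(\kappa\mu)$ on the rest; covariates are discrete and axis-aligned, $\x=c_i\ve_i$ with probability $q_i$ and $c_i^2=R^2$, $q_ic_i^2=\lambda_i$, so the coordinates decouple, and the label noise $\epsilon$ is independent with variance chosen so that $\E{\widehat\nabla f(\ws)\widehat\nabla f(\ws)^\top}=\sigma^2\Cov$, i.e.\ \eqref{eq:varAtOpt} is tight (and \eqref{eq:fourthmoment} tight with constant $R^2$). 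Place all of $f(\w_0)-f(\ws)=\sigma^2 d$ in the slow coordinates. Each coordinate then obeys the scalar recursion $u_{i,t+1}=(1-\eta_t\lambda_i(2-\eta_tR^2))u_{i,t}+\eta_t^2\lambda_i\sigma^2$ for $u_{i,t}=\E{(w_{i,t}-w_i^*)^2}$, and $\E{f(\w_T)}-f(\ws)=\tfrac12\sum_i\lambda_iu_{i,T}$; I split $u_{i,T}$ into a bias part (the additive $\eta_t^2\lambda_i\sigma^2$ terms dropped) and a variance part, both nonnegative, and bound each below.

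First, schedules with $\eta_tR^2>2$ are handled separately (they either blow up the final iterate, since the noise injected at a late over-large step appears uncontracted in $u_{i,T}$, or the offending steps are early enough that the fast coordinates re-equilibrate harmlessly), so assume $\eta_t\le 2/R^2$ throughout. The heart is then a dichotomy on the \emph{final} (hence smallest) stepsize $\eta_T$. If $\eta_T\gtrsim 1/(\mu T)$, each fast coordinate has already reached its stationary variance $\Theta(\eta_T\sigma^2)$: it mixes in $O(1/(\eta_TR^2))\le T/2$ steps, and since $a/(b+t^\alpha)$ with $\alpha\le1$ is slowly varying, all those stepsizes lie in $[\eta_T,2\eta_T]$ and their contraction factors back to $T$ are $\Theta(1)$; hence $\E{f(\w_T)}-f(\ws)\gtrsim dL\eta_T\sigma^2\gtrsim\sigma^2 dL/(\mu T)\asymp\sigma^2 d\kappa/T$, the second term. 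If instead $\eta_T$ is small, I use slow variation in the other direction: $\sum_{t\le T}\eta_t\le\mathrm{polylog}(T)\cdot T\eta_T$, so $\mu\sum_t\eta_t$ is small; since each step shrinks the slow-coordinate bias by at most a factor $1-\eta_t\mu(2-\eta_tR^2)\ge 1-2\eta_t\mu$, that bias is contracted by at most $\exp(-2\mu\sum_t\eta_t)$, which for $\eta_T$ below the threshold is $\ge\exp(-T/(\kappa\log T))$ --- this is exactly where $T\ge\kappa^{4/3}$ and the precise exponent get used --- yielding the first term; and for $\eta_T$ even smaller the bias is essentially uncontracted, $\gtrsim\sigma^2 d$, which dominates the second term since $T\ge\kappa$.

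The smooth case runs on the same template, but the ``slow'' eigenvalue $\mu$ is replaced by a small positive eigenvalue $\lambda_A$ \emph{tuned to the fixed horizon}, $\lambda_A\asymp L/\sqrt T$, so the two eigenvalues are in ratio $\sqrt T$. Again dichotomize on $\eta_T$: if $\eta_T\gtrsim 1/(\lambda_A T\log T)$ (the least that lets the $\lambda_A$-coordinate bias move at all), the $\lambda_B=L$ coordinates sit at variance $\gtrsim L\eta_T\sigma^2\gtrsim\sigma^2 L/(\lambda_A T\log T)=\sigma^2/(\sqrt T\log T)$; if $\eta_T$ is smaller the $\lambda_A$-coordinate bias survives and is $\gtrsim f(\w_0)-f(\ws)\asymp L\|\w_0-\ws\|^2/\sqrt T$. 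Replicating into $d$ coordinates gives $\gtrsim\big(L\|\w_0-\ws\|^2+\sigma^2 d\big)/(\sqrt T\log T)$.

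The part I expect to be most delicate is the bookkeeping that turns the two one-sided bounds into the stated sum with the stated constant ($\tfrac1{64}$) and threshold ($T\ge\kappa^{4/3}$, the $\log T$ inside the exponent). This needs (i) uniform slow-variation estimates for $\eta_t=a/(b+t^\alpha)$ over all $a,b\ge0$, $\alpha\in[\tfrac12,1]$ --- the corner $\alpha\uparrow1$ is what forces the $\log$ factors --- relating $\sum_t\eta_t$, $\eta_T$, and $\prod_t(1-\eta_t\mu(2-\eta_tR^2))$; and (ii) a careful treatment of the fast-coordinate variance recursion when early stepsizes far exceed $\eta_T$: one must check that mass injected long ago has been contracted away by time $T$ (the fast coordinate mixes in $\ll T$ steps) so that the final-iterate variance is governed only by the recent, $\Theta(\eta_T)$-scale stepsizes. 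Once these are in hand, splitting on which of the two target terms is larger and absorbing constants completes the argument.
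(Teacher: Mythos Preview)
Your plan is correct and tracks the paper's argument closely: the paper also builds a diagonal two-eigenvalue instance (fast eigenvalue $\Theta(\kappa)$, slow eigenvalue $1$; in the smooth case the ratio is $\sqrt T$), decouples coordinates, and runs exactly the dichotomy you describe on $\eta_T$ --- large $\eta_T$ forces large stationary variance in the fast direction, small $\eta_T$ leaves the slow-direction bias essentially intact.

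Where the paper is slicker than your plan:

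\begin{itemize}
\item \emph{Reduction to additive noise.} Rather than work directly in the multiplicative oracle, the paper first proves the lower bound for the additive-noise oracle $\widehat{\nabla f}(\w)=\Cov(\w-\ws)+\zeta$ with $\E{\zeta\zeta^\top}=\sigma^2\Cov$ (where coordinates decouple exactly and the scalar recursion is $v^{(k)}_{t+1}=(1-\eta_t\lambda_k)^2 v^{(k)}_t+\eta_t^2\lambda_k\sigma^2$), and then lifts to the multiplicative oracle via a one-line PSD inequality (Lemma~\ref{lem:lbreduction}). This removes the $\eta_t R^2$ cross-term from your recursion and simplifies the bookkeeping.

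\item \emph{Stationary-point monotonicity instead of mixing.} For the fast direction the paper does \emph{not} argue ``it mixes in $O(1/(\eta_T L))$ steps and recent stepsizes lie in $[\eta_T,2\eta_T]$.'' Instead it observes that the scalar recursion at stepsize $\eta$ has fixed point $s(\eta,\lambda)=\lambda\sigma^2\eta^2/\bigl(1-(1-\eta\lambda)^2\bigr)\approx \sigma^2\eta/2$, and proves (Claim~\ref{clm1}) that once $s(\eta_t,\lambda)$ drops below the initial variance $v^{(i)}_0$, one has $v^{(i)}_t\ge s(\eta_t,\lambda)$ thereafter. This immediately gives $v^{(1)}_T\ge s(\eta_T,\kappa)\gtrsim \sigma^2\eta_T$ with no mixing or slow-variation estimate, and it automatically handles your ``oversized early stepsizes'' worry, since when $(1-\eta_t\lambda)^2\ge 1$ the variance only grows. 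The price is that the paper initializes the fast coordinates with nonzero variance ($v^{(1)}_0=3\sigma^2/\kappa$) rather than putting all the initial error in the slow coordinates as you propose.

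\item \emph{Uniform control over $\alpha$.} The corner $\alpha\uparrow 1$ you flag is dispatched by one inequality: for $i\le T$, $T^{\alpha-1}i^{-\alpha}\le i^{-1}$. In the case $b<T^\alpha$ and $\eta_T\le c/T$ this gives $a\le 2c\,T^{\alpha-1}$ and hence $\sum_{i\le T}\eta_i\le \sum_{i\le T}a/i^\alpha\le 2c\sum_{i\le T}i^{-1}\lesssim \log T$, uniformly in $\alpha\in[\tfrac12,1]$; the case $b\ge T^\alpha$ is essentially constant stepsize $\le c/T$ and is easier. This yields surviving slow-direction bias $\gtrsim \sigma^2/T^{1/4}$, and comparing $d\sigma^2/T^{1/4}$ to $\kappa d\sigma^2/T$ is precisely where $T\ge\kappa^{4/3}$ enters.
\end{itemize}

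So your route is sound, but the stationary-point lemma and the $T^{\alpha-1}i^{-\alpha}\le i^{-1}$ trick let you skip both of the ``delicate'' pieces you anticipated.
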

For both cases (with/without strong convexity), the minimax rate is $\sigma^2 d/T$. In the strongly convex case, SGD's final iterate with polynomially decaying stepsizes pays a suboptimality factor of $\Omega(\kappa)$, whereas, in the smooth case, SGD's final iterate pays a suboptimality factor of $\Omega\left(\frac{\sqrt{T}}{\log T}\right)$.

\subsection{Near optimality of Step Decay schemes}\label{sec:const-cut}
Given the knowledge of an end time $T$ when the algorithm is terminated, this section presents the step decay schedule (Algorithm~\ref{algo:StepDecay}), which offers significant improvements over standard polynomially decaying stepsize schemes, and obtains near minimax rates (off by only a $\log(T)$ factor).
\begin{theorem}\label{thm:const-cut}
Suppose we are given access to the stochastic gradient oracle~\ref{eq:sgOracle} satisfying Assumptions~\ref{eq:varAtOpt} and~\ref{eq:fourthmoment}. Running Algorithm~\ref{algo:StepDecay} with an initial stepsize of $\eta_1=1/(2\infbound)$ allows the algorithm to achieve the following excess risk guarantees.
\begin{itemize}
\item \textbf{Strongly convex case}:  Suppose $\mu > 0$. We have:
  \[
    \E{f(\w_T)} -f(\ws) \leq
    2\cdot\exp\left(-\frac{T}{2\kappa\log
        T\log\kappa}\right)\left(f(\w_0)-f(\ws)\right) + 4
    \sigma^2 d \cdot \frac{\log T}{T} \, . \]
\item \textbf{Smooth case}: We have:
  \[\E{f(\w_T)} -f(\ws) \leq 2\cdot\left(\infbound d \cdot \norm{\w_0-\ws}^2 +
      2{\sigma^2 d} \right) \cdot \frac{\log T}{T} \, \]
\end{itemize}
\end{theorem}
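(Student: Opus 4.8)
The plan is to reduce the analysis to that of constant‑stepsize SGD run for a single epoch and then chain the resulting bounds across the $\log T$ epochs of Algorithm~\ref{algo:StepDecay}. Writing $\vtheta_t \eqdef \w_t - \ws$, the least‑squares update is the affine recursion $\vtheta_{t+1} = (\Id - \eta_t\x_t\x_t^\top)\vtheta_t + \eta_t\epsilon_t\x_t$ with $\E{\epsilon_t\x_t} = -\,\E{\widehat{\nabla}f(\ws)} = -\nabla f(\ws) = 0$, so by linearity we split $\vtheta_t = \vtheta_t^{\mathrm{b}} + \vtheta_t^{\mathrm{v}}$, where $\vtheta_t^{\mathrm{b}}$ follows the noise‑free recursion from $\vtheta_0$ and $\vtheta_t^{\mathrm{v}}$ the noisy recursion from $0$ (both driven by the same $\{\x_t\}$). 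Since $f(\w_t)-f(\ws) = \tfrac12\Hnorm{\vtheta_t}^2$, it suffices to bound $\E{\Hnorm{\vtheta_T^{\mathrm{b}}}^2}$ (bias) and $\E{\Hnorm{\vtheta_T^{\mathrm{v}}}^2}$ (variance) separately and add. The building block, standard in the operator view of least‑squares SGD \citep{BachM13,jain2016parallelizing}, is: for any constant stepsize $\eta \le 1/(2\infbound)$, the second‑moment operator $\mathbf{\Phi}_t \eqdef \E{\vtheta_t^{\mathrm{v}}(\vtheta_t^{\mathrm{v}})^\top}$ obeys $\mathbf{\Phi}_{t+1} = \mathbf{\Phi}_t - \eta(\Cov\mathbf{\Phi}_t + \mathbf{\Phi}_t\Cov) + \eta^2\E{\x\x^\top\mathbf{\Phi}_t\x\x^\top} + \eta^2\Sigma$, while the plain $\ell_2$ norm of the bias contracts, $\E{\twonorm{\vtheta_{t+1}^{\mathrm{b}}}^2} \le (1-\eta\mu)\E{\twonorm{\vtheta_t^{\mathrm{b}}}^2}$ (using $\E{\norm{\x}^2\x\x^\top}\preceq\infbound\Cov$ and $\eta\infbound\le1$).

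\textbf{Variance.} Unrolling the operator recursion over the whole run gives $\E{\Hnorm{\vtheta_T^{\mathrm{v}}}^2} = \Tr\big(\Cov\sum_{t<T}\eta_t^2\,\mathcal{B}_{t+1:T}(\Sigma)\big)$, where $\mathcal{B}_{t+1:T}$ is the composition of the per‑step contraction maps for steps $t+1,\dots,T$; using $\Sigma\preceq\sigma^2\Cov$ reduces matters to how much of a unit of $\Cov$‑mass injected during epoch $j$ (stepsize $\eta_j=\eta_0 2^{-j}$, length $m\eqdef T/\log T$) survives to the end. The crucial point is that the \emph{geometric} schedule guarantees that every eigendirection of $\Cov$ with eigenvalue $\lambda$ has a matched epoch $j^\star(\lambda)$ with $m\eta_{j^\star}\lambda\asymp1$: in that epoch the $\Cov$‑contribution of the injected noise behaves like $\lambda\cdot(\eta_{j^\star}\sigma^2)\asymp\sigma^2/m$ (precisely the behaviour of $m$‑step iterate averaging at the right stepsize), whereas earlier epochs inject mass that the many later contractions kill and later epochs inject only an $\order{m\eta_j^2\lambda^2\sigma^2}$ amount; summing the geometric‑in‑$j$ series leaves $\order{\sigma^2/m}$ per direction, and adding over the (at most $d$) directions yields $\E{\Hnorm{\vtheta_T^{\mathrm{v}}}^2} = \order{\sigma^2 d/m} = \order{\sigma^2 d\log T/T}$ — the variance term in both cases. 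Making this rigorous means carrying the fourth‑moment term $\eta^2\E{\x\x^\top\mathbf{\Phi}\x\x^\top}$ through the operator products without isotropy, e.g.\ via $\E{\x\x^\top\mathbf{M}\x\x^\top}\preceq\infbound\norm{\mathbf{M}}_{\mathrm{op}}\Cov$, while carefully accounting for the resulting cross‑direction coupling.

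\textbf{Bias.} In the smooth case the $\Cov$‑norm of the noise‑free iterate is controlled by a polynomial‑decay estimate, $\E{\Hnorm{\vtheta_T^{\mathrm{b}}}^2} \lesssim \big(\max_{\lambda\le L}\lambda\prod_{t<T}(1-\eta_t\lambda)^2 + \text{(fourth-moment correction)}\big)\twonorm{\vtheta_0}^2 \lesssim \tfrac{1}{\sum_{t<T}\eta_t}\cdot\infbound d\,\twonorm{\vtheta_0}^2$, and since $\sum_{t<T}\eta_t = m\sum_{\ell=1}^{\log T}\eta_0 2^{-\ell} = \Theta(m\eta_0) = \Theta\!\big(\tfrac{T}{\infbound\log T}\big)$, this is $\order{\infbound d\,\twonorm{\vtheta_0}^2\log T/T}$, matching the theorem (the extra factor $d$ over the averaged‑iterate rate of \citep{BachM13} is exactly the fourth‑moment correction). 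In the strongly convex case, chaining the per‑step $\ell_2$ contraction over all $T$ steps gives $\E{\twonorm{\vtheta_T^{\mathrm{b}}}^2} \le \prod_{t<T}(1-\eta_t\mu)\twonorm{\vtheta_0}^2 \le \exp\!\big(-\mu\sum_{t<T}\eta_t\big)\twonorm{\vtheta_0}^2 \le \exp\!\big(-\tfrac{T}{2\kappa\log T}\big)\twonorm{\vtheta_0}^2$; converting back to $\Cov$‑norm at the endpoint and from $\Cov$‑norm at the start (each costing a factor $L/\mu\le\kappa$, since at the aggressive stepsize $\eta_1=1/(2\infbound)$ the $\Cov$‑norm itself is not contractive under the fourth‑moment assumption alone) absorbs an $\order{\log\kappa}$ slice of the exponent, leaving $2\exp(-T/(2\kappa\log T\log\kappa))(f(\w_0)-f(\ws))$.

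\textbf{Main obstacle.} I expect the variance bound to be the crux: turning the ``every eigendirection sees one well‑tuned epoch'' heuristic into a clean $\order{\sigma^2 d\log T/T}$ estimate requires controlling the operator products $\mathcal{B}_{t+1:T}$ acting on $\Sigma$ uniformly over the spectrum of $\Cov$ while only assuming $\E{\norm{\x}^2\x\x^\top}\preceq\infbound\Cov$ (rather than a stronger, direction‑decoupling fourth‑moment bound), and then summing a geometric‑in‑epoch series whose terms change regime at $\eta_j\lambda\asymp1/m$. A secondary difficulty is the norm‑transfer step in the strongly convex bias, which is what forces the extra $\log\kappa$ factor in the exponent.
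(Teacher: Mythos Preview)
Your plan follows the same architecture as the paper's proof: bias--variance split, a per-epoch/per-eigendirection variance analysis hinging on the ``matched epoch'' $j^\star(\lambda)$, and $\ell_2$ contraction plus norm transfer for the strongly convex bias. The intuition and the final bookkeeping you sketch for the variance (the geometric series over epochs with a regime change at $\eta_j\lambda\asymp 1/m$) are exactly what the paper carries out coordinate-by-coordinate, splitting the epoch sum at $\ell^*=\max\big(0,\lfloor\log(\lambda^{(k)} T/(\infbound\log T))\rfloor\big)$.

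Where you diverge from the paper is precisely at what you flag as the ``main obstacle''. You propose to push the fourth-moment term $\eta^2\E{\x\x^\top\mathbf{\Phi}\x\x^\top}$ through the operator products directly and track cross-direction coupling. The paper sidesteps this entirely with a two-step reduction: first it shows the variance covariance is uniformly bounded in operator norm, $V_t\preceq(\sigma^2/\infbound)\Id$ for all $t$ whenever $\eta_t\le 1/(2\infbound)$ (a one-line induction using only $\E{\norm{\x}^2\x\x^\top}\preceq\infbound\Cov$); then, plugging $\norm{V_t}\le\sigma^2/\infbound$ into $\E{\x\x^\top V_t\x\x^\top}\preceq\infbound\norm{V_t}\Cov$ collapses the multiplicative recursion to the additive one $V_{t+1}\preceq(\Id-\eta_t\Cov)V_t(\Id-\eta_t\Cov)+2\eta_t^2\sigma^2\Cov$. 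At that point the eigendirections decouple exactly and the per-coordinate $\ell^*$ computation is elementary. So the obstacle you anticipate does not actually need to be confronted.

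For the smooth-case bias you propose a direct $\max_\lambda\lambda\prod_t(1-\eta_t\lambda)^2\lesssim 1/\sum_t\eta_t$ estimate plus a fourth-moment correction. The paper instead runs the \emph{same} reduction on the bias: since $\norm{B_t}\le\E{\twonorm{\vtheta_t^{\mathrm{b}}}^2}\le\twonorm{\vtheta_0}^2$ (non-increasing for all $\eta_t\le1/\infbound$), the bias covariance satisfies $B_t\preceq(\Id-\eta_t\Cov)B_{t-1}(\Id-\eta_t\Cov)+\eta_t^2\infbound\twonorm{\vtheta_0}^2\Cov$, i.e.\ the identical additive recursion with $\infbound\twonorm{\vtheta_0}^2$ playing the role of $\sigma^2$. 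The variance analysis is then reused verbatim, which is where the factor $\infbound d\,\twonorm{\vtheta_0}^2$ in the theorem comes from (and why it carries the extra $d$ relative to averaged-iterate results). Your route would also work but requires a separate argument; the paper's buys uniformity of treatment.

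Your strongly convex bias handling matches the paper's: chain the per-step $\ell_2$ contraction $(1-\eta_t\mu)$ over all $T$ steps and pay a condition-number factor to convert between $\twonorm{\cdot}$ and $\Hnorm{\cdot}$; the $\log\kappa$ in the exponent is exactly the cost of absorbing that factor, as you say.
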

While theorem~\ref{thm:const-cut} presents significant improvements over polynomial decay schemes, as mentioned in the contributions, the above result presents a worse rate on the initial error (by a dimension factor) in the smooth case (i.e. non-strongly convex case), compared to the best known result~\citep{BachM13}, which relies heavily on iterate averaging to remove this factor. It is an open question with regards to whether this factor can actually be improved or not. Furthermore, comparing the initial error dependence between the lower bound for the smooth case (Theorem~\ref{thm:poly}) with the upper bound for the step decay scheme, we believe that the dependence on the smoothness $L$ should be improved to one on the $\infbound$. 

In terms of the variance, however, note that the polynomial decay schemes, are plagued by a polynomial dependence on the condition number $\kappa$ (for the strongly convex case), and are off the minimax rate by a $\sqrt{T}$ factor (for the smooth case). The step decay schedule, on the other hand, is off the minimax rate~\citep{Ruppert88,PolyakJ92,van2000asymptotic} by only a $\log(T)$ factor. It is worth noting that Algorithm~\ref{algo:StepDecay} admits an efficient implementation in that it requires the knowledge only of $\infbound$ (similar to iterate averaging results~\citep{BachM13,jain2016parallelizing}) and the end time $T$. Finally, note that this $\log T$ factor can be improved to a $\log \kappa$ factor for the strongly convex case by using an additional polynomial decay scheme before switching to the Step Decay scheme. 
\begin{proposition}\label{thm:compl}
Suppose we have access to the stochastic gradient oracle~\ref{eq:sgOracle} satisfying Assumptions~\ref{eq:varAtOpt} and~\ref{eq:fourthmoment}. Let $\mu>0$ and $\kappa \ge 2$. For any problem and fixed time horizon $T/\log T > 5 \kappa$, there exists a learning rate scheme that achieves
	\begin{align*}
	\E{f(\w_T)} - f(\ws) \leq 2\exp(-T/(6\kappa\log\kappa))\cdot(f(\w_0)-f(\ws)) + {100\log_2 \kappa}{} \cdot \frac{\sigma^2 d}{T}.
	\end{align*}
\end{proposition}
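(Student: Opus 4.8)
The plan is to run the SGD procedure in two phases. In the first phase, I would use a polynomially decaying stepsize scheme (more precisely, a scheme of the form $\eta_t \propto 1/(b+t)$ with an appropriately chosen constant $b$, i.e.\ the standard $1/t$-type decay tuned to the strong convexity parameter $\mu$) for a relatively short prefix of the iterations — of order $\kappa \log \kappa$ iterations. The role of this first phase is purely to reduce the initial excess risk (the bias term) from $f(\w_0)-f(\ws)$ down to something comparable to the noise floor $\sigma^2 d / T$; classical analysis of SGD for least squares with a $1/t$ stepsize gives a bias contraction of the form $\exp(-T/(c\kappa\log\kappa))$ after this many steps, accounting for the $\log T$ (budget) factor absorbed in the $T/\log T > 5\kappa$ hypothesis, while the variance it introduces during this short phase is only $O(\log \kappa) \cdot \sigma^2 d / T$ because the phase length is $O(\kappa \log \kappa)$ and each step of the $1/t$ scheme contributes $O(\sigma^2 d / t)$ to the variance (summing a harmonic-type series over $\kappa\log\kappa$ terms gives the $\log\kappa$). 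The remaining (majority of the) iterations are then handed over to the Step Decay scheme (Algorithm \ref{algo:StepDecay}), started from the iterate produced by phase one.

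The second phase analysis is essentially an invocation of Theorem \ref{thm:const-cut} (strongly convex case), applied with the time horizon being the remaining budget (still $\Theta(T)$) and with the initial point $\w_0$ replaced by the end-of-phase-one iterate $\w'$. Theorem \ref{thm:const-cut} gives
\[
\E{f(\w_T)} - f(\ws) \le 2\exp\!\left(-\frac{T}{2\kappa \log T \log \kappa}\right)\left(\E{f(\w')}-f(\ws)\right) + 4\sigma^2 d\cdot\frac{\log T}{T}.
\]
The key point is that $\E{f(\w')}-f(\ws)$ is already $O(\sigma^2 d /T)$ plus an exponentially-contracted multiple of $f(\w_0)-f(\ws)$, so the leading bias term of Theorem \ref{thm:const-cut} no longer multiplies the large quantity $f(\w_0)-f(\ws)$; instead the overall bias term is $\exp(-\Theta(T/(\kappa\log\kappa)))\cdot(f(\w_0)-f(\ws))$, matching the statement. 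Meanwhile the variance from phase two would naively be $O(\log T)\cdot \sigma^2 d / T$, which is \emph{worse} than the claimed $O(\log \kappa)\cdot \sigma^2 d/T$ — so the actual trick must be that once the iterate is already at the noise floor, one does not need the full logarithmic number of geometric cuts. I would therefore replace the generic Step Decay in phase two by a \emph{truncated} Step Decay: only $O(\log \kappa)$ levels of halving are needed (rather than $\log T$), because after $O(\log\kappa)$ halvings the stepsize is already $\Theta(1/(\kappa \cdot \text{something}))$ and the per-level variance has stabilized; one then runs the final level for the rest of the budget. The per-level variance telescopes to $O(\log\kappa)\cdot \sigma^2 d/T$, which is where the improved factor comes from.

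The main obstacle I anticipate is making the bookkeeping between the two phases tight — specifically, ensuring that the first (polynomial) phase genuinely drives the bias below the noise floor using only $O(\kappa\log\kappa)$ iterations while incurring only $O(\log\kappa)\cdot\sigma^2 d/T$ of variance, and then arguing carefully that the truncated Step Decay in the second phase does not reintroduce a $\log T$ factor. Concretely, I would need a clean lemma stating: for the $1/t$-type SGD on least squares, after $n$ steps, $\E{f(\w_n)}-f(\ws) \le \exp(-cn/\kappa)(f(\w_0)-f(\ws)) + C(\sigma^2 d/n)$ (the standard bias–variance decomposition from bias and noise recursions, as in \citet{jain2016parallelizing}), choosing $n \asymp \kappa \log(T\cdot(f(\w_0)-f(\ws))/(\sigma^2 d))$ so that the bias drops below $\sigma^2 d/T$; here the hypothesis $T/\log T > 5\kappa$ guarantees enough room. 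After that, the geometric phase is standard given Theorem \ref{thm:const-cut}'s machinery, except that one re-derives its conclusion with the number of epochs capped at $\log_2\kappa$ rather than $\log_2 T$, which accounts precisely for the $100\log_2\kappa$ constant in the statement.
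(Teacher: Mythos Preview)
Your proposal captures the single most important idea --- that the $\log T$ factor in Theorem~\ref{thm:const-cut} can be replaced by $\log\kappa$ by running only $\log_2\kappa$ geometric halvings rather than $\log_2 T$ --- but the scaffolding you build around it differs from the paper's and contains a real gap in the bias argument.

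The paper does not use a two-phase (polynomial $\to$ truncated step decay) scheme. It uses \emph{three} phases of equal length $T/3$: (i) a constant stepsize $1/\infbound$, (ii) a polynomial decay $\eta_{A+t}=1/(\mu(\kappa+t/2))$, and (iii) a step-decay with $\log_2\kappa$ sub-phases of length $T/(3\log_2\kappa)$ and stepsizes $\eta_\ell=5\log_2\kappa/(2^\ell\mu T)$. The division of labor is different from yours: the constant phase kills the bias (giving per-coordinate decay $(1-\lambda^{(k)}/\infbound)^{2T/3}\le 1/T^3$), the polynomial phase brings each coordinate's variance down to $O(\sigma^2/(\mu T))$ uniformly, and only then does the truncated step-decay refine the variance coordinate-wise to $O(\log\kappa\cdot\sigma^2/(\lambda^{(k)}T))$. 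The middle polynomial phase is not there to handle bias at all; it is there to hand the step-decay phase a starting variance that is already $O(1/(\mu T))$, so that $\log\kappa$ halvings suffice.

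The gap in your version is the bias claim. You assert that a prefix of $O(\kappa\log\kappa)$ iterations of a $1/t$ scheme yields bias contraction $\exp(-T/(c\kappa\log\kappa))$; but $n=\Theta(\kappa\log\kappa)$ steps with per-step contraction $\exp(-c/\kappa)$ only gives $\kappa^{-\Theta(1)}$, not a factor depending on $T$. To match the stated bound $\exp(-T/(6\kappa\log\kappa))$ you need $\Theta(T/\log\kappa)$ steps at stepsize $\Theta(1/\infbound)$ (or equivalent), which is exactly what the paper's first (constant) phase supplies. Your later correction to $n\asymp\kappa\log(T\cdot(f(\w_0)-f(\ws))/(\sigma^2 d))$ would drive the bias below $\sigma^2 d/T$, but that still does not produce the multiplicative $\exp(-T/(6\kappa\log\kappa))$ coefficient on $f(\w_0)-f(\ws)$ that the proposition states; it instead buries the bias inside the additive term, which is a different (and weaker, since it depends on the unknown ratio $(f(\w_0)-f(\ws))/\sigma^2 d$) statement. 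Likewise, your phase-1 variance accounting (``summing a harmonic series over $\kappa\log\kappa$ terms gives $\log\kappa$'') conflates the per-step noise injection with the endpoint variance; the endpoint variance of $1/t$-SGD after $n$ steps is $\Theta(\sigma^2 d/n)$, not a harmonic sum. None of this is fatal to the overall idea, but to make a two-phase version rigorous you would need the first phase to be a constant-stepsize run of length $\Theta(T)$ (not $\Theta(\kappa\log\kappa)$) --- at which point you have essentially rediscovered phases (i)+(ii) of the paper's construction.
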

In order to have improved the dependence on the variance from $\log(T)$ (in theorem~\ref{thm:const-cut}) to $\log(\kappa)$ (in proposition~\ref{thm:compl}), we require access to the strong convexity parameter $\mu=\lamminH$ in addition to $\infbound$ and knowledge of the end time $T$. This parallels results known for general strongly convex setting~\citep{RakhlinSS12,JulienSB12,ShamirZ12,Bubeck14,JainNN19}.

As a final remark, note that this section's results (on step decay schemes) assumed the knowledge of a fixed time horizon $T$. In contrast, most results SGD's averaged iterate obtain anytime (i.e., limiting/infinite horizon) guarantees. Can we hope to achieve such guarantees with the final iterate?

\subsection{SGD queries bad points infinitely often}\label{sec:inf-horizon}
This section shows that obtaining near minimax rates with the {\em final} iterate is not possible without knowledge of the time horizon $T$. Concretely, we show that irrespective of the learning rates employed (be it polynomially decaying or step-decay), SGD {\em requires} to query a point with sub-optimality $\Omega(\kappa/\log \kappa) \cdot \sigma^2 d/T$ for infinitely many time steps $T$. 
\begin{theorem}\label{thm:limsup}
Suppose we have access to a stochastic gradient oracle~\ref{eq:sgOracle} satisfying Assumption~\ref{eq:varAtOpt},~\ref{eq:fourthmoment}. There exists a universal constant $C > 0$, and a problem instance such that SGD algorithm with any $\eta_t \le 1/2\infbound$ for all $t$\footnote{Learning rate more than $2/\infbound$ will make the algorithm diverge.}, we have
\begin{align*}
  \limsup_{T\rightarrow \infty} \frac{\E{f(\w_T)}-f(\ws)}{\left(\sigma^2 d/T\right)} \geq C \frac{\kappa }{\log (\kappa+1)}
  \, .
\end{align*}
\end{theorem}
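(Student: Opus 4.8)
The plan is to exhibit one fixed two-dimensional least squares instance and then argue, for an \emph{arbitrary} step-size sequence $\{\eta_t\}$ with $\eta_t\le 1/(2\infbound)$, by a dichotomy. Take $\Cov=\mathrm{diag}(L,\mu)$ with $L/\mu=\kappa/2$; let the covariate $\x$ equal $\sqrt{2L}\,\mathbf{e}_1$ or $\sqrt{2\mu}\,\mathbf{e}_2$, each with probability $1/2$ (so $\infbound=2L$, Assumption~\eqref{eq:fourthmoment} holds with equality, and the condition number is $\kappa$); take $\ws=\vzero$ and $\epsilon$ independent of $\x$ with $\E{\epsilon^2}=\sigma^2$ (so~\eqref{eq:varAtOpt} holds with equality); and initialize $\w_0$ so that $L(\w_0)_1^2$ is negligible while $\tfrac12\mu(\w_0)_2^2=1$, an absolute constant. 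With $\theta_t=\w_t-\ws$, each update touches only the sampled coordinate, so $\theta_t^{(i)}$ obeys a product-of-contractions recursion driven at a $\mathrm{Bernoulli}(1/2)$ random set of times; this makes the bias and the martingale parts of $\E{(\theta_T^{(i)})^2}$ — and hence the two coordinate contributions $\tfrac12 L\,\E{(\theta_T^{(1)})^2}$ and $\tfrac12\mu\,\E{(\theta_T^{(2)})^2}$ to the excess risk — completely explicit. (This is in the spirit of the lower-bound construction behind Theorem~\ref{thm:poly}, but now the schedule is adversarial.)

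First I would establish two one-sided estimates. For a horizon $T$, let $W(T)$ be the largest $w$ with $\sum_{s=T-w+1}^T\eta_s\le 1/L$. Keeping in $\theta_T^{(1)}$ only the martingale terms from coordinate-$1$ updates landing inside this window, lower-bounding each surviving product of factors $1-2L\eta_t$ by $e^{-O(1)}$ (the window carries step-size mass $\le 1/L$ and $2L\eta_t\le 1/2$), using that coordinate $1$ is sampled with probability $1/2$, and combining $\sum_{s\in W}\eta_s^2\ge(\sum_{s\in W}\eta_s)^2/|W|$ with $\sum_{s\in W}\eta_s\ge 1/L-\eta_{\max}\ge 3/(4L)$ (window maximality, valid whenever $W(T)<T$) yields a universal $c>0$ with
\[
\E{f(\w_T)}-f(\ws)\ \ge\ \tfrac12 L\,\E{(\theta_T^{(1)})^2}\ \ge\ \frac{c\,\sigma^2}{W(T)}.
\]
For the soft direction, keeping only the bias part of $\theta_T^{(2)}$ and averaging the $\mathrm{Bernoulli}(1/2)$ factors gives, with $S_T:=\sum_{s\le T}\eta_s$,
\[
\E{f(\w_T)}-f(\ws)\ \ge\ \tfrac12\mu(\w_0)_2^2\,e^{-3\mu S_T}\ =\ e^{-3\mu S_T}.
\]

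Now fix $\beta$ with $\beta\asymp\log(\kappa+1)/\kappa$ (in particular $\beta<1$) and split: either $(\mathrm{i})$ $W(T)\le\beta T$ for infinitely many $T$, or $(\mathrm{ii})$ $W(T)>\beta T$ for all large $T$. In case $(\mathrm{i})$, the first estimate gives $\E{f(\w_T)}-f(\ws)\gtrsim \sigma^2/(\beta T)\asymp \tfrac{\kappa}{\log(\kappa+1)}\cdot\tfrac{\sigma^2 d}{T}$ along that subsequence (with $d=2$), which is the claimed $\limsup$ bound. In case $(\mathrm{ii})$, applying the defining inequality of $W$ at the geometrically spaced horizons $T,(1-\beta)T,(1-\beta)^2T,\dots$ exhibits $\{1,\dots,T\}$ as a union of $\approx\beta^{-1}\ln T$ blocks, each of step-size mass $\le 1/L$, so $S_T\lesssim \beta^{-1}\ln T/L\asymp \tfrac{\kappa\ln T}{L\log(\kappa+1)}$ and hence $\mu S_T=O(\ln T/\log\kappa)$; plugging this into the soft-direction estimate gives $\E{f(\w_T)}-f(\ws)\gtrsim T^{-O(1/\log\kappa)}$, which for large $T$ dwarfs $\tfrac{\kappa}{\log(\kappa+1)}\cdot\tfrac{\sigma^2 d}{T}$, so the $\limsup$ is in fact $+\infty$. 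The two cases are exhaustive, the instance and the constants $d=2$, $f(\w_0)-f(\ws)=1$ are absolute, and every inequality carries a universal constant, so one collects a universal $C>0$ as stated.

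I expect the crux to be case $(\mathrm{ii})$: converting ``the recent step-size window is never short'' into ``the cumulative budget $S_T$ grows no faster than $\kappa\ln T/\log\kappa$'', which is exactly slow enough to pin the bias above the minimax rate. This is the quantitative face of the bias--variance tension behind the theorem — decaying the bias demands a large cumulative step-size budget, while keeping the variance small demands that the budget not be concentrated recently — and the $\log(\kappa+1)$ factor is intrinsic to the trade-off, not slack in the argument. A secondary care point is the first estimate: handling the randomness of which coordinate is updated while losing only constants, and invoking the variance bound only when $W(T)<T$ (automatic in case $(\mathrm{i})$ since $\beta<1$).
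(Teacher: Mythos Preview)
Your proposal is correct and takes a genuinely different route from the paper. The paper first reduces to the additive-noise oracle (via the PSD domination of Lemma~\ref{lem:lbreduction}), then for any starting time $T$ builds $\kappa$ consecutive blocks $[T_{i-1}+1,T_i]$, each carrying stepsize mass $\approx 1/\kappa$; assuming none of the endpoints $T_i$ is bad, Cauchy--Schwarz on the large-eigenvalue variance in each block forces $T_{i-1}\lesssim\tau\,\Delta_i$, and iterating these relations gives $T_\kappa\gtrsim T_1\exp(\kappa/\tau)$, whereupon the small-eigenvalue \emph{variance} injected in the first block (which survives the remaining $\kappa-1$ blocks with only $e^{-O(1)}$ decay, since the total mass is $\le 3$) makes $T_\kappa$ bad after all. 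Your argument instead runs a clean dichotomy on the recent-stepsize window $W(T)$: if it is short infinitely often you win on large-eigenvalue variance directly; if it is always long you geometrically cover $[T_0,T]$ by $O(\beta^{-1}\log T)$ windows of mass $\le 1/L$, bound $S_T$, and win on the small-eigenvalue \emph{bias}. The paper's block construction has the advantage of localizing the bad time inside $[T,T_\kappa]$, which directly feeds the finer Theorem~\ref{thm:badfraction}; your dichotomy does not immediately give that, but it works directly in the multiplicative-noise oracle (no reduction lemma) and makes the bias--variance split more transparent.

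One technical point to tighten when you write it out: your two constraints on $\beta$ --- that $\beta<1$ (so that $W(T)\le\beta T$ forces $W(T)<T$, which you need for the window-maximality lower bound on $\sum_{s\in W}\eta_s$) and that the exponent $O(1)/(c_0\log(\kappa+1))$ in case~(ii) be strictly below $1$ --- are simultaneously satisfiable only once $\kappa$ exceeds an absolute threshold. Either state the result for $\kappa\ge\kappa_0$, or dispatch small $\kappa$ by a separate (trivial) argument, since the target $C\kappa/\log(\kappa+1)$ is then $O(1)$. Also note that your case~(ii) genuinely uses the nonzero initialization $\tfrac12\mu(\w_0)_2^2=1$ as part of the exhibited instance; this is perfectly legitimate, but worth flagging since the paper's argument uses only variance and is indifferent to $\w_0$.
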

\newcommand{\nt}{\tilde{T}}
\newcommand{\dt}{\Delta}
The bad points guaranteed to exist by Theorem~\ref{thm:limsup} are not rare. We show that such points occur at least once in $\order{\frac{\kappa}{\log \kappa}}$ iterations. Refer to Theorem~\ref{thm:badfraction} in appendix~\ref{app:inf-horizon} in the appendix.


\section{Experimental Results}\label{sec:exp}
We present experimental validation on the suitability of the Step-decay schedule (or more precisely, its continuous counterpart, which is the exponentially decaying schedule), and compare its with the polynomially decaying stepsize schedules. In particular, we consider the use of:\\
\noindent\begin{minipage}{0.33\linewidth}
	\begin{equation}
	\eta_t = \frac{\eta_0}{1+b\cdot t}\label{eq:p1}
	\end{equation}
\end{minipage}
\begin{minipage}{0.33\linewidth}
	\begin{equation}
	\eta_t = \frac{\eta_0}{1+b\sqrt{t}}\label{eq:p2}
	\end{equation}
\end{minipage}
\begin{minipage}{0.33\linewidth}
	\begin{equation}
	\vspace*{1.3em}
	\eta_t = \eta_0\cdot\exp{(-b\cdot t)}\label{eq:e1}.
	\end{equation}
\end{minipage}
Where, we perform a systematic grid search on the parameters $\eta_0$ and $b$. In the section below, we consider a real world non-convex optimization problem of training a residual network on the \cifar~dataset, with an aim to illustrate the practical implications of the results described in the paper. Refer to Appendix~\ref{app:exp} for more details.

\subsection{Non-Convex Optimization: Training a Residual Net on \cifar~}
\label{ssec:cifar10}
We consider training a $44-$layer deep residual network~\citep{kaiminghe2016resnet} with pre-activation blocks~\citep{kaiminghe2016identity} (dubbed preresnet-44) on \cifar~dataset. The code for implementing the network can be found here~\footnote{\url{https://github.com/D-X-Y/ResNeXt-DenseNet}}. For all experiments, we use Nesterov's momentum~\citep{Nesterov83} implemented in pytorch \footnote{\url{https://github.com/pytorch}} with a momentum of $0.9$, batchsize $128$, $100$ training epochs, $\ell_2$ regularization of $0.0005$. 

Our experiments are based on grid searching for the best learning rate decay scheme on the parametric family of learning rate schemes described above~\eqref{eq:p1},\eqref{eq:p2},\eqref{eq:e1}; all grid searches are performed on a separate validation set (obtained by setting aside one-tenth of the training dataset) and with models trained on the remaining $45000$ samples. For presenting the final numbers in the plots/tables, we employ the best hyperparameters from the validation stage and train it on the entire $50,000$ samples and average results run with $10$ different random seeds. The parameters for grid searches and other details are presented in Appendix~\ref{app:exp}. Furthermore, we always extend the grid so that the best performing grid search parameter lies in the interior of our grid search.

\textbf{How does the step decay scheme compare with the polynomially decaying stepsizes?} Figure~\ref{fig:cifar} and Table~\ref{table:cifar} present a comparison of the performance of the three schemes~\eqref{eq:p1}-\eqref{eq:e1}. These results demonstrate that the exponential scheme convicingly outperforms the polynomial step-size schemes.
\begin{figure}[H]
	\centering
	\subfigure{\includegraphics[width=0.37\columnwidth]{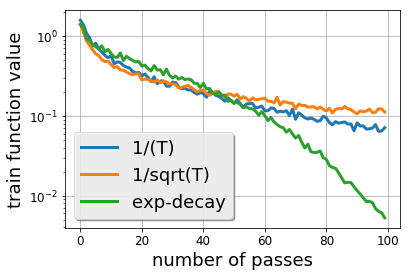}}
	\subfigure{\includegraphics[width=0.37\columnwidth]{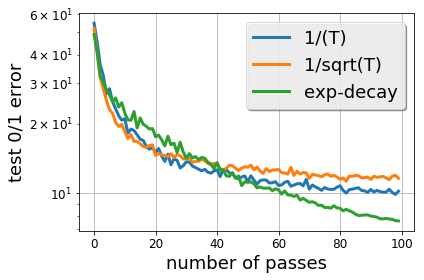}}\vspace*{-10pt}
	\caption{Plot of the training function value (left) and test $0/1-$ error (right) comparing the three decay schemes (two polynomial)~\ref{eq:p1},~\ref{eq:p2}, (and one exponential)~\ref{eq:e1} scheme.}
	\label{fig:cifar}
\end{figure}
\begin{table}[H]
	\centering
\begin{tabular}{|c | c | c |}
		\hline
		Decay Scheme & Train Function Value & Test $0/1$ error \\\hline\hline
		$O(1/t)$ (equation~\eqref{eq:p1}) & $0.0713\pm0.015$ & $10.20\pm0.7\%$ \\ \hline
		$O(1/\sqrt{t})$ (equation~\eqref{eq:p2})  & $0.1119\pm 0.036$ & $11.6\pm0.67\%$ \\\hline
		$\exp(-t)$ (equation~\eqref{eq:e1}) & {$\bf 0.0053\pm 0.0015$} & $\bf 7.58\pm0.21\%$ \\\hline
	\end{tabular}
	\caption{Comparing Train Cross-Entropy and Test $0/1$ Error of various learning rate decay schemes for the classification task on \cifar~ using a $44-$layer residual net with pre-activations.}
    \label{table:cifar}
\end{table}

\textbf{Does suffix  iterate averaging improve over final iterate's behavior for polynomially decaying stepsizes?} Towards answering this question, firstly, we consider the best performing values of equation~\ref{eq:p1} and~\ref{eq:p2}, and then, average iterates of the algorithm starting from $5, 10, 20, 40, 80, 85, 90, 95, 99$ epochs when training the model for a total of $100$ epochs. While such iterate averaging (and their suffix) variants have strong theoretical support for (stochastic) convex optimization~\citep{Ruppert88, PolyakJ92, RakhlinSS12, Bubeck14, jain2016parallelizing}, their impact on non-convex optimization is largely debatable. Nevertheless, this experiments's results (figure~\ref{fig:suffixAvg}) indicates that suffix averaging tends to hurt the algorithm's generalization behavior (which is unsurprising given the non-convex nature of the objective). Note that, figure~\ref{fig:suffixAvg} serves to indicate that averaging the final few ($\le 5$) epochs tends to offer nearly the same result as the final iterate's behavior, indicating that the gains of using suffix iterate averaging are relatively limited for several such settings.
\begin{figure}[H]
	\centering
	\subfigure{\includegraphics[width=0.37\columnwidth]{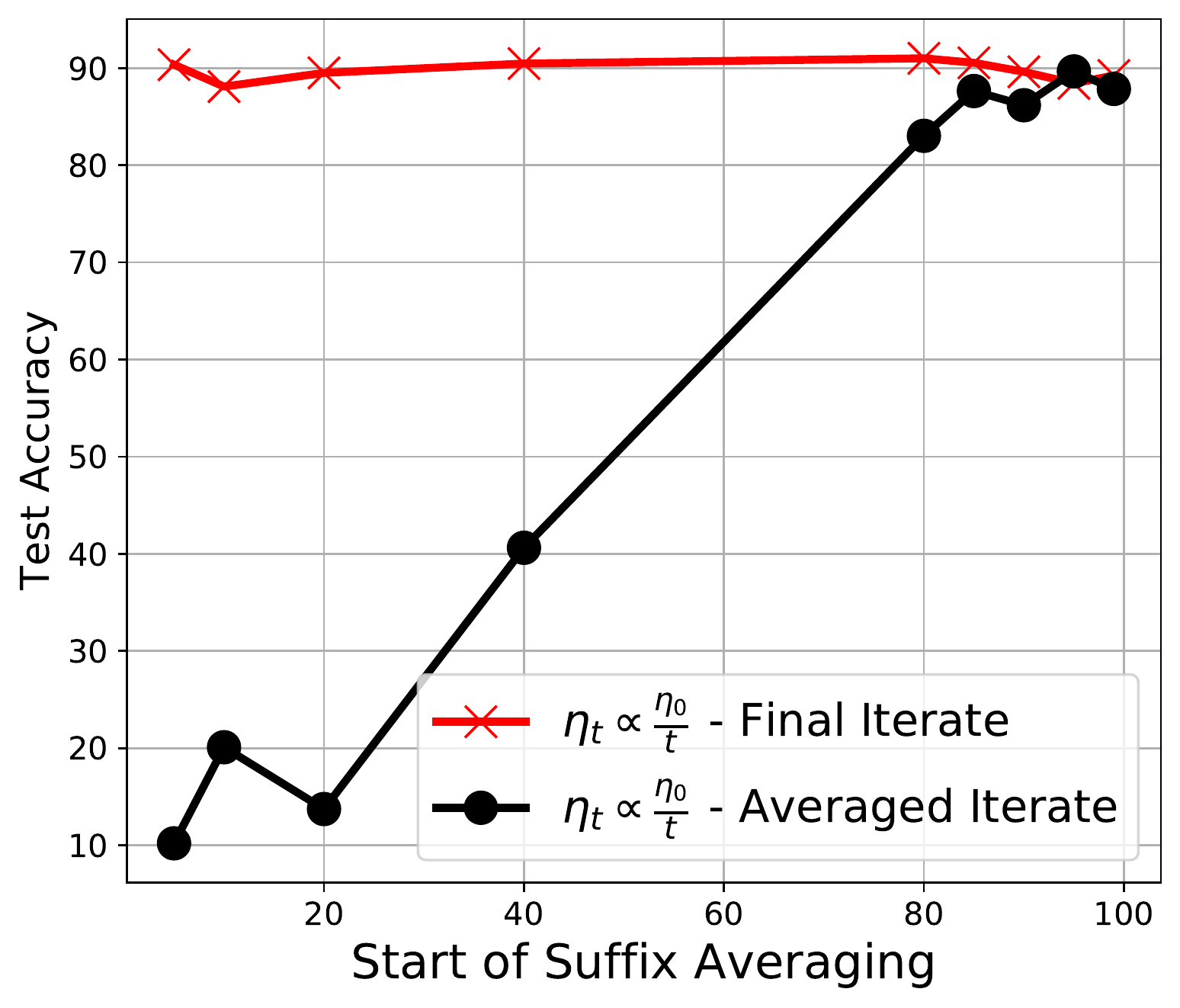}}
	\subfigure{\includegraphics[width=0.37\columnwidth]{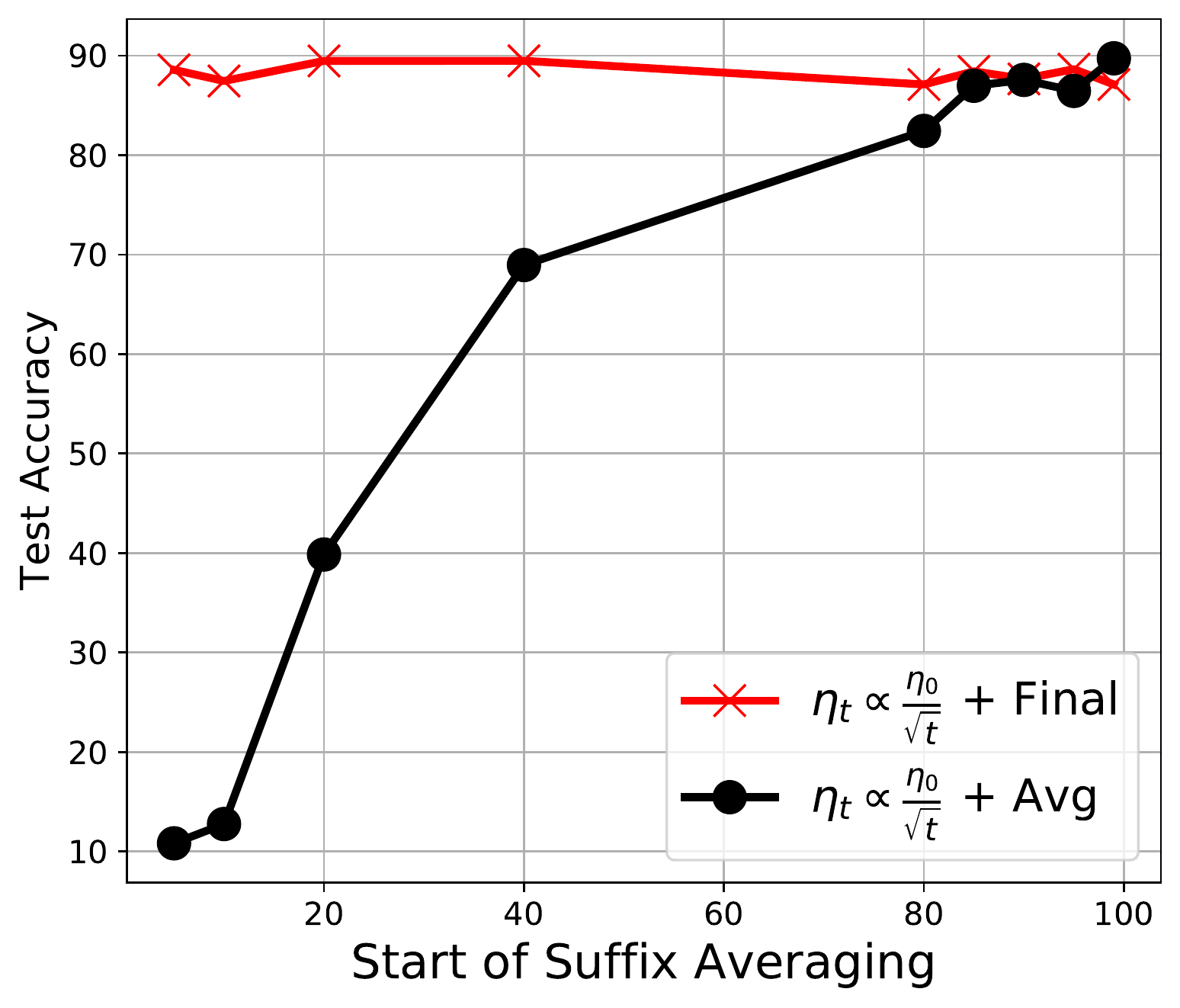}}\vspace*{-10pt}
	\caption{Performance of the suffix averaged iterate compared to the final iterate when varying the iteration when iterate averaging is begun from $\{5, 10, 20, 40, 80, 85, 90, 95, 99\}$ epochs for the $1/T$ learning rate~\ref{eq:p1} (left) and the $1/\sqrt{T}$ learning rate~\ref{eq:p2} (right).}
	\label{fig:suffixAvg}
\end{figure}

\textbf{Does our result on ``knowing'' the time horizon (for step-decay schedule) present implications towards hyper-parameter search methods that work based on results from truncated runs?} Towards answering this question, consider the figure~\ref{fig:cifar2} and Tables~\ref{table:cifarExpCompareTrain} and~\ref{table:cifarExpCompareVal}, which present a comparison of the performance of three exponential decay schemes each of which is tuned to achieve the best performance at $33$, $66$ and $100$ epochs respectively. The key point to note is that best performing hyperparameters at $33$ and $66$ epochs are not the best performing at $100$ epochs (which is made stark from the perspective of the validation error - refer to table~\ref{table:cifarExpCompareVal}). This demonstrates that hyper parameter selection methods that tend to discard hyper-parameters which don't perform well at earlier stages of the optimization (i.e. based on comparing results on truncated runs), which, for e.g., is indeed the case with hyperband~\citep{li2017hyperband}, will benefit from a round of rethinking.
\begin{figure}[H]
	\centering
	\subfigure{\includegraphics[width=0.37\columnwidth]{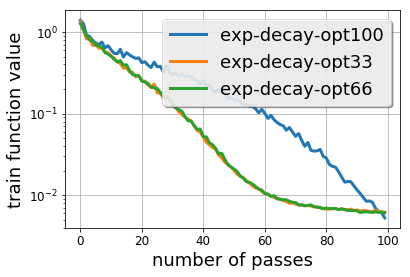}}
	\subfigure{\includegraphics[width=0.37\columnwidth]{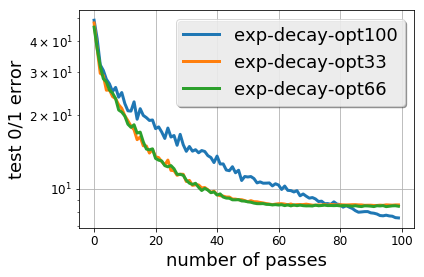}}\vspace*{-10pt}
	\caption{Plot of the training function value (left) and test $0/1-$ error (right) comparing exponential decay scheme (equation~\ref{eq:e1}), with parameters optimized for $33$, $66$ and $100$ epochs.}
	\label{fig:cifar2}\vspace*{-5pt}
\end{figure}\vspace*{-0.5em}
\begin{table}[H]
	\centering
\begin{adjustbox}{max width=\textwidth}
		\begin{tabular}{|c | c | c | c|}
			\hline
			Decay Scheme & Train FVal $@33$  & Train FVal $@66$ & Train FVal $@100$ \\\hline\hline
			$\exp(-t)$ [optimized for $33$ epochs] (eqn~\eqref{eq:e1}) & {$\bf 0.098\pm 0.006$} & {$\bf 0.0086\pm0.002$} & {$\bf 0.0062\pm0.0015$} \\ \hline
			$\exp(-t)$ [optimized for $66$ epochs] (eqn~\eqref{eq:e1}) & {$\bf 0.107\pm0.012$} & {$\bf 0.0088\pm0.0014$} & {$\bf 0.0061\pm0.0011$}\\\hline		
			$\exp(-t)$ [optimized for $100$ epochs] (eqn~\eqref{eq:e1}) & {$0.3\pm 0.06$} & $0.071\pm0.017$ & {$\bf 0.0053\pm0.0016$} \\\hline		
		\end{tabular}
	\end{adjustbox}
	\caption{Comparing training (softmax) function value by optimizing the exponential decay scheme with end times of $33/66/100$ epochs on \cifar~dataset using a $44-$layer residual net.}
\label{table:cifarExpCompareTrain}
\end{table}
\begin{table}[H]\vspace*{-4mm}
	\centering
\begin{adjustbox}{max width=\textwidth}
	\begin{tabular}{|c | c | c | c|}
		\hline
		Decay Scheme & Test $0/1$ $@33$  & Test $0/1$ $@66$ & Test $0/1$ $@100$ \\\hline\hline
		$\exp(-t)$ [optimized for $33$ epochs] (eqn~\eqref{eq:e1}) & {$\bf 10.36\pm0.235\%$} & {$ \bf 8.6\pm0.26\%$} & $8.57\pm0.25\%$ \\ \hline
		$\exp(-t)$ [optimized for $66$ epochs] (eqn~\eqref{eq:e1}) & {$\bf 10.51\pm0.45\%$} & {$\bf 8.51\pm0.13\%$} & $8.46\pm0.19\%$\\\hline		
		$\exp(-t)$ [optimized for $100$ epochs] (eqn~\eqref{eq:e1}) & {$14.42\pm1.47\%$} & $9.8\pm0.66\%$ & {$\bf7.58\pm0.21\%$} \\\hline		
	\end{tabular}
\end{adjustbox}
	\caption{Comparing test $0/1$ error by optimizing the exponential decay scheme with end times of $33/66/100$ epochs for the classification task on \cifar~dataset using a $44-$layer residual net.}
\label{table:cifarExpCompareVal}
\end{table}

\section{Conclusions and Discussion}\label{sec:conc}
The main contribution of this work shows that the behavior of SGD's final iterate for least squares regression is much more nuanced than what has been indicated by prior efforts that have primarily considered non-smooth stochastic convex optimization. The results of this paper point out the striking limitations of polynomially decaying stepsizes on SGD's final iterate, as well as sheds light on the effectiveness of starkly different schemes based on a Step Decay schedule. Somewhat coincidentally, practical implementations for certain classes of stochastic optimization do return the final iterate of SGD with step decay schedule $-$ this connection does merit an understanding through future work.

\paragraph{Acknowledgments:}
Rong Ge acknowledges funding from NSF CCF-$1704656$, NSF CCF-$1845171$ (CAREER), Sloan Fellowship and Google Faculty Research Award. Sham Kakade acknowledges funding from the Washington Research Foundation for Innovation in Data-intensive Discovery, NSF Award 1740551, and ONR award N$00014$-$18$-$1$-$2247$. Rahul Kidambi acknowledges funding from NSF Award $1740822$. 

\clearpage

\clearpage
\appendix
\section{Preliminaries}

Before presenting the lemmas establishing the behavior of SGD under various learning rate schemes, we introduce some notation. We recount that the SGD update rule denoted through:
\begin{align*}
\w_t = \w_{t-1} - \eta_t \widehat{\nabla f}(\w_{t-1})
\end{align*}
We then write out the expression for the stochastic gradient $\widehat{\nabla f}(\w_{t-1})$.
\begin{align*}
\widehat{\nabla f}(\w_{t-1}) = \x_t\x_t^\top (\w_{t-1}-\ws)-\epsilon_t \x_t,
\end{align*}
where, given the stochastic gradient corresponding to an example $(\x_t,y_t)\sim\mathcal{D}$, with $y_t = \iprod{\ws}{\x_t}+\epsilon_t$, the above stochastic gradient expression naturally follows. Now, in order to analyze the contraction properties of the SGD update rule, we require the following notation:
\begin{align*}
P_t = \eye - \eta_t \x_t\x_t^\top.
\end{align*}

\begin{lemma}\label{lem:biasVarTradeoffNew}
	[For e.g. Appendix A.2.2 from~\cite{jain2016parallelizing}] {\bf Bias-Variance tradeoff:} Running SGD for $T-$steps starting from $\w_{0}$ and a stepsize sequence $\{\eta_t\}_{t=1}^T$ presents a final iterate $\w_{T}$ whose excess risk is upper-bounded as:
	\begin{align*}
	\iprod{\Cov}{\E{(\w_T-\ws)\otimes(\w_T-\ws)}}&\leq 2\cdot \bigg(\iprod{\Cov}{\E{P_T\cdots P_1 (\w_{0}-\ws)\otimes(\w_{0}-\ws) P_1\cdots P_T}}\nonumber\\ &+ \iprod{\Cov}{\sum_{\tau=1}^{T} \eta_{\tau}^2\cdot \E{P_T\cdots P_{\tau+1} n_\tau\otimes n_\tau P_{\tau+1}\cdots P_T}} \bigg),
	\end{align*}
	where, $P_t = \eye-\eta_t\cdot \x_t\x_t^\top$ and $n_t = \epsilon_t x_t$. Note that $\E{n_t|\mathcal{F}_{t-1}}=0$ and $\E{n_t\otimes n_t | \mathcal{F}_{t-1}} \preceq \sigma^2 \Cov$, where, $\mathcal{F}_{t-1}$ is the filtration formed by all samples $(\x_1,y_1)\cdots(\x_{t-1},y_{t-1})$ until time $t$.
\end{lemma}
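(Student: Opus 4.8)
The idea is to reduce the lemma to two elementary observations: the least-squares SGD update is \emph{affine} in the iterate, and the Frobenius inner product against a PSD matrix is monotone for the PSD order. First I would center the iterates by setting $\vtheta_t \defeq \w_t-\ws$. Substituting $\widehat{\nabla f}(\w_{t-1}) = \x_t\x_t^\top(\w_{t-1}-\ws)-\epsilon_t\x_t$ into $\w_t = \w_{t-1}-\eta_t\widehat{\nabla f}(\w_{t-1})$ yields the linear recursion
\[
  \vtheta_t = \bigl(\eye-\eta_t\x_t\x_t^\top\bigr)\vtheta_{t-1} + \eta_t\epsilon_t\x_t = P_t\vtheta_{t-1} + \eta_t n_t .
\]
Because the examples $(\x_t,\epsilon_t)$ are i.i.d.\ and independent of $\mathcal{F}_{t-1}$, we get $\E{n_t\mid\mathcal{F}_{t-1}} = \E{\epsilon\x} = \vzero$ (since $\ws$ minimizes $f$, equivalently $\nabla f(\ws)=\vzero$) and $\E{n_t\otimes n_t\mid\mathcal{F}_{t-1}} = \E{\epsilon^2\x\x^\top}\preceq\sigma^2\Cov$, the last step being exactly Assumption~\eqref{eq:varAtOpt}; this establishes the moment claims in the ``Note''.

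Next, using linearity, I would split the trajectory into a \emph{bias} part and a \emph{variance} part driven by the \emph{same} sample path: let $\vtheta_t^{\mathrm{bias}}$ solve $\vtheta_t^{\mathrm{bias}} = P_t\vtheta_{t-1}^{\mathrm{bias}}$ with $\vtheta_0^{\mathrm{bias}} = \w_0-\ws$, and let $\vtheta_t^{\mathrm{var}}$ solve $\vtheta_t^{\mathrm{var}} = P_t\vtheta_{t-1}^{\mathrm{var}} + \eta_t n_t$ with $\vtheta_0^{\mathrm{var}} = \vzero$. A one-line induction gives $\vtheta_t = \vtheta_t^{\mathrm{bias}} + \vtheta_t^{\mathrm{var}}$ for all $t$, and unrolling the two recursions gives $\vtheta_T^{\mathrm{bias}} = P_T\cdots P_1(\w_0-\ws)$ and $\vtheta_T^{\mathrm{var}} = \sum_{\tau=1}^{T}\eta_\tau\,P_T\cdots P_{\tau+1}\,n_\tau$, with the convention that an empty product equals $\eye$. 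I would then apply the elementary operator inequality $(\va+\vb)\otimes(\va+\vb)\preceq 2\,\va\otimes\va + 2\,\vb\otimes\vb$, which follows from $(\va-\vb)\otimes(\va-\vb)\succeq 0$, with $\va = \vtheta_T^{\mathrm{bias}}$ and $\vb = \vtheta_T^{\mathrm{var}}$; take expectations (the PSD order is preserved); and take the inner product with $\Cov\succeq 0$, for which $A\preceq B$ implies $\iprod{\Cov}{A}\le\iprod{\Cov}{B}$ since $\iprod{\Cov}{B-A}=\Tr\bigl(\Cov^{1/2}(B-A)\Cov^{1/2}\bigr)\ge 0$. Using $P_t^\top = P_t$ to write $\vtheta_T^{\mathrm{bias}}\otimes\vtheta_T^{\mathrm{bias}} = P_T\cdots P_1(\w_0-\ws)\otimes(\w_0-\ws)P_1\cdots P_T$ (and similarly inside the variance sum) then produces exactly the two terms on the right-hand side of the claimed bound.

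Since the computation is routine, there is no genuinely hard step; the one point worth flagging is that we deliberately do \emph{not} attempt to show the cross term $\iprod{\Cov}{\E{\vtheta_T^{\mathrm{bias}}\otimes\vtheta_T^{\mathrm{var}}}}$ vanishes. Under the general noise model~\eqref{eq:varAtOpt} the residual $\epsilon$ need not be independent of $\x$, so quantities of the form $\E{\epsilon\,\iprod{\x}{\vu}\,\x\x^\top}$ arise and are not in general zero; the factor $2$ in the statement is precisely what lets the $(\va-\vb)\otimes(\va-\vb)\succeq 0$ trick absorb this cross term. A minor bookkeeping point is that the bias and variance recursions must be coupled to the \emph{same} realizations of $\{(\x_t,\epsilon_t)\}$ so that the decomposition $\vtheta_T = \vtheta_T^{\mathrm{bias}} + \vtheta_T^{\mathrm{var}}$ holds pathwise (hence legitimizes the per-path operator inequality), rather than only in distribution.
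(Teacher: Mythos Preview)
Your argument is essentially the paper's own: unroll the affine recursion $\vtheta_t = P_t\vtheta_{t-1}+\eta_t n_t$, split pathwise into the bias and variance processes, and absorb the cross term via $(\va+\vb)\otimes(\va+\vb)\preceq 2\,\va\otimes\va+2\,\vb\otimes\vb$ before pairing with $\Cov$. Your remark that the bias--variance cross need not vanish (forcing the factor $2$) is exactly the paper's observation that equality without the $2$ holds iff $\E{\epsilon\,x^{(i)}x^{(j)}x^{(k)}}=0$; the only step you leave implicit, namely that $\E{\vtheta_T^{\mathrm{var}}\otimes\vtheta_T^{\mathrm{var}}}$ reduces to the diagonal sum $\sum_\tau \eta_\tau^2\E{P_T\cdots P_{\tau+1}\,n_\tau\otimes n_\tau\,P_{\tau+1}\cdots P_T}$, is likewise asserted without further comment in the paper's computation of $V_t$.
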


\begin{proof} One can view the contribution of the above two terms as ones stemming from SGD's updates, which can be written as:
\begin{align*}
\w_t &= \w_{t-1} - \eta_t \widehat{\nabla f}(w_{t-1})\nonumber\\
\w_t-\ws &= (\eye - \eta_t \x_t\x_t) (\w_{t-1}-\ws) + \eta_t n_t\nonumber\\
\w_t-\ws &= P_t\cdots P_1 (\w_0-\ws) + \sum_{\tau=1}^T P_t\cdots P_{\tau+1} \eta_\tau n_\tau\nonumber
\end{align*}
From the above equation, the result of the lemma follows straightforwardly. Now, clearly, if the noise $\epsilon$ and the inputs $\x$ are indepdent of each other, and if the noise is zero mean i.e. $\E{\epsilon}=0$, the above inequality holds with equality (without the factor of two). This is true more generally iff $$\E{\epsilon \x^{(i)}\x^{(j)}\x^{(k)}} =0.$$ For more details, refer to~\citet{defossez15}.


Now, in order to bound the total error, note that the original stochastic process associated with SGD's updates can be decoupled into two (simpler) processes, one being the noiseless process (which corresponds to reducing the dependence on the initial error, and is termed ``bias''), i.e., where, the recurrence evolves as:
\begin{align}
\label{eq:biasRecc}
\w_t^{\text{bias}}-\ws = P_t (\w_{t-1}^{\text{bias}}-\ws)
\end{align}
The second recursion corresponds to the dependence on the noise (termed as variance), wherein, the process is initiated at the solution, i.e. $\w_0^{\text{var}}=\ws$ and is driven by the noise $n_t$. The update for this process corresponds to:
\begin{align}
\label{eq:varRecc}
\w_t^{\text{var}}-\ws &= P_t (\w_{t-1}^{\text{var}}-\ws) + \eta_t n_t,\ \ \, \text{with }\ \ \w_0^{\text{var}}=\ws \\
&=\sum_{\tau=1}^t P_t\cdots P_{\tau+1} \cdot (\eta_\tau n_\tau) \nonumber.
\end{align}
We represent by $B_t$ the covariance of the $t^{\text{th}}$ iterate of the bias process, i.e., 
\begin{align*}
B_t &= \E{ \left(\w_t^{\text{bias}}-\ws\right) \left(\w_t^{\text{bias}}-\ws\right)^\top}\nonumber\\
&= \E{P_t B_{t-1} P_t^\top} = \E{P_t\cdots P_1 B_0 P_1\cdots P_t}
\end{align*}
The quantity that routinely shows up when bounding SGD's convergence behavior is the covariance of the variance error, i.e. $V_t:=\E{(\w_t^{\text{var}}-\ws)\otimes\w_t^{\text{var}}-\ws)}$.
This implies the following (simplified) expression for $V_t$:
\begin{align*}
V_t &= \E{ (\w_t^{\text{var}}-\ws) \otimes (\w_t^{\text{var}}-\ws) }\\
&= \E { \bigg(\sum_{\tau=1}^t P_t\cdots P_{\tau+1} \cdot (\eta_\tau n_\tau) \bigg) \otimes \bigg(\sum_{\tau'=1}^t P_t\cdots P_{\tau'+1} \cdot (\eta_\tau' n_\tau') \bigg) }\\
&= \sum_{\tau,\tau'} \E{P_T\cdots P_{\tau+1} (\eta_\tau n_\tau) \otimes (\eta_{\tau'} n_{\tau'}) P_{\tau'+1}\cdots P_T }\\
&= \sum_{\tau=1}^T \eta_\tau^2\E{P_T\cdots P_{\tau+1} n_\tau\otimes n_\tau P_{\tau+1}\cdots P_T}
\end{align*}
Firstly, note that this naturally implies that the sequence of covariances $V_\tau$, $\tau=1,\cdots,T$ initialized at (say), the solution, i.e., $\V_0 = 0$ naturally grows to its steady state covariance, i.e., 
\begin{align*}
V_1\preceq V_2 \preceq \cdots \preceq V_\infty.
\end{align*}
See lemma~3 of~\cite{JainKKNPS17} for more details. Furthermore, what naturally follows in relating $V_t$ to $V_{t-1}$ is:
\begin{align}
\label{eq:covarianceEvolution}
V_t \preceq \E{P_t V_{t-1} P_t^\top} + \eta_t^2 \sigma^2\Cov.
\end{align}
\end{proof}

\begin{lemma}
	[Lemma~5 of~\cite{JainKKNPS17}] Running SGD with a (constant) stepsize sequence $\eta<1/\infbound$ achieves the following steady-state covariance:
	\begin{align*}
	V_\infty \preceq \frac{\eta\sigma^2}{1-\eta\infbound}\eye.
	\end{align*}
\end{lemma}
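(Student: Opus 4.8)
The plan is to invoke the variance recursion~\eqref{eq:covarianceEvolution} specialized to the constant stepsize $\eta_t\equiv\eta$, namely $V_t \preceq \E{P\,V_{t-1}\,P} + \eta^2\sigma^2\Cov$ where $P=\eye-\eta\,\x\x^\top$ with $\x\sim\mathcal{D}$ drawn fresh (note $P=P^\top$), and to bound $\lambda_{\max}(V_t)$ uniformly in $t$ by evaluating this recursion at a top eigenvector of $V_t$. Since the target bound is a scalar multiple of $\eye$, passing to the top eigenvalue costs nothing, and testing the recursion along an eigendirection of $V_t$ is precisely what makes the cross terms cooperate.

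Concretely, first I would fix $t$, set $c_t:=\lambda_{\max}(V_t)$, and pick a unit eigenvector $u$ with $V_t u = c_t u$. Using $Pu = u-\eta(\x^\top u)\x$ together with $V_{t-1}\preceq V_t\preceq c_t\eye$ (monotonicity of $\{V_\tau\}$), the recursion applied to $u$ gives
\[
 c_t \;=\; \iprod{u}{V_t u} \;\le\; \E{\iprod{Pu}{V_{t-1}Pu}} + \eta^2\sigma^2\iprod{u}{\Cov u}\;\le\; c_t\,\E{\norm{Pu}^2} + \eta^2\sigma^2\iprod{u}{\Cov u}.
\]
Then I would expand $\E{\norm{Pu}^2} = 1 - 2\eta\,\iprod{u}{\Cov u} + \eta^2\,\E{(\x^\top u)^2\norm{\x}^2}$ using $\E{\x\x^\top}=\Cov$, and control the last term via $\E{(\x^\top u)^2\norm{\x}^2} = u^\top\E{\norm{\x}^2\x\x^\top}u \le \infbound\,\iprod{u}{\Cov u}$ (the fourth-moment assumption~\eqref{eq:fourthmoment}), which yields $\E{\norm{Pu}^2}\le 1-\eta(2-\eta\infbound)\iprod{u}{\Cov u}$. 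Substituting and cancelling the $c_t$ appearing on both sides leaves $c_t\,\eta(2-\eta\infbound)\iprod{u}{\Cov u}\le\eta^2\sigma^2\iprod{u}{\Cov u}$.

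It remains to divide through by $\eta\iprod{u}{\Cov u}$, which is legitimate whenever $c_t>0$ because $\operatorname{range}(V_t)\subseteq\operatorname{range}(\Cov)$: each noise term $n_\tau=\epsilon_\tau\x_\tau$ lies almost surely in $\operatorname{range}(\Cov)$ (since $\Sigma\preceq\sigma^2\Cov$ forces $n_\tau\perp\ker\Cov$) and each factor $P_\tau$ preserves $\operatorname{range}(\Cov)$, so $u\in\operatorname{range}(V_t)\subseteq\operatorname{range}(\Cov)$ and hence $\iprod{u}{\Cov u}>0$. This gives $c_t(2-\eta\infbound)\le\eta\sigma^2$, i.e. $\lambda_{\max}(V_t)\le \tfrac{\eta\sigma^2}{2-\eta\infbound}$ for every $t$; since $\eta<1/\infbound$ we have $2-\eta\infbound>1>0$, so the bound is finite and no larger than $\tfrac{\eta\sigma^2}{1-\eta\infbound}$. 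Finally, the monotone PSD sequence $\{V_t\}$, now known to be norm-bounded, converges to $V_\infty\preceq\tfrac{\eta\sigma^2}{1-\eta\infbound}\eye$, as claimed (in fact the argument gives the slightly sharper constant $\tfrac{\eta\sigma^2}{2-\eta\infbound}$).

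The computations here are routine; the two load-bearing steps are (i) testing the recursion at the top eigenvector of $V_t$, which converts the indefinite cross term $\iprod{u}{\Cov V_{t-1}u}$ into the nonnegative quantity $c_t\iprod{u}{\Cov u}$ that can be absorbed on the left, and (ii) the fourth-moment inequality bounding $\E{(\x^\top u)^2\norm{\x}^2}$. The main subtlety to watch is the degenerate case of singular $\Cov$, handled by the range containment above (equivalently, one may simply restrict all the iterates to $\operatorname{range}(\Cov)$ from the start); everything else is bookkeeping.
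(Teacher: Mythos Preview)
The paper does not actually prove this lemma; it is stated as a citation to \cite{JainKKNPS17} and no argument is given. So there is no ``paper's proof'' to compare against directly.

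Your argument is correct. A couple of remarks on how it sits relative to the surrounding material. The approach most in the spirit of the paper (and presumably of the cited reference) is the one you can see in the very next lemma, Lemma~\ref{lem:CovarBound}: one \emph{guesses} the scalar upper bound $V_t\preceq c\,\eye$ with $c=\frac{\eta\sigma^2}{1-\eta\infbound}$ and verifies it by PSD induction, expanding $\E{(\eye-\eta\x\x^\top)c\eye(\eye-\eta\x\x^\top)}+\eta^2\sigma^2\Cov$ and checking it stays below $c\,\eye$ using only $\E{\x\x^\top}=\Cov$ and the fourth-moment bound. That route avoids both the monotonicity of $\{V_t\}$ and the range argument needed to divide by $\iprod{u}{\Cov u}$, at the cost of having to know the answer in advance. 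Your route instead \emph{derives} the constant by testing the recursion at a top eigenvector and using $V_{t-1}\preceq V_t$, which is a nice touch and even yields the sharper $\frac{\eta\sigma^2}{2-\eta\infbound}$; the price is that you must invoke monotonicity (stated in the paper, citing \cite{JainKKNPS17}) and handle the degenerate $\ker\Cov$ case, both of which you do correctly. Either approach is perfectly adequate here.
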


\begin{lemma}
	\label{lem:CovarBound}
	Suppose $\eta=1/{2\infbound}$, and $V_0= \frac{\eta\sigma^2}{1-\eta\infbound}\eye=2\eta\sigma^2\eye$. 
	For any sequence of learning rates $\eta_t\leq\eta = 1/{2\infbound}\ \forall\ t\in\{1,\cdots,t\}$, then,
	\begin{align*}
	V_t \preceq 2\eta\sigma^2\eye\ \ \ \forall\ \ \ t.
	\end{align*}
\end{lemma}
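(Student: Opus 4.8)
The plan is a straightforward induction on $t$, using the one‑step covariance recursion \eqref{eq:covarianceEvolution} derived in the proof of Lemma~\ref{lem:biasVarTradeoffNew}, namely $V_t \preceq \E{P_t V_{t-1} P_t^\top} + \eta_t^2 \sigma^2 \Cov$, where $P_t = \eye - \eta_t \x_t\x_t^\top$. The base case $t = 0$ is immediate, since by hypothesis $V_0 = 2\eta\sigma^2\eye \preceq 2\eta\sigma^2\eye$. For the inductive step, assume $V_{t-1} \preceq 2\eta\sigma^2\eye$; we must show $\E{P_t V_{t-1} P_t^\top} + \eta_t^2\sigma^2\Cov \preceq 2\eta\sigma^2\eye$.

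First I would replace the (deterministic) matrix $V_{t-1}$ by its bound: since $2\eta\sigma^2\eye - V_{t-1} \succeq 0$ and conjugation preserves the PSD order, $\E{P_t(2\eta\sigma^2\eye - V_{t-1})P_t^\top} \succeq 0$, hence $\E{P_t V_{t-1} P_t^\top} \preceq 2\eta\sigma^2\,\E{P_t P_t^\top}$. Next I would bound $\E{P_t P_t^\top}$ directly: since $\x_t\x_t^\top$ is symmetric, $P_t P_t^\top = \eye - 2\eta_t \x_t\x_t^\top + \eta_t^2 \norm{\x_t}^2 \x_t\x_t^\top$, so taking expectations and invoking the fourth‑moment assumption \eqref{eq:fourthmoment}, $\E{\norm{\x_t}^2 \x_t\x_t^\top} \preceq \infbound\,\Cov$, gives $\E{P_t P_t^\top} \preceq \eye - \eta_t(2 - \eta_t\infbound)\Cov$. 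Because $\eta_t \le \eta = 1/(2\infbound)$ we have $\eta_t\infbound \le 1/2$, so $2 - \eta_t\infbound \ge 1$ and therefore $\E{P_t P_t^\top} \preceq \eye - \eta_t\Cov$.

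Finally I would combine the pieces: $V_t \preceq 2\eta\sigma^2(\eye - \eta_t\Cov) + \eta_t^2\sigma^2\Cov = 2\eta\sigma^2\eye - (2\eta - \eta_t)\eta_t\sigma^2\Cov$. Since $\eta_t \le \eta$, the coefficient $(2\eta - \eta_t)\eta_t$ is nonnegative, so the correction term is PSD and $V_t \preceq 2\eta\sigma^2\eye$, completing the induction.

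There is no serious obstacle in this argument; the only points that deserve a line of care are (i) checking that the recursion \eqref{eq:covarianceEvolution} remains valid when the process is started from a nonzero PSD covariance $V_0$ rather than from $0$ — this follows from exactly the same computation, since conditioning on $\mathcal F_{t-1}$ handles the cross terms already accounted for in the ``$\preceq$'' — and (ii) arranging the semidefinite inequalities so that the slack $2 - \eta_t\infbound \ge 1$ is used to absorb the $O(\eta_t^2)$ term; this is precisely why the slightly conservative choice $\eta = 1/(2\infbound)$ (as opposed to the divergence threshold $1/\infbound$) makes the invariant $V_t \preceq 2\eta\sigma^2\eye$ self‑sustaining under arbitrarily decaying stepsizes.
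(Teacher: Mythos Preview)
Your proposal is correct and follows essentially the same approach as the paper: induction on $t$ via the one-step recursion \eqref{eq:covarianceEvolution}, replacing $V_{t-1}$ by $2\eta\sigma^2\eye$ through PSD conjugation, expanding $\E{P_t P_t^\top}$ with the fourth-moment bound \eqref{eq:fourthmoment}, and concluding from $\eta_t \le \eta = 1/(2\infbound)$ that the residual term $\eta_t(\eta_t-2\eta)\sigma^2\Cov$ is negative semidefinite. The paper's argument differs only in the order of the algebra (it multiplies out before simplifying the $\eta_t^2\infbound$ term, whereas you first bound $\E{P_tP_t^\top}\preceq \eye-\eta_t\Cov$), but both arrive at the identical final inequality.
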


\begin{proof}
	We will prove the lemma using an inductive argument. The base case, i.e. $t=0$ follows from the problem statement. Note also that for SGD, $V_0 = 0$ implying the statement naturally follows. If, say, $V_t$ satisfies the equation above, from equation~\ref{eq:covarianceEvolution}, we have the following covariance for $V_{t+1}$:
	\begin{align*}
	V_{t+1} &\preceq \E{P_t V_t P_t^\top} + \eta_t^2\sigma^2\Cov\\
	&= \E{ (\eye-\eta_t \x_t\x_t^\top) V_t (\eye - \eta_t \x_t \x_t^\top) } + \eta_t^2\sigma^2 \Cov \\
	&\preceq 2\eta\sigma^2 \E{ (\eye-\eta_t \x_t\x_t^\top)  (\eye - \eta_t \x_t \x_t^\top) } + \eta_t^2\sigma^2 \Cov\\
	&\preceq 2\eta\sigma^2\eye - 4\eta_t\eta\sigma^2 \Cov + 2\eta_t^2\eta\sigma^2\infbound \Cov + \eta_t^2\sigma^2\Cov\\
	&\preceq 2\eta\sigma^2\eye - 2\eta_t \eta \sigma^2 \Cov + \eta_t^2 \sigma^2\Cov\\
	&= 2\eta\sigma^2\eye + \eta_t\cdot(\eta_t-2\eta) \sigma^2\Cov \\
	&\preceq 2\eta\sigma^2\eye,
	\end{align*}
	from which the lemma follows.
\end{proof}

\begin{lemma}\label{lem:varUB}
	{\bf (Reduction from Multiplicative noise oracle)} Let $V_t$ be the (expected) covariance of the variance error. Then, the recursion that connects $V_{t+1}$ to $V_t$ can be expressed as:
	\begin{align*}
	V_{t+1} \preceq (\eye-\eta_t \Cov) V_t (\eye-\eta_t\Cov) + 2\eta_t^2 \sigma^2\Cov
	\end{align*}
\end{lemma}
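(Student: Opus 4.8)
}
The plan is to start from the exact one-step inequality for the variance covariance established above, equation~\eqref{eq:covarianceEvolution}, namely $V_{t+1}\preceq \E{P_t V_t P_t^\top}+\eta_t^2\sigma^2\Cov$ with $P_t=\eye-\eta_t\x_t\x_t^\top$, and to show that replacing the random contraction $P_t$ by its mean $\E{P_t}=\eye-\eta_t\Cov$ costs only one extra additive term of size $\eta_t^2\sigma^2\Cov$. Informally, the ``multiplicative noise'' coming from the fluctuation of $\x_t\x_t^\top$ about $\Cov$ gets charged against the additive noise level $\sigma^2\Cov$, which is precisely what makes this a \emph{reduction to the multiplicative-noise-free oracle}.

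Concretely, I would first expand $\E{P_t V_t P_t^\top}$: since $\x_t$ is drawn independently of the (deterministic PSD) matrix $V_t$, this equals $V_t-\eta_t(\Cov V_t+V_t\Cov)+\eta_t^2\,\E{\x_t\x_t^\top V_t\x_t\x_t^\top}$, whereas $(\eye-\eta_t\Cov)V_t(\eye-\eta_t\Cov)=V_t-\eta_t(\Cov V_t+V_t\Cov)+\eta_t^2\,\Cov V_t\Cov$, so the two differ only in the $\mathcal{O}(\eta_t^2)$ term. Dropping the PSD matrix $\Cov V_t\Cov$, it then suffices to prove $\E{\x_t\x_t^\top V_t\x_t\x_t^\top}\preceq\sigma^2\Cov$. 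For this I would scalarize: for any vector $u$, $u^\top\E{\x_t\x_t^\top V_t\x_t\x_t^\top}u=\E{(u^\top\x_t)^2\,(\x_t^\top V_t\x_t)}$; using the a priori operator-norm bound $V_t\preceq 2\eta\sigma^2\eye=(\sigma^2/\infbound)\eye$ from Lemma~\ref{lem:CovarBound} gives $\x_t^\top V_t\x_t\le(\sigma^2/\infbound)\norm{\x_t}^2$, and then the fourth-moment assumption~\eqref{eq:fourthmoment}, $\E{\norm{\x}^2\x\x^\top}\preceq\infbound\Cov$, finishes the bound. Adding back the $\eta_t^2\sigma^2\Cov$ already present in equation~\eqref{eq:covarianceEvolution} yields $V_{t+1}\preceq(\eye-\eta_t\Cov)V_t(\eye-\eta_t\Cov)+2\eta_t^2\sigma^2\Cov$.

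The only nontrivial point — and hence the main obstacle — is the control of the fourth-moment term $\E{\x_t\x_t^\top V_t\x_t\x_t^\top}$: it needs not merely assumption~\eqref{eq:fourthmoment} but also the uniform bound $V_t\preceq(\sigma^2/\infbound)\eye$, which is exactly what Lemma~\ref{lem:CovarBound} supplies under the standing hypothesis $\eta_t\le 1/(2\infbound)$ for all $t$; so the lemma is implicitly being applied in that regime (note also $V_0=0$ for SGD, so the base case is trivial). Everything else is a routine expansion of a quadratic form together with discarding a positive semidefinite matrix, so no further difficulty is anticipated.
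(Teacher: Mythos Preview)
Your proposal is correct and follows essentially the same route as the paper: start from equation~\eqref{eq:covarianceEvolution}, expand $\E{P_tV_tP_t^\top}$, drop the PSD term $\eta_t^2\Cov V_t\Cov$, and bound $\E{\x_t\x_t^\top V_t\x_t\x_t^\top}\preceq\norm{V_t}_2\infbound\Cov\preceq\sigma^2\Cov$ via Lemma~\ref{lem:CovarBound} together with the fourth-moment assumption~\eqref{eq:fourthmoment}. The paper's proof is precisely this chain of inequalities, under the same standing hypothesis $\eta_t\le\eta=1/(2\infbound)$.
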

\begin{proof}
	From equation~\ref{eq:covarianceEvolution}, we already know that the evolution of the co-variance of the variance error follows:
	\begin{align*}
	V_{t+1} &\preceq \E{P_t V_t P_t^\top} + \eta_t^2 \sigma^2 \Cov\\
	&\preceq \E{(\eye-\eta_t\Cov)V_t(\eye-\eta_t\Cov)} + \eta_t^2 \E{\x_t \x_t^\top V_t \x_t \x_t^\top} + \eta_t^2 \sigma^2 \Cov\\
	&\preceq  (\eye-\eta_t\Cov)V_t(\eye-\eta_t\Cov) + \eta_t^2 \norm{V_t}_2 \infbound\Cov + \eta_t^2\sigma^2\Cov\\
	&= (\eye-\eta_t\Cov)V_t(\eye-\eta_t\Cov) + \eta_t^2 \cdot 2\eta\sigma^2\infbound\Cov + \eta_t^2\sigma^2\Cov\\
	&\preceq (\eye-\eta_t\Cov)V_t(\eye-\eta_t\Cov) + 2\eta_t^2\sigma^2\Cov.
	\end{align*}
	Where the steps follow from lemma~\ref{lem:CovarBound}, and owing from the fact that $\eta_t\leq \eta = 1/2\infbound\ \ \forall\ \ t$. 
\end{proof}

{\bf Note:} Basically, one could analyze an auxiliary process driven by noise with variance off by a factor of two and convert the analysis into one involving exact (deterministic) gradients.

\begin{lemma}\label{lem:biasDecayStronglyConvex}
	[{\bf Bias decay - strongly convex case}] Let the minimal eigenvalue of the Hessian $\mu = \lamminH >0 $. Consider the bias recursion as in equation~\ref{eq:biasRecc} with the stepsize set as $\eta = 1/(2\infbound)$. Then, 
	\begin{align*}
	\E{\twonorm{\w_t^\text{bias}-\ws}^2}\leq (1-1/(2\kappa)) \E{\twonorm{\w_{t-1}^\text{bias}-\ws}^2}
	\end{align*}
\end{lemma}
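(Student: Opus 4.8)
The plan is a one-step conditional-expectation computation, exploiting that the bias recursion is driven purely by the contraction operators $P_t$ with no noise term. Fix $t$ and condition on the filtration $\mathcal{F}_{t-1}$; then $\vv \defeq \w_{t-1}^{\text{bias}} - \ws$ is $\mathcal{F}_{t-1}$-measurable, and by \eqref{eq:biasRecc} we have $\w_t^{\text{bias}} - \ws = P_t \vv = (\eye - \eta \x_t\x_t^\top)\vv$ with $\eta = 1/(2\infbound)$. Writing $\twonorm{P_t\vv}^2 = \vv^\top (\eye - \eta\x_t\x_t^\top)^2 \vv$ and expanding the square, I would use
\[
(\eye - \eta \x_t\x_t^\top)^2 = \eye - 2\eta\, \x_t\x_t^\top + \eta^2 \twonorm{\x_t}^2 \x_t\x_t^\top,
\]
so that, taking the conditional expectation over the fresh sample $\x_t$ and using $\E{\x_t\x_t^\top} = \Cov$ together with the fourth-moment bound \eqref{eq:fourthmoment}, namely $\E{\twonorm{\x_t}^2 \x_t\x_t^\top} \preceq \infbound\, \Cov$, one obtains the operator inequality
\[
\E{(\eye - \eta\x_t\x_t^\top)^2 \mid \mathcal{F}_{t-1}} \preceq \eye - 2\eta\,\Cov + \eta^2 \infbound\, \Cov = \eye - \eta(2 - \eta \infbound)\,\Cov .
\]

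Next I would plug in $\eta = 1/(2\infbound)$, so $\eta\infbound = 1/2$ and $\eta(2-\eta\infbound) = \frac{3}{4\infbound}$, giving the bound $\preceq \eye - \frac{3}{4\infbound}\,\Cov$. Since $\Cov \succeq \mu\eye$ and $\kappa = \infbound/\mu$, this is $\preceq \left(1 - \frac{3}{4\kappa}\right)\eye \preceq \left(1 - \frac{1}{2\kappa}\right)\eye$. Applying this PSD inequality to the quadratic form in $\vv$ yields $\E{\twonorm{\w_t^{\text{bias}}-\ws}^2 \mid \mathcal{F}_{t-1}} \le \left(1 - \frac{1}{2\kappa}\right)\twonorm{\w_{t-1}^{\text{bias}}-\ws}^2$, and taking total expectations finishes the proof (in fact with the slightly sharper constant $\frac{3}{4\kappa}$).

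There is no real obstacle here; the only point requiring care is the correct invocation of the fourth-moment assumption \eqref{eq:fourthmoment} to control the cross term $\E{\twonorm{\x_t}^2\x_t\x_t^\top}$ — without it the $\eta^2$ term cannot be absorbed — and making sure the operator inequality is applied to the \emph{deterministic} vector $\vv$ after conditioning, so that $\E{\vv^\top M \vv \mid \mathcal{F}_{t-1}} = \vv^\top \E{M \mid \mathcal{F}_{t-1}}\vv$ is legitimate.
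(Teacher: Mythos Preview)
Your proposal is correct and follows essentially the same route as the paper's proof: expand $\twonorm{(\eye-\eta\x_t\x_t^\top)\vv}^2$, take expectation using $\E{\x_t\x_t^\top}=\Cov$ and the fourth-moment bound \eqref{eq:fourthmoment}, and then invoke $\Cov\succeq\mu\eye$. The only cosmetic difference is that you phrase the intermediate step as a PSD operator inequality on $\E{(\eye-\eta\x_t\x_t^\top)^2}$, whereas the paper works directly with the scalar quantities $\twonorm{\vv}^2$ and $\norm{\vv}_{\Cov}^2$; your observation that the argument actually yields the sharper factor $1-\tfrac{3}{4\kappa}$ is also right (the paper loosens $-2\eta+\eta^2\infbound$ to $-\eta$ rather than $-\tfrac{3}{2}\eta$).
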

\begin{proof}
	The proof follows through straight forward computations:
	\begin{align*}
	\E{\twonorm{\w_t^\text{bias}-\ws}^2} &\leq \E{\twonorm{\w_{t-1}^\text{bias}-\ws}^2}-2\eta\E{\norm{\w_{t-1}^\text{bias}-\ws}^2_\Cov} + \eta^2 \infbound \E{\norm{\w_{t-1}^\text{bias}-\ws}^2_\Cov}\\
	&= \E{\twonorm{\w_{t-1}^\text{bias}-\ws}^2}-\eta\E{\norm{\w_{t-1}^\text{bias}-\ws}^2_\Cov}\\
	&\leq (1-\eta\mu) \E{\twonorm{\w_{t-1}^\text{bias}-\ws}^2},
	\end{align*}
	where, the first line follows from the fact that $\E{\twonorm{\x_t}^2\x_t\x_t^\top}\preceq \infbound\Cov$ and the result follows through the definition of $\kappa$.
\end{proof}

\begin{lemma}\label{lem:biasUB}
	[{\bf Reduction of the bias recursion with multiplicative noise to one resembling the variance recursion}] Consider the bias recursion that evolves as $$B_t = \E{(\w_t-\ws)(\w_t-\ws)^\top} = \E{ (\eye-\gamma_t \x_t\x_t^\top) B_{t-1} (\eye-\gamma_t \x_t\x_t^\top)} \ \ \ \text{with}\ \  B_0 = (\w_0-\ws)(\w_0-\ws)^\top.$$ Then, the following recursion holds $\forall \gamma_t \leq 1/\infbound$:
	\begin{align*}
	B_t \preceq (\eye-\gamma_t\Cov)B_{t-1}(\eye-\gamma_t\Cov) + \gamma_t^2 \infbound \norm{\w_0-\ws}^2 \Cov.
	\end{align*}
\end{lemma}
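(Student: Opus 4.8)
The plan is to mirror the argument already used for the variance recursion in Lemmas~\ref{lem:CovarBound} and~\ref{lem:varUB}, adapted to the noiseless (bias) process. The proof splits into two stages: first a uniform spectral-norm bound on $B_t$, and then a ``complete-the-square'' expansion of the one-step map.

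\textbf{Stage 1 (uniform bound).} I would first show by induction on $t$ that $B_t \preceq \norm{\w_0-\ws}^2\,\eye$. The base case holds since $B_0=(\w_0-\ws)(\w_0-\ws)^\top$, so $v^\top B_0 v = \iprod{\w_0-\ws}{v}^2 \le \norm{\w_0-\ws}^2\norm{v}^2$ by Cauchy--Schwarz. For the inductive step, set $c \eqdef \norm{\w_0-\ws}^2$ and assume $B_{t-1}\preceq c\,\eye$. For each fixed $\x_t$ the map $M\mapsto(\eye-\gamma_t\x_t\x_t^\top)M(\eye-\gamma_t\x_t\x_t^\top)$ is monotone in the PSD order, so $(\eye-\gamma_t\x_t\x_t^\top)B_{t-1}(\eye-\gamma_t\x_t\x_t^\top)\preceq c\,(\eye-\gamma_t\x_t\x_t^\top)^2$ pointwise; taking expectations and using $\x_t\x_t^\top\x_t\x_t^\top=\norm{\x_t}^2\x_t\x_t^\top$ gives $B_t \preceq c\big(\eye - 2\gamma_t\Cov + \gamma_t^2\,\E{\norm{\x_t}^2\x_t\x_t^\top}\big)$. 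By the fourth-moment assumption~\eqref{eq:fourthmoment}, $\E{\norm{\x_t}^2\x_t\x_t^\top}\preceq\infbound\Cov$, hence $B_t\preceq c\big(\eye-\gamma_t(2-\gamma_t\infbound)\Cov\big)\preceq c\,\eye$, the last inequality because $\gamma_t\le 1/\infbound$ forces $2-\gamma_t\infbound\ge 1>0$ and $\Cov\succeq 0$. In particular $\norm{B_{t-1}}_2\le\norm{\w_0-\ws}^2$ for all $t$.

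\textbf{Stage 2 (recursion).} Next I would expand the one-step map over the fresh sample $\x_t$, which is independent of $B_{t-1}$: since $\E{\x_t\x_t^\top}=\Cov$, $B_t = B_{t-1} - \gamma_t(\Cov B_{t-1}+B_{t-1}\Cov) + \gamma_t^2\,\E{\x_t\x_t^\top B_{t-1}\x_t\x_t^\top}$. Adding and subtracting $\gamma_t^2\Cov B_{t-1}\Cov$ rewrites this as $B_t = (\eye-\gamma_t\Cov)B_{t-1}(\eye-\gamma_t\Cov) - \gamma_t^2\Cov B_{t-1}\Cov + \gamma_t^2\,\E{\x_t\x_t^\top B_{t-1}\x_t\x_t^\top}$. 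Because $B_{t-1}\succeq 0$, the term $\gamma_t^2\Cov B_{t-1}\Cov$ is PSD and can be dropped. For the last term, for any $v$, $v^\top\x_t\x_t^\top B_{t-1}\x_t\x_t^\top v = (\x_t^\top v)^2(\x_t^\top B_{t-1}\x_t)\le \norm{B_{t-1}}_2(\x_t^\top v)^2\norm{\x_t}^2$, so $\E{\x_t\x_t^\top B_{t-1}\x_t\x_t^\top}\preceq \norm{B_{t-1}}_2\,\E{\norm{\x_t}^2\x_t\x_t^\top}\preceq\norm{B_{t-1}}_2\,\infbound\,\Cov$ by~\eqref{eq:fourthmoment}. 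Substituting the Stage~1 bound $\norm{B_{t-1}}_2\le\norm{\w_0-\ws}^2$ yields $B_t\preceq (\eye-\gamma_t\Cov)B_{t-1}(\eye-\gamma_t\Cov) + \gamma_t^2\infbound\norm{\w_0-\ws}^2\,\Cov$, which is the claim.

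\textbf{Main obstacle.} There is no deep difficulty here; the only point requiring care is that $\gamma_t\le 1/\infbound$ does \emph{not} make $\eye-\gamma_t\x_t\x_t^\top$ a contraction pathwise (that would need $\gamma_t\le 1/\norm{\x_t}^2$), so the uniform spectral bound on $B_{t-1}$ in Stage~1 genuinely has to be obtained in expectation through the fourth-moment inequality~\eqref{eq:fourthmoment}, exactly paralleling the companion Lemma~\ref{lem:CovarBound} for the variance process.
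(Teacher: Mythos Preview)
Your proof is correct and follows essentially the same approach as the paper's: both expand the one-step map, drop the PSD term $\gamma_t^2\Cov B_{t-1}\Cov$, bound $\E{\x_t\x_t^\top B_{t-1}\x_t\x_t^\top}\preceq\norm{B_{t-1}}_2\,\infbound\,\Cov$ via the fourth-moment assumption, and then invoke a uniform bound $\norm{B_{t-1}}_2\le\norm{\w_0-\ws}^2$. The only cosmetic difference is in obtaining that uniform bound: the paper controls the scalar $\E{\twonorm{\w_{t-1}-\ws}^2}=\text{tr}(B_{t-1})$ and shows it is non-increasing, whereas you run a direct PSD induction $B_t\preceq\norm{\w_0-\ws}^2\eye$; both are equally valid and yield the same estimate.
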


\begin{proof}
	The result follows owing to the following computations:
	\begin{align*}
	B_t &= \E{(\w_t-\ws)(\w_t-\ws)^\top}\nonumber\\
	&= \E{(\eye-\gamma_t\x_t\x_t^\top) B_{t-1} (\eye-\gamma_t\x_t\x_t^\top)}\nonumber \\
	&\preceq (\eye-\gamma_t\Cov)B_{t-1}(\eye-\gamma_t\Cov)+\gamma_t^2\E{( \x_t^\top B_{t-1}\x_t)\x_t\x_t^\top}\nonumber\\
	&\preceq (\eye-\gamma_t\Cov)B_{t-1}(\eye-\gamma_t\Cov)+\gamma_t^2 \E{\twonorm{B_{t-1} }}\infbound \Cov\\
	&\preceq (\eye-\gamma_t\Cov)B_{t-1}(\eye-\gamma_t\Cov)+\gamma_t^2 \E{\twonorm{\w_{t-1}-\ws}^2}\infbound \Cov\\
	&\preceq (\eye-\gamma_t\Cov)B_{t-1}(\eye-\gamma_t\Cov)+\gamma_t^2 \E{\twonorm{\w_{0}-\ws}^2}\infbound \Cov,
	\end{align*}
	with the last inequality holding true if the squared distance to the optimum doesn't grow as a part of the recursion. We prove that this indeed is the case below:
	\begin{align*}
	\E{\twonorm{\w_{t-1}-\ws}^2} &= \E{\twonorm{\w_{t-2}-\gamma_{t-1}\x_{t-1}\x_{t-1}^\top -\ws}^2}\nonumber\\
	&\leq\E{\twonorm{\w_{t-2}-\ws}^2}-2\gamma_{t-1}\E{\norm{\w_{t-2}-\ws}^2_\Cov}+\gamma_{t-1}^2\infbound \E{\norm{\w_{t-2}-\ws}^2_\Cov}\\
	&\leq \E{\twonorm{\w_{t-2}-\ws}^2}-\gamma_{t-1}\E{\norm{\w_{t-2}-\ws}^2_\Cov}\nonumber\\
	&\leq \E{\twonorm{\w_{t-2}-\ws}^2}.
	\end{align*}
	Recursively applying the above argument yields the desired result.
\end{proof}
{\bf Note:} This result implies that the bias error (in the smooth non-strongly convex case of the least squares regression with multiplicative noise) can be bounded by employing a similar lemma as that of the variance, where one can look at the quantity $\infbound\cdot\|\w_0-\ws\|^2_2$ as the analog of the variance $\sigma^2$ that drives the process.

\begin{lemma}\label{lem:lbreduction}
	[{\bf Lower bounds on the additive noise oracle imply ones for the multiplicative noise oracle}] Under the assumption that the covariance of noise $\Sigma=\sigma^2\Cov$, the following statement holds. Let $V_t$ be the (expected) covariance of the variance error. Then, the recursion that connects $V_{t+1}$ to $V_t$ can be expressed as:
	\begin{align*}
	V_{t+1} = \E{(\eye-\eta_t \x_t\x_t^\top) V_t (\eye-\eta_t\x_t\x_t^\top)} + \eta_t^2 \sigma^2\Cov
	\end{align*}
	Then,
	\begin{align*}
	V_{t+1} \succeq (\eye-\eta_t \Cov) V_t (\eye-\eta_t\Cov) + \eta_t^2 \sigma^2\Cov
	\end{align*}	
\end{lemma}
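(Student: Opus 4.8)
The plan is to prove the inequality by expanding the exact recursion, cancelling the terms it shares with the claimed lower bound, and thereby reducing the whole statement to a single fourth-moment matrix inequality. Here $V_t$ is a fixed (deterministic) PSD matrix and $\x_t$ is a fresh sample with $\E{\x_t\x_t^\top}=\Cov$, so linearity of expectation gives
\begin{align*}
\E{(\eye-\eta_t \x_t\x_t^\top) V_t (\eye-\eta_t\x_t\x_t^\top)}
&= V_t - \eta_t\Cov V_t - \eta_t V_t \Cov + \eta_t^2\,\E{\x_t\x_t^\top V_t \x_t\x_t^\top}, \\
(\eye-\eta_t\Cov)V_t(\eye-\eta_t\Cov)
&= V_t - \eta_t\Cov V_t - \eta_t V_t\Cov + \eta_t^2\,\Cov V_t \Cov .
\end{align*}
The additive $\eta_t^2\sigma^2\Cov$ summand is identical on both sides of the claimed inequality (this is exactly where the lemma uses the identity $\Sigma=\sigma^2\Cov$: the noise term equals $\eta_t^2\Sigma = \eta_t^2\sigma^2\Cov$ exactly, with no slack). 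Subtracting, the statement $V_{t+1}\succeq(\eye-\eta_t\Cov)V_t(\eye-\eta_t\Cov)+\eta_t^2\sigma^2\Cov$ is therefore equivalent to
\begin{align*}
\E{\x_t\x_t^\top V_t \x_t\x_t^\top} \succeq \Cov\, V_t\, \Cov .
\end{align*}

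To establish this last inequality I would use the fact that $V_t$, being a covariance, is PSD and hence has a symmetric square root $V_t^{1/2}$. Set $Y := \x_t\x_t^\top V_t^{1/2}$, viewed as a random element of $\R^{d\times d}$. Then $YY^\top = \x_t\x_t^\top V_t \x_t\x_t^\top$ (using symmetry of $\x_t\x_t^\top$), so $\E{YY^\top} = \E{\x_t\x_t^\top V_t \x_t\x_t^\top}$, while $\E{Y} = \Cov V_t^{1/2}$ and hence $\E{Y}\E{Y}^\top = \Cov V_t \Cov$. The claim is then immediate from
\begin{align*}
\E{YY^\top} - \E{Y}\E{Y}^\top = \E{(Y-\E{Y})(Y-\E{Y})^\top} \succeq 0 ,
\end{align*}
since the right-hand matrix is an expectation of matrices of the form $MM^\top$. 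Equivalently, testing against an arbitrary $u$ with $Z:=\x_t(\x_t^\top u)$ (so $\E{Z}=\Cov u$): $u^\top\E{\x_t\x_t^\top V_t\x_t\x_t^\top}u = \E{Z^\top V_t Z} \ge \E{Z}^\top V_t \E{Z} = u^\top\Cov V_t\Cov u$ by Jensen's inequality applied to the convex quadratic $z\mapsto z^\top V_t z$.

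There is no real obstacle here; the only things to be careful about are bookkeeping points: the lemma is stated under the exact identity $\Sigma=\sigma^2\Cov$ rather than the inequality $\Sigma\preceq\sigma^2\Cov$ of Assumption~\ref{eq:varAtOpt}, which is what makes the noise summands cancel exactly, and $V_t\succeq 0$ is automatic from $V_t=\E{(\w_t^{\text{var}}-\ws)\otimes(\w_t^{\text{var}}-\ws)}$. The crux is recognizing that the statement collapses to $\E{\x\x^\top V \x\x^\top}\succeq\Cov V\Cov$; this is elementary and, unlike the companion upper bound $\E{\x\x^\top V \x\x^\top}\preceq\infbound\,\norm{V}_2\,\Cov$ used via Assumption~\ref{eq:fourthmoment} in Lemmas~\ref{lem:varUB} and~\ref{lem:biasUB}, it requires no moment assumption on $\x$ beyond the existence of $\Cov$ — it is precisely the ``mirror image'' of that bound and explains why lower bounds transfer from the additive-noise oracle to the multiplicative-noise oracle.
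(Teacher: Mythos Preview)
Your proof is correct and is essentially the same as the paper's. The paper writes the key step slightly more compactly as the identity
\[
\E{(\eye-\eta_t\x_t\x_t^\top)V_t(\eye-\eta_t\x_t\x_t^\top)} = (\eye-\eta_t\Cov)V_t(\eye-\eta_t\Cov) + \eta_t^2\,\E{(\x_t\x_t^\top-\Cov)V_t(\x_t\x_t^\top-\Cov)},
\]
and then drops the last term since it is PSD; this is exactly your inequality $\E{\x_t\x_t^\top V_t\x_t\x_t^\top}\succeq\Cov V_t\Cov$ (indeed your $Y-\E{Y}=(\x_t\x_t^\top-\Cov)V_t^{1/2}$, so $(Y-\E{Y})(Y-\E{Y})^\top=(\x_t\x_t^\top-\Cov)V_t(\x_t\x_t^\top-\Cov)$).
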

\begin{proof}
	Let us consider firstly, the setting of (bounded) additive noise. Here, we have:
	\begin{align*}
	\hat{\nabla f}(\w_t) = \Cov (\w_t-\ws) + \zeta_t, \text{ with } \E{\zeta_t|\w_t} = 0 \text{, and } \E{\zeta_t\zeta_t^\top|\w_t} = \sigma^2\Cov. 
	\end{align*}
	Then, updates leading upto time $t+1$ can be written as:
	\begin{align*}
	\w_{t+1}-\ws = \prod_{\tau=1}^{t+1} (\eye-\eta_\tau\Cov) (\w_0-\ws) + \sum_{\tau'=1}^{t+1} \eta_{\tau'}\prod_{\tau=\tau'+1}^{t+1} (\eye-\eta_\tau\Cov) \zeta_{\tau'}
	\end{align*}
	This implies the covariance of the variance error is:
	\begin{align*}
	\tilde{V}_{t+1} &= \E{\left(\sum_{\tau'=1}^{t+1}\eta_{\tau'} \prod_{\tau=\tau'+1}^{t+1} (\eye-\eta_\tau\Cov) \zeta_{\tau'}\right)\otimes \left(\sum_{\tau^{''}=1}^{t+1}\eta_{\tau^{''}} \prod_{\tau=\tau^{''}+1}^{t+1} (\eye-\eta_\tau\Cov) \zeta_{\tau^{''}}\right)}\nonumber\\
	&= \sum_{\tau'=1}^{t+1}\eta_{\tau'}^2\E{\prod_{\tau=\tau'+1}^{t+1} (\eye-\eta_\tau\Cov) \zeta_{\tau'}\otimes\zeta_{\tau'} \prod_{\tau=t+1}^{\tau'+1} (\eye-\eta_\tau\Cov)}\\
	&= (\eye-\eta_{t+1}\Cov)V_{t}(\eye-\eta_{t+1}\Cov) + \eta_{t+1}^2\sigma^2\Cov.
	\end{align*}
	Now, let us consider the statement of the lemma:
	\begin{align*}
	V_{t+1} &= \E{(\eye-\eta_{t+1} \x_{t+1}\x_{t+1}^\top) V_t (\eye-\eta_{t+1}\x_{t+1}\x_{t+1}^\top)} + \eta_{t+1}^2 \sigma^2\Cov\\
	&= (\eye-\eta_{t+1} \Cov) V_t (\eye-\eta_{t+1}\Cov) + \eta_{t+1}^2\E{(\x_{t+1}\x_{t+1}^\top-\Cov)V_{t}(\x_{t+1}\x_{t+1}^\top-\Cov)} + \eta_{t+1}^2 \sigma^2\Cov\\
	&\succeq (\eye-\eta_{t+1} \Cov) V_t (\eye-\eta_{t+1}\Cov) + \eta_t^2\sigma^2\Cov.
	\end{align*}
	Unrolling the above argument and straightforward induction, we see that $V_{t+1}\succeq \tilde{V}_{t+1}$, implying that the process driven by the multiplicative noise oracle can be lower bounded (in a PSD sense) by one that employs deterministic gradients with additive noise.
\end{proof}

\section{Proofs of results in Section~\ref{sec:poly}}
\begin{theorem}\label{thm:poly-additive}
	Consider the additive noise oracle setting, where, we have access to stochastic gradients satisfying:
	\begin{align*}
	\widehat{\nabla f}(\w) = \nabla f(\w) + \zeta = \Cov (\w-\ws) + \zeta,
	\end{align*}
	where, 
	\begin{align*}
	\E{\zeta|\w} = 0 \text{, and, } \E{\zeta\zeta^\top|\w} = {\sigma}^2\Cov
	\end{align*}
	
	The following lower bounds hold on the final iterate of a Stochastic Gradient procedure with access to the above stochastic gradients when using polynomially decaying stepsizes.
	
	\noindent\textbf{Strongly convex case}: Suppose $\mu > 0$. For any condition number $\kappa$, there exists a problem instance with initial suboptimality $f(\w_0) - f(\ws) \le \sigma^2 d$ such that, for any $T \ge \kappa^{\frac{4}{3}}$, and for all $a,b\geq 0$ and $0.5 \leq \alpha \leq 1$, and for the learning rate scheme $\eta_t = \frac{a}{b+t^{\alpha}}$, we have
	\begin{align*}
	\E{f(\w_T)} -f(\ws) \geq \exp\left(-\frac{T}{\kappa\log T}\right)\left(f(\w_0)-f(\ws)\right) + \frac{\sigma^2 d}{{64}} \cdot \frac{\kappa}{T} .
	\end{align*}
	\textbf{Smooth case}: For any fixed $T> 1$, there exists a problem instance such that, for all $a,b\geq 0$ and $0.5 \leq \alpha \leq 1$, and for the learning rate scheme $\eta_t = \frac{a}{b+t^{\alpha}}$, we have
	\begin{align*}
	\E{f(\w_T)} -f(\ws) \geq \left( L \cdot \norm{\w_{0} - \ws}^2 + \sigma^2 d \right) \cdot \frac{1}{\sqrt{T} \log T}.
	\end{align*}
\end{theorem}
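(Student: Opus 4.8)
The plan is as follows. First, by Lemma~\ref{lem:lbreduction} (together with the analogous PSD domination of the bias recursion by its deterministic‑Hessian counterpart, which follows from the same Jensen‑type argument), it suffices to prove the two lower bounds in the additive‑noise oracle model as stated; the reduction then transfers them to the multiplicative oracle of equation~\eqref{eq:sgOracle}, giving Theorem~\ref{thm:poly}. In the additive model the excess risk is \emph{exactly} the bias--variance sum, and diagonalizing $\Cov$ turns everything into scalar products: writing $\lambda_1,\dots,\lambda_d$ for the eigenvalues of $\Cov$ and $c_i:=(\w_0-\ws)_i^2$ in the eigenbasis,
\begin{align*}
  \E{f(\w_T)} - f(\ws) \;=\; \tfrac12\sum_i \lambda_i\, c_i \prod_{t=1}^{T}(1-\eta_t\lambda_i)^2 \;+\; \tfrac{\sigma^2}{2}\sum_i \lambda_i^2 \sum_{\tau=1}^{T}\eta_\tau^2 \prod_{t=\tau+1}^{T}(1-\eta_t\lambda_i)^2 .
\end{align*}
So the whole problem reduces to analysing scalar products $\prod(1-\eta_t\lambda)$ for the two‑parameter family $\eta_t=a/(b+t^\alpha)$.

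For the strongly convex case I would take a diagonal instance with one eigendirection at $\mu$ carrying the entire initial error (so $f(\w_0)-f(\ws)=\sigma^2 d$ sits in that coordinate) and $\Theta(d)$ eigendirections at $L:=\kappa\mu$ carrying the noise; the display above is then lower‑bounded by $\tfrac{\sigma^2 d}{2}\prod_t(1-\eta_t\mu)^2$ plus a constant multiple of $\sigma^2 d\, L^2\sum_\tau \eta_\tau^2\prod_{t>\tau}(1-\eta_tL)^2$. After disposing of the regime where $\eta_1$ is a constant multiple of $1/L$ (a handful of such steps either diverge the $L$‑block, making the target trivially dominated, or already push the $L$‑variance to $\gtrsim\sigma^2 d$), we may assume $\eta_t\lambda\le\tfrac12$ throughout, so $-\log(1-\eta_t\lambda)^2\le 4\eta_t\lambda$. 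Let $S:=\sum_{t\le T}\eta_t$. If $S\le T/(4L\log T)$ then $\prod_t(1-\eta_t\mu)^2\ge e^{-4\mu S}\ge \exp(-T/(\kappa\log T))$, which already yields the bias term; if moreover this product exceeds $\exp(-T/(\kappa\log T))+\kappa/(64T)$ we are done outright, and otherwise $e^{-4\mu S}$ is bounded below a universal constant, forcing $S\ge c/\mu$. In the complementary ``large budget'' case $S\ge c/\mu$ I would use two elementary facts about polynomial stepsizes: (i) $\eta_t\le \eta_T(T/t)^\alpha$, hence $S\lesssim \eta_T T\log T$, so $\eta_T\gtrsim S/(T\log T)\gtrsim 1/(\mu T\log T)$ is not too small; and (ii) over the last $w:=\min(\lfloor T/2\rfloor,\lfloor 1/(8\eta_T L)\rfloor)$ steps the stepsizes vary by at most a factor $2$, so $\prod_{t>\tau}(1-\eta_tL)^2$ stays above a universal constant on that window (or, if $\eta_TL\gtrsim 1$, the single term $\tau=T$ already gives $\gtrsim\sigma^2 d$). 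Hence the $L$‑variance is $\gtrsim \sigma^2 d\, L^2\eta_T^2\, w\asymp \sigma^2 d\, L\eta_T\min(1,\eta_TLT)$, and substituting the bound on $\eta_T$ (and using $T\ge\kappa^{4/3}$ to absorb the bias term into a constant) gives $\gtrsim \sigma^2 d\,\kappa/T$, finishing this case.

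The smooth case uses the same skeleton but a $T$‑dependent instance: one eigendirection at the top eigenvalue $L$ carrying the initial error (producing the $L\norm{\w_0-\ws}^2$ part) and one eigendirection at a carefully chosen smaller eigenvalue $\lambda_B=\Theta(L/\log T)$ carrying the noise. Now the dichotomy reads: if $S$ is small then $\prod_t(1-\eta_tL)^2$ cannot drop below $1/(\sqrt T\log T)$, giving the bias part, unless the $L$‑component is annihilated essentially in one step (which requires $\eta_1L\approx 2$, hence $\eta_t\lambda_B=\Theta(1/\log T)$ for the first many steps and therefore a large $\lambda_B$‑variance); and if $S$ is large then the last‑window estimate on the $\lambda_B$‑block — with $\lambda_B$ tuned so that its mixing window spans a constant power of $T$ — forces the $\lambda_B$‑variance to be $\gtrsim \sigma^2 d/(\sqrt T\log T)$, the $\sqrt T$ emerging after optimizing the construction over $\alpha\in[\tfrac12,1]$.

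The main obstacle, I expect, is twofold. First, every estimate must be \emph{uniform} over the entire family $\{a/(b+t^\alpha): a,b\ge0,\ \alpha\in[\tfrac12,1]\}$, which is exactly what facts (i)--(ii) above are designed to supply (slow variation of $t\mapsto a/(b+t^\alpha)$, and the comparison $S\asymp \eta_T T\cdot\mathrm{polylog}$). Second — and this is the more delicate point — because the claimed bound is a \emph{sum} of a bias term and a variance term whereas in the additive model the excess risk is \emph{exactly} bias $+$ variance, one cannot simply lower‑bound the two terms separately in all regimes: the moderate‑budget window $c/\mu\lesssim S\lesssim T/(L\log T)$ must be handled by a joint argument showing that whatever contraction the bias has undergone is paid for by a commensurate variance. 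Getting the thresholds and universal constants to line up across all three regimes (and, in the smooth case, choosing $\lambda_B$ and the extremal $\alpha$) is where the real work lies; the per‑eigenvalue scalar calculus itself is routine.
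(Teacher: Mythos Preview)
Your high–level picture is right: build a two–scale diagonal instance, and exploit the tension that the large–eigenvalue block wants $\eta_T$ tiny (to kill its variance) while the small–eigenvalue block wants $\sum_t\eta_t$ large (to kill its bias). The paper's proof does exactly this, but the way it cashes out the tension differs from your plan in two places, and as written your plan loses a $\log T$ factor in the strongly convex variance term (yielding $\sigma^2 d\,\kappa/(T\log T)$ rather than the claimed $\sigma^2 d\,\kappa/(64T)$).

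First, the paper does not use a last–window estimate for the $L$–block. Instead it places nonzero initial error in \emph{both} blocks and proves a one–line ``stationary point'' lemma (Claim~\ref{clm1}): if $v_0^{(i)}\ge s(\eta_t,\lambda_i)$ where $s(\eta,\lambda)=\lambda\sigma^2\eta^2/(1-(1-\eta\lambda)^2)\approx\sigma^2\eta/2$, then $v_T^{(i)}\ge s(\eta_T,\lambda_i)$. This instantly gives the function value $\gtrsim d\,\lambda_{\max}\,\sigma^2\eta_T$, with no restriction of the form $\eta_T L T\gtrsim 1$ that your window argument needs. Second, and this is where your $\log T$ loss comes from, the paper thresholds on $\eta_T$ rather than on $S=\sum_t\eta_t$: either $\eta_T\gtrsim 1/T$ (done by the stationary–point bound), or $\eta_T\lesssim 1/T$, which together with $\eta_t=a/(b+t^\alpha)$ forces $a\lesssim T^{\alpha-1}$ and hence $\sum_t\eta_t\le\sum_t a\,t^{-\alpha}\le\sum_t T^{\alpha-1}t^{-\alpha}\le\sum_t t^{-1}\lesssim\log T$; then the small–eigenvalue bias is $\ge\sigma^2/T^{1/4}$, and $T\ge\kappa^{4/3}$ converts this to $\ge\kappa\sigma^2/T$. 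Your route instead uses the reverse implication $S\ge c/\mu\Rightarrow\eta_T\gtrsim S/(T\log T)$, which is tight only at $\alpha=1$ and gives $L\eta_T\gtrsim\kappa/(T\log T)$ --- one $\log T$ short. The fix is exactly the paper's: threshold on $\eta_T$, and use ``$\eta_T$ small $\Rightarrow\sum\eta_t$ small'' (the forward direction), which is log–free.

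For the smooth case the paper's construction differs from yours: it takes the small eigenvalue at scale $1/\sqrt{T}$ (not $L/\log T$), reruns the identical argument, and reads off the $1/(\sqrt{T}\log T)$ bound directly; your $\lambda_B=\Theta(L/\log T)$ choice would require a separate verification. Finally, note that with your instance (initial error only in the $\mu$–direction) the stationary–point lemma is unavailable for the $L$–block since $v_0=0$ there; the paper seeds both blocks precisely so that Claim~\ref{clm1} applies.
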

\begin{proof}
	\textbf{Strongly convex case}:
	The problem instance is simple. Consider the case where the inputs are such that in every example $\x$, there is only one co-ordinate that is non-zero. Furthermore, let each co-ordinate be Gaussian with mean zero and variance for the first $d/2$ co-ordinates be $d\kappa/3$ whereas the rest be $1$. This implies $\Cov = \begin{bmatrix}
	d\kappa/3 &  &  & \\  & \ddots &  & \\  & & {1}{} &  \\  & & & \ddots \end{bmatrix}$, where the first $\frac{d}{2}$ diagonal entries are equal to $\kappa/3$ and the remaining $\frac{d}{2}$ diagonal entries are equal to ${1}{}$ and all the off diagonal entries are equal to zero. Furthermore, consider the noise to be additive (and independent of $\x$) with mean zero. Finally, let us denote by $\var{i}_t \defeq \E{\left(\w^{(i)}_t - \left(\ws\right)^{(i)}\right)^2}$ the variance in the $i^\textrm{th}$ direction at time step $t$. Let the initialization be such that $\var{i}_0 = 3\sigma^2/\kappa$ for $i = 1,2,...,d/2$ and $\var{i}_0 = \sigma^2$ for $i = d/2+1,...,d$. 
	This means that the variances for all directions with eigenvalue $\kappa$ remain equal as $t$ progresses and similarly for all directions with eigenvalue $1$. We have
	\begin{align*}
	\var{1}_T &\eqdef \E{\left(\w^{(1)}_T - \left(\ws\right)^{(1)}\right)^2} = \prod_{j = 1}^{T} \left(1-\eta_j \kappa/3\right)^2 \var{1}_0 + \kappa \sigma^2/3 \sum_{j=1}^{T} \eta_j^2 \prod_{i=j+1}^{T} \left(1-\eta_i \kappa/3 \right)^2 \mbox{ and } \\
	\var{d}_T &\eqdef \E{\left(\w^{(d)}_T - \left(\ws\right)^{(d)}\right)^2} = \prod_{j = 1}^{T} \left(1-{\eta_j}{}\right)^2 \var{d}_0 + {\sigma^2}{} \sum_{j=1}^{T} \eta_j^2 \prod_{i=j+1}^{T} \left(1-{\eta_i}{}\right)^2.
	\end{align*}

	We consider a recursion for $\var{i}_t$ with eigenvalue $\lambda_i$ ($\kappa$ or ${1}{}$). By the design of the algorithm, we know
	$$
	\var{i}_{t+1} = (1-\eta_t\lambda_i)^2 \var{i}_t + \lambda_i \sigma^2 \eta_t^2.
	$$
	Let $s(\eta,\lambda) = \frac{\lambda\sigma^2\eta^2}{1 - (1-\eta\lambda)^2}$ be the solution to the stationary point equation $x = (1-\eta\lambda)^2 + \lambda\sigma^2\eta^2$. Intuitively if we keep using the same learning rate $\eta$, then $\var{i}_t$ is going to converge to $s(\eta,\lambda_i)$. Also note that $s(\eta,\lambda) \approx \sigma^2\eta/2$ when $\eta\lambda \ll 1$. 
	
	We first prove the following claim showing that eventually the variance in direction $i$ is going to be at least $s(\eta_{T}, \lambda_i)$. 
	
	\begin{claim}\label{clm1}
		Suppose $s(\eta_{t}, \lambda_i) \le \var{i}_0$, then $\var{i}_t \ge s(\eta_t,\lambda_i)$. 
	\end{claim}
	
	\begin{proof}
		We can rewrite the recursion as
		$$
		\var{i}_{t+1} - s(\eta_t,\lambda_i) = (1-\eta_t\lambda_i)^2 (\var{i}_t - s(\eta_t,\lambda_i)).
		$$
		In this form, it is easy to see that the iteration is a contraction towards $s(\eta_t,\lambda_i)$. Further, $\var{i}_{t+1} - s(\eta_t,\lambda_i)$ and $\var{i}_{t} - s(\eta_t,\lambda_i)$ have the same sign. In particular, let $t_0$ be the first time such that $s(\eta_{t}, \lambda_i) \le \var{i}_0$ (note that $\eta_t$ is monotone and so is $s(\eta_{t}, \lambda_i)$), it is easy to see that $\var{i}_t \ge \var{i}_0$ when $t \le t_0$. Therefore we know $\var{i}_{t_0} \ge s(\eta_{t_0},\lambda_i)$, by the recursion this implies $\var{i}_{t_0+1} \ge s(\eta_{t_0},\lambda_i) \ge s(\eta_{t_0+1},\lambda_i)$. The claim then follows from a simple induction.
	\end{proof}
	
	If $s(\eta_T, \lambda_i) \ge \var{i}_0$ for $i = 1$ or $i = d$ then the error is at least $\sigma^2d/2 \ge \kappa\sigma^2d/T$ and we are done. Therefore we must have $s(\eta_T, \kappa) \le \var{1}_0 = 3\sigma^2/\kappa$, and by Claim~\ref{clm1} we know $\var{1}_T \ge s(\eta_T, \kappa) \ge \sigma^2\eta_T/2$. The function value is at least 
	$$
	\E{f(\w_T)}-f(\ws) \ge \frac{d}{2}\cdot \kappa \cdot \var{1}_T \ge \frac{d \kappa \sigma^2\eta_T}{12}.
	$$
	To make sure $\E{f(\w_T)}-f(\ws) \le \frac{d\kappa\sigma^2}{64T}$ we must have $\eta_T \le \frac{1}{6T}$. Next we will show that when this happens, $\var{d}_T$ must be large so the function value is still large.
	
	We will consider two cases, in the first case, $b \ge T^{\alpha}$. Since $\frac{1}{16T} \ge \eta_T = \frac{a}{b+T^\alpha} \ge \frac{a}{2b}$, we have $\frac{a}{b} \le \frac{1}{8T}$. Therefore $\var{d}_T \ge (1-\frac{a}{b})^{2T} \var{d}_0 \ge \sigma^2/2$, so the function value is at least $\E{f(\w_t)}\ge \frac{d}{2} \cdot \var{d}_T \ge \frac{d\sigma^2}{4 } \ge \frac{\kappa d\sigma^2}{T}$, and we are done.
	
	In the second case, $b < T^\alpha$. Since $\frac{1}{16T} \ge \eta_T = \frac{a}{b+T^\alpha} \ge \frac{a}{2T^{\alpha}}$, we have $a \le \frac{1}{8} T^{\alpha-1}$. The sum of learning rates satisfy
	
	$$\sum_{i=1}^T \eta_i \le \sum_{i=1}^T\frac{a}{i^\alpha} \le \sum_{i=1}^T \frac{1}{8} i^{-1} \leq 0.125 \log T.$$

	Here the second inequality uses the fact that $T^{\alpha - 1}i^{-\alpha} \le i^{-1}$ when $i \le T$. Similarly, we also know $\sum_{i=1}^T \eta_i^2 \le \sum_{i=1}^T (0.125)^2 i^{-2} \le \pi^2 /384$. Using the approximation $(1-u)^2 \ge \exp(-2u-4u^2)$ for $u < 1/4$, we get  $\var{d}_T \ge \exp(-2\sum_{i=1}^T {\eta_i}{}-4\sum_{i=1}^T{\eta_i^2}{}) \var{d}_0 \ge \sigma^2/5{T^{\frac{1}{4}}}$, so the function value is at least $\E{f(\w_t)}\ge \frac{d}{2}\cdot \var{d}_T \ge \frac{d\sigma^2}{10{T^{\frac{1}{4}}}} \ge \frac{\kappa d\sigma^2}{32T}$. This concludes the second case and proves the strongly convex part of the theorem.
	
	\textbf{Smooth case}: The proof of this part is quite similar to that of the strongly convex case above but with a subtle change in the initialization. In order to make this clear, we will do the proof from scratch with out borrowing anything from the previous argument.
	Let $\Cov = \begin{bmatrix}
	1 &  &  & \\  & \ddots &  & \\  & & \frac{d}{\kappa} &  \\  & & & \ddots \end{bmatrix}$, where the first $\frac{d}{2}$ diagonal entries are equal to $1$ and the remaining $\frac{d}{2}$ diagonal entries are equal to $\frac{d}{\kappa}$ and all the off diagonal entries are equal to zero. We will use $\kappa = \frac{1}{\sqrt{T}}$. Let us denote by $\var{i}_t \defeq \E{\left(\w^{(i)}_t - \left(\ws\right)^{(i)}\right)^2}$ the variance in the $i^\textrm{th}$ direction at time step $t$. Let the initialization be such that $\var{i}_0 = \sigma^2/\kappa$ for $i = 1,2,...,d/2$ and $\var{i}_0 = \sigma^2$ for $i = d/2+1,...,d$. 
	This means that the variances for all directions with eigenvalue $\kappa$ remain equal as $t$ progresses and similarly for all directions with eigenvalue $1$. We have
	\begin{align*}
	\var{1}_T &\eqdef \E{\left(\w^{(1)}_T - \left(\ws\right)^{(1)}\right)^2} = \prod_{j = 1}^{T} \left(1-\eta_j \kappa/3\right)^2 \var{1}_0 + \kappa \sigma^2/3 \sum_{j=1}^{T} \eta_j^2 \prod_{i=j+1}^{T} \left(1-\eta_i \kappa/3 \right)^2 \mbox{ and } \\
	\var{d}_T &\eqdef \E{\left(\w^{(d)}_T - \left(\ws\right)^{(d)}\right)^2} = \prod_{j = 1}^{T} \left(1-{\eta_j}{}\right)^2 \var{d}_0 + {\sigma^2}{} \sum_{j=1}^{T} \eta_j^2 \prod_{i=j+1}^{T} \left(1-{\eta_i}{}\right)^2.
	\end{align*}

	We consider a recursion for $\var{i}_t$ with eigenvalue $\lambda_i$ (1 or $\frac{1}{\kappa}$). By the design of the algorithm, we know
	$$
	\var{i}_{t+1} = (1-\eta_t\lambda_i)^2 \var{i}_t + \lambda_i \sigma^2 \eta_t^2.
	$$
	Let $s(\eta,\lambda) = \frac{\lambda\sigma^2\eta^2}{1 - (1-\eta\lambda)^2}$ be the solution to the stationary point equation $x = (1-\eta\lambda)^2 + \lambda\sigma^2\eta^2$. Intuitively if we keep using the same learning rate $\eta$, then $\var{i}_t$ is going to converge to $s(\eta,\lambda_i)$. Also note that $s(\eta,\lambda) \approx \sigma^2\eta/2$ when $\eta\lambda \ll 1$. 
	
	%
	%
	%
	If $s(\eta_T, \lambda_i) \ge \var{i}_0$ for $i = 1$ or $i = d$ then the error is at least $\sigma^2d/2\kappa \ge \kappa\sigma^2d/T$ and we are done. Therefore we must have $s(\eta_T, \kappa) \le \var{1}_0 = 3\sigma^2/\kappa$, and by Claim~\ref{clm1} we know $\var{1}_T \ge s(\eta_T, \kappa) \ge \sigma^2\eta_T/2$. The function value is at least 
	$$
	\E{f(\w_T)}-f(\ws) \ge \frac{d}{2}\cdot \var{1}_T \ge \frac{d\sigma^2\eta_T}{4}.
	$$
	To make sure $\E{f(\w_T)}-f(\ws) \le \frac{d\kappa\sigma^2}{64T\log T}$ we must have $\eta_T \le \frac{\kappa}{16T\log T}$. Next we will show that when this happens, $\var{d}_T$ must be large so the function value is still large.
	
	We will consider two cases, in the first case, $b \ge T^{\alpha}$. Since $\frac{\kappa}{16T\log T} \ge \eta_T = \frac{a}{b+T^\alpha} \ge \frac{a}{2b}$, we have $\frac{a}{b} \le \frac{\kappa}{8T\log T}$. Therefore $\var{d}_T \ge (1-\frac{a}{b})^{2T} \var{d}_0 \ge \sigma^2/2$, so the function value is at least $\E{f(\w_t)}-f(\ws)\ge \frac{d}{2} \cdot \frac{1}{\kappa} \cdot \var{d}_T \ge \frac{d\sigma^2}{4 \kappa} \ge \frac{\kappa d\sigma^2}{T}$, and we are done.
	
	In the second case, $b < T^\alpha$. Since $\frac{\kappa}{16T\log T} \ge \eta_T = \frac{a}{b+T^\alpha} \ge \frac{a}{2T^{\alpha}}$, we have $a \le \frac{1}{8\log T} \kappa T^{\alpha-1}$. The sum of learning rates satisfy
	
	$$\sum_{i=1}^T \eta_i \le \sum_{i=1}^T\frac{a}{i^\alpha} \le \sum_{i=1}^T \frac{1}{8\log T} \kappa i^{-1} \leq 0.125 \kappa.$$

	Here the second inequality uses the fact that $T^{\alpha - 1}i^{-\alpha} \le i^{-1}$. Similarly, we also know $$\sum_{i=1}^T \eta_i^2 \le \sum_{i=1}^T (0.125 \kappa/\log T)^2 i^{-2} \le \pi^2 \kappa^2 /384.$$ Using the approximation $(1-u)^2 \ge \exp(-2u-4u^2)$ for $u < 1/4$, we get  $\var{d}_T \ge \exp(-2\sum_{i=1}^T \frac{\eta_i}{\kappa}-4\sum_{i=1}^T\frac{\eta_i^2}{\kappa^2}) \var{d}_0 \ge \sigma^2/5$, so the function value is at least $\E{f(\w_t)}\ge \frac{d}{2}\cdot \frac{1}{\kappa} \cdot \var{d}_T \ge \frac{d\sigma^2}{10\kappa} \ge \frac{d\sigma^2}{10\sqrt{T}}$. This concludes the second case and proves the strongly convex part of the theorem. Since $\norm{\Cov} \cdot \norm{\w_{0} - \ws}^2 = d \sigma^2$, we have
	\begin{align*}
	\E{f(\w_{T})} - f(\ws) \geq \sigma^2 d \cdot \min\left(\frac{\kappa}{T \log T}, \frac{1}{10 \sqrt{T}}\right) \geq \left( L \cdot \norm{\w_{0} - \ws}^2 + \sigma^2 d \right) \cdot \frac{1}{\sqrt{T} \log T}.
	\end{align*}
	This proves the theorem.
\end{proof}

\begin{proof}[Proof of Theorem~\ref{thm:poly}]
	The proof of theorem~\ref{thm:poly} follows straightforwardly when combining the result of lemma~\ref{lem:lbreduction} and theorem~\ref{thm:poly-additive}.	
\end{proof}

\section{Proofs of results in Section~\ref{sec:const-cut}}

\begin{theorem}\label{thm:const-cut-additive}
	Consider the additive noise oracle setting, where, we have access to stochastic gradients satisfying:
	\begin{align*}
	\widehat{\nabla f}(\w) = \nabla f(\w) + \zeta = \Cov (\w-\ws) + \zeta,
	\end{align*}
	where, 
	\begin{align*}
	\E{\zeta|\w} = 0 \text{, and, } \E{\zeta\zeta^\top|\w} \preceq \hat{\sigma}^2\Cov
	\end{align*}
	Running Algorithm~\ref{algo:StepDecay} with an initial stepsize of $\eta_1=1/\infbound$, starting from the solution, i.e. $\w_0=\ws$ allows the algorithm to obtain the following dependence on the variance error:
	\begin{align*}
	\E{f(\w_T^{\text{var}})} - f(\ws) \leq 2 \frac{d\hat{\sigma}^2\log T}{T}
	\end{align*}
\end{theorem}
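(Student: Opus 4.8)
The plan is to diagonalise the variance recursion in the eigenbasis of $\Cov$, reduce to one scalar recursion per eigenvalue, and then sum the noise accumulated over the $\log T$ epochs of Algorithm~\ref{algo:StepDecay}.

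Since $\w_0=\ws$ the variance iterate coincides with the full iterate, and its covariance $V_t$ satisfies $V_0=0$ and $V_{t+1}\preceq(\eye-\eta_t\Cov)V_t(\eye-\eta_t\Cov)+\eta_t^2\hat\sigma^2\Cov$ (immediate from $\w_{t+1}-\ws=(\eye-\eta_t\Cov)(\w_t-\ws)-\eta_t\zeta_t$ together with $\E{\zeta_t\zeta_t^\top}\preceq\hat\sigma^2\Cov$). By monotonicity of $A\mapsto(\eye-\eta\Cov)A(\eye-\eta\Cov)$ on PSD matrices it suffices to bound the reference process obtained by replacing $\preceq$ with $=$, whose iterates are polynomials in $\Cov$ and are therefore simultaneously diagonalisable with it. Writing $\lambda$ for a generic eigenvalue of $\Cov$ and $v_t(\lambda)$ for the matching diagonal entry of $V_t$, the recursion decouples as $v_{t+1}(\lambda)=(1-\eta_t\lambda)^2 v_t(\lambda)+\eta_t^2\hat\sigma^2\lambda$ with $v_0(\lambda)=0$, and the target quantity is $\E{f(\w_T^{\text{var}})}-f(\ws)=\tfrac12\langle\Cov,V_T\rangle\le\tfrac12\sum_\lambda\lambda\,v_T(\lambda)$, the sum running over the $d$ eigenvalues with multiplicity.

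Next I analyse a single epoch. The algorithm uses $L:=\log T$ epochs of length $K:=T/\log T$, with constant step $\eta_\ell=2^{1-\ell}/\infbound$ in epoch $\ell$; in particular $\eta_1=1/\infbound$, $\eta_L=2/(\infbound T)$ (using $2^{\log T}=T$), and $\eta_\ell\lambda\le\eta_1\lambda\le 1$ for every $\lambda$ because $\lambda_{\max}(\Cov)\le\infbound$. Running the scalar recursion for $K$ steps at a fixed step $\eta$ contracts the entering value by $(1-\eta\lambda)^{2K}$ and injects at most $s(\eta,\lambda):=\eta^2\hat\sigma^2\lambda/\bigl(1-(1-\eta\lambda)^2\bigr)=\eta\hat\sigma^2/(2-\eta\lambda)\le\eta\hat\sigma^2$ of fresh noise. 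Chaining this over the $L$ epochs and using $v_0(\lambda)=0$,
\[
v_T(\lambda)\;\le\;\sum_{j=1}^{L}s(\eta_j,\lambda)\prod_{m=j+1}^{L}(1-\eta_m\lambda)^{2K}\;\le\;\sum_{j=1}^{L-1}\eta_j\hat\sigma^2\exp\!\Bigl(-2K\lambda\!\!\sum_{m=j+1}^{L}\!\!\eta_m\Bigr)+\eta_L\hat\sigma^2,
\]
and since $\sum_{m=j+1}^{L}\eta_m\ge\eta_{j+1}=\eta_j/2$ the $j$-th summand is at most $\eta_j\hat\sigma^2 e^{-K\lambda\eta_j}$.

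Finally I multiply by $\lambda$ and sum. Writing $x_j:=K\lambda\eta_j$, which is a geometric sequence of ratio $2$ as $j$ decreases, the bulk term is $\sum_{j<L}\lambda\eta_j\hat\sigma^2 e^{-K\lambda\eta_j}=\tfrac{\hat\sigma^2}{K}\sum_{j<L}x_j e^{-x_j}$; since $x\mapsto xe^{-x}$ is bounded by $1/e$ and decays at least geometrically on either side of its maximum, $\sum_j x_j e^{-x_j}$ is bounded by an absolute constant (a direct estimate gives a value below $2$), so the bulk term is $O(\hat\sigma^2/K)=O(\hat\sigma^2\log T/T)$. The last-epoch term contributes $\lambda\eta_L\hat\sigma^2\le\infbound\cdot\tfrac{2}{\infbound T}\hat\sigma^2=\tfrac{2\hat\sigma^2}{T}\le\tfrac{\hat\sigma^2}{K}$. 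Hence $\lambda\,v_T(\lambda)=O(\hat\sigma^2\log T/T)$ for each eigenvalue; summing over the $d$ eigenvalues and halving yields $\E{f(\w_T^{\text{var}})}-f(\ws)\le 2d\hat\sigma^2\log T/T$, for $T$ large enough that $\log T\ge 2$ (the degenerate small-$T$ case being trivial). The only genuinely delicate point is this last step: one must keep the absolute constant in the geometric sum — together with the slack in $s(\eta,\lambda)\le\eta\hat\sigma^2$ and in the $\eta_j/2$ lower bound — tight enough to land exactly on the stated constant $2$; everything else is routine bookkeeping.
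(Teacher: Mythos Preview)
Your argument is correct and follows essentially the same route as the paper: diagonalise in the eigenbasis of $\Cov$, bound the noise injected in each constant-step epoch by the stationary level $s(\eta,\lambda)\le\eta\hat\sigma^2$, then control the sum over epochs using that the contraction from later epochs makes the terms $\lambda\eta_j e^{-\Theta(K\lambda\eta_j)}$ summable to $O(1/K)$ uniformly in $\lambda$. The paper packages this last step slightly differently: it introduces the critical index $\ell^*=\lfloor\log(\lambda K/\infbound)\rfloor$ at which $K\lambda\eta_{\ell}\approx 1$, keeps the full contraction product $\prod_{u>\ell}\exp(-K\lambda\eta_u)$ for $\ell\le\ell^*$ (reducing to the same $\sum 2^k e^{-2^k}\le 1$ you use implicitly), and bounds the tail $\ell>\ell^*$ by the raw geometric sum of step sizes. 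This explicit split is exactly what lets the paper read off the constant $2$ without the numerical estimate you flag as delicate; your unified $\sum_j x_j e^{-x_j}$ bound is the same idea but requires a short case analysis (small $x_j$ versus large $x_j$) to pin down the constant, and your ``below $2$'' is slightly optimistic though any bound $\le 3$ already suffices once you include the factor $\tfrac12$ from $f(\w)-f(\ws)=\tfrac12\langle\Cov,V_T\rangle$.
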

\begin{proof}
	The learning rate scheme is as follows. Divide the total time horizon $T$ into $\log T$ phases, each of length $\frac{T}{\log T}$. In the $\ell^\textrm{th}$ phase, the learning rate is set to be $\frac{1}{2^\ell \infbound}$. The variance in the $k^\textrm{th}$ coordinate can be bounded as
	\begin{align}
	\var{k}_T &\leq \prod_{j = 1}^{T} \left(1-\eta_j \lambda^{(k)} \right)^2 \var{k}_{0} + \lambda^{(k)} \hat\sigma^2 \sum_{j=1}^{T} \eta_j^2 \prod_{i=j+1}^{T} \left(1-\eta_i \lambda^{(k)} \right)^2 \nonumber\\
	&\leq \exp\left(-2\sum_{j = 1}^{T} \eta_j \lambda^{(k)} \right) \var{k}_0 \nonumber\\ &\qquad + \lambda^{(k)} \hat\sigma^2 \sum_{\ell=1}^{\log T} \frac{1}{2^{2\ell}(\infbound)^2} \sum_{j=1}^{T/\log T} \left(1-\frac{\lambda^{(k)}}{2^{\ell}(\infbound)} \right)^{2j} \cdot \prod_{u = \ell+1}^{\log T}\left(1-\frac{\lambda^{(k)}}{2^u \infbound}\right)^{T/\log T} \nonumber\\
	&\leq \exp\left(-\frac{2 \lambda^{(k)}}{\infbound} \cdot \frac{T}{\log T }\right) \var{k}_0 + \lambda^{(k)} \hat\sigma^2 \sum_{\ell=1}^{\log T} \frac{1}{2^{2\ell}(\infbound)^2} \cdot \frac{2^\ell \infbound}{\lambda^{(k)}} \cdot \prod_{u = \ell+1}^{\log T}\exp\left(-\frac{\lambda^{(k)}T}{2^u \infbound\log T}\right) \nonumber\\
	&\leq \exp\left(-\frac{2 \lambda^{(k)}}{\infbound} \cdot \frac{T}{\log T }\right) \var{k}_0 + \sum_{\ell=1}^{\log T} \frac{\hat\sigma^2}{2^\ell \infbound} \prod_{u = \ell+1}^{\log T}\exp\left(-\frac{\lambda^{(k)}T}{2^u \infbound\log T}\right).\label{eq:var}
	\end{align}
Let $\ell^*\eqdef \max\left(0, \floor{\log\left(\frac{\lambda^{(k)}}{\infbound} \cdot \frac{T}{\log T}\right)}\right)$. We now split the summation in the second term in~\eqref{eq:var} into two parts and bound each of them below.
\begin{align}
	&\sum_{\ell=1}^{\ell^*} \frac{\hat\sigma^2}{2^\ell \infbound} \prod_{u = \ell+1}^{\log T}\exp\left(-\frac{\lambda^{(k)}T}{2^u \infbound\log T}\right) \leq \sum_{\ell=1}^{\ell^*} \frac{\hat\sigma^2}{2^\ell \infbound} \prod_{u = \ell+1}^{\ell^*}\exp\left(-\frac{\lambda^{(k)}T}{2^u \infbound\log T}\right) \nonumber \\
	&\leq \sum_{\ell=1}^{\ell^*} \frac{\hat\sigma^2}{2^\ell \infbound} \prod_{u = \ell+1}^{\ell^*}\exp\left(-2^{\ell^*-u}\right)
	\leq \sum_{\ell=1}^{\ell^*} \frac{\hat\sigma^2}{2^\ell \infbound} \exp\left(-2^{\ell^*-\ell}\right) \nonumber \\
	&\leq \frac{\hat\sigma^2}{2^{\ell^*} \infbound} \sum_{\ell=1}^{\ell^*} 2^{\ell^* - \ell} \exp\left(-2^{\ell^*-\ell}\right)
	\leq \frac{\hat\sigma^2}{2^{\ell^*} \infbound} \leq \frac{\hat\sigma^2}{\lambda^{(k)}} \cdot \frac{\log T}{T}. \label{eq:var-1}
\end{align}
For the second part, we have
\begin{align}
	&\sum_{\ell=\ell^* + 1}^{\log T} \frac{\hat\sigma^2}{2^\ell \infbound} \prod_{u = \ell+1}^{\log T}\exp\left(-\frac{\lambda^{(k)}T}{2^u \infbound\log T}\right) \leq \sum_{\ell=\ell^* + 1}^{\log T} \frac{\hat\sigma^2}{2^\ell \infbound} \leq \sum_{\ell=\ell^* + 1}^{\log T} \frac{\hat\sigma^2}{2^{\ell^*}\infbound} \leq \frac{\hat\sigma^2}{\lambda^{(k)}} \cdot \frac{\log T}{T}. \label{eq:var-2}
\end{align}
Plugging~\eqref{eq:var-1} and~\eqref{eq:var-2} into~\eqref{eq:var}, we obtain
\begin{align*}
	\var{k}_T &\leq \exp\left(-\frac{2 \lambda^{(k)}}{\infbound} \cdot \frac{T}{\log T }\right) \var{k}_0 + \frac{2 \hat\sigma^2}{\lambda^{(k)}} \cdot \frac{\log T}{T}.
\end{align*}
The function suboptimality can now be bounded as
\begin{align}
	\E{f(\w_{T}^{\text{var}})} - f(\ws) &= \sum_{k = 1}^{d} \lambda^{(k)} \cdot \var{k}_T \nonumber \\
	&\leq \sum_{k = 1}^{d} \lambda^{(k)} \left(\exp\left(-\frac{2 \lambda^{(k)}}{\infbound} \cdot \frac{T}{\log T }\right) \var{k}_0 + \frac{2 \hat\sigma^2}{\lambda^{(k)}} \cdot \frac{\log T}{T}\right).\nonumber 
\end{align}
\begin{align*}
\E{f(\w_{T}^{\text{var}})} - f(\ws) &\leq \sum_{k = 1}^{d} \left(\frac{L \log T}{T} \var{k}_0 + {2 \hat\sigma^2} \cdot \frac{\log T}{T}\right)
= 2\left(\hat\sigma^2 d\right)\frac{\log T}{T}.
\end{align*}

\end{proof}

\begin{proof}[Proof of Theorem~\ref{thm:const-cut}]
	{\bf Smooth case:} The result follows by instantiating $\hat\sigma^2$ in theorem~\ref{thm:const-cut-additive} with $2 \sigma^2$ (lemma~\ref{lem:varUB}) and $\infbound\twonorm{\w_0-\ws}^2$ (lemma~\ref{lem:biasUB}) and using the lemma~\ref{lem:biasVarTradeoffNew} to obtain the result.
	
	{\bf Strongly convex case:} As with the smooth case, the result relies on instantiating theorem~\ref{thm:const-cut-additive} with $2\sigma^2$ (lemma~\ref{lem:varUB}) and using lemma~\ref{lem:biasDecayStronglyConvex} and then appealing to lemma~\ref{lem:biasVarTradeoffNew}.
\end{proof}

\begin{proposition}\label{thm:compl-additive}
	Consider the additive noise oracle setting, where, we have access to stochastic gradients satisfying:
	\begin{align*}
	\widehat{\nabla f}(\w) = \nabla f(\w) + \zeta = \Cov (\w-\ws) + \zeta,
	\end{align*}
	where, 
	\begin{align*}
	\E{\zeta|\w} = 0 \text{, and, } \E{\zeta\zeta^\top|\w} \leq {\sigma}^2\Cov
	\end{align*}
	There exists a stepsize scheme with which, by starting at the solution (i.e. $\w_0=\ws$) the algorithm obtains the following dependence on the variance error, under the assumption that $\mu>0$ and $\kappa\geq 2$.
	\begin{align*}
	\E{f(\w_T^{\text{var}})} - f(\ws) \leq {50\log_2 \kappa}{} \cdot \frac{\sigma^2 d}{T}.
	\end{align*}
\end{proposition}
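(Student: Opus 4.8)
The plan is to exhibit an explicit three-phase stepsize schedule and to track the variance error coordinate-wise. Since $\Cov$ is WLOG diagonal and the additive-noise recursion gives $V_{t+1}\preceq(\eye-\eta_t\Cov)V_t(\eye-\eta_t\Cov)+\eta_t^2\sigma^2\Cov$ (as in the proof of Lemma~\ref{lem:lbreduction}), which is $\preceq$-monotone and preserves diagonality from $V_0=0$, it suffices to analyse, for each eigenvalue $\lambda^{(k)}$ of $\Cov$, the scalar recursion $\var{k}_{t+1}=(1-\eta_t\lambda^{(k)})^2\var{k}_t+\eta_t^2\lambda^{(k)}\sigma^2$ started at $\var{k}_0=0$, and then use $\E{f(\w_T^{\text{var}})}-f(\ws)=\sum_{k=1}^d\lambda^{(k)}\var{k}_T$. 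The schedule splits $\{1,\dots,T\}$ into three blocks of length $T/3$: the constant stepsize $1/L$ on the first; the polynomial-decay stepsize $\eta_{T/3+t}=\tfrac{1}{\mu(\kappa+t/2)}$ on the second; and on the third, subdivided into $\log_2\kappa$ sub-phases of equal length $\hat T=T/(3\log_2\kappa)$, the stepsize $\eta_\ell=\tfrac{5\log_2\kappa}{2^\ell\mu T}$ on the $\ell$-th sub-phase. The hypotheses $\mu>0$, $\kappa\ge 2$, $T/\log T>5\kappa$ keep every stepsize below $2/L$ (so the recursion is stable) and make the block transitions work. The target is the per-coordinate bound $\var{k}_T\le O(\log_2\kappa)\,\sigma^2/(\lambda^{(k)}T)$, which sums to the claim.

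For the first block, summing the geometric series gives $\var{k}_{T/3}=\tfrac{\lambda^{(k)}\sigma^2}{L^2}\sum_{j=0}^{T/3-1}(1-\lambda^{(k)}/L)^{2j}\le\tfrac{\lambda^{(k)}\sigma^2/L^2}{1-(1-\lambda^{(k)}/L)^2}\le\sigma^2/L$. For the second block I prove by induction on $t$ that $\var{k}_{T/3+t}\le 2\eta_{T/3+t}\sigma^2$: the base case is $\sigma^2/L\le 2\eta_{T/3}\sigma^2=2\sigma^2/(\mu\kappa)$, which holds since here $\kappa=L/\mu$, and the inductive step reduces to the elementary inequality $\eta_t\bigl(1-\tfrac{1}{2}\mu\eta_t\bigr)\le\eta_{t+1}$ for the harmonic-type stepsizes $\eta_t=\tfrac{1}{\mu(\kappa+t/2)}$ (this follows from $(a-\tfrac12)(a+\tfrac12)\le a^2$ with $a=\kappa+(t-1)/2$). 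Hence $\var{k}_{2T/3}\le 2\eta_{2T/3}\sigma^2\le\tfrac{12\sigma^2}{\mu T}$ — already optimal for directions with $\lambda^{(k)}\asymp\mu$, but off by the factor $\lambda^{(k)}/\mu$ for the rest, which the third block repairs.

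The crux is the third block, analysed by induction on the sub-phase index $\ell$: for every coordinate $k$ with $\lambda^{(k)}\ge 2^\ell\mu$ one shows $\var{k}_{2T/3+\ell\hat T}\le\var{k}_{2T/3}\exp(-3\ell)+2\hat T\eta_\ell^2\lambda^{(k)}\sigma^2$. In the inductive step, over the $\hat T$ steps of sub-phase $\ell$ the factor $(1-\eta_\ell\lambda^{(k)})^{2}$ compounds to at most $\exp(-2\eta_\ell\lambda^{(k)}\hat T)\le\exp(-10/3)\le\exp(-3)$ when $\lambda^{(k)}\ge 2^\ell\mu$ (since $2\eta_\ell\cdot 2^\ell\mu\cdot\hat T=\tfrac{10}{3}$), the newly injected variance is at most $\hat T\eta_\ell^2\lambda^{(k)}\sigma^2$, and $\eta_{\ell-1}=2\eta_\ell$ together with $8e^{-3}+1\le 2$ closes the induction. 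For each coordinate let $\ell^*$ be the integer with $2^{\ell^*}\mu\le\lambda^{(k)}<2^{\ell^*+1}\mu$. Coordinates with $\ell^*=0$ already satisfy $\lambda^{(k)}\var{k}_T\le\lambda^{(k)}\tfrac{12\sigma^2}{\mu T}\le\tfrac{24\sigma^2}{T}$ from the second-block bound. For the rest, $\var{k}_T\le\var{k}_{2T/3}\exp(-3\ell^*)+2\hat T\eta_{\ell^*}^2\lambda^{(k)}\sigma^2$ — either because the subsequent, smaller stepsizes have smaller stationary values $\tfrac{\eta\sigma^2}{2-\eta\lambda^{(k)}}\le\eta\sigma^2$ towards which the recursion contracts (as in Claim~\ref{clm1}), or by directly summing the geometrically decaying contributions of sub-phases $\ell>\ell^*$ — and then $\exp(-3\ell^*)\le 2^{-\ell^*}\le 2\mu/\lambda^{(k)}$ together with $2^{\ell^*}>\lambda^{(k)}/(2\mu)$ turns both terms into $O(\log_2\kappa)\,\sigma^2/(\lambda^{(k)}T)$. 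Multiplying by $\lambda^{(k)}$ and summing over $k$ yields $\E{f(\w_T^{\text{var}})}-f(\ws)\le C\log_2\kappa\cdot\sigma^2 d/T$ for an absolute constant $C$, which tightening the numerical choices in the schedule brings down to $50$.

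I expect the third-block induction to be the main obstacle: a single recursion bound must hold simultaneously across all eigenvalue scales, and the constants must stay small enough to reach the claimed $50$. The first two blocks are routine geometric-series and harmonic-stepsize estimates, and the only other subtle point — that the variance of a coordinate does not blow up once its stepsize is subsequently cut — is handled exactly as in Claim~\ref{clm1}.
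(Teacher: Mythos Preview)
Your proof is correct and follows essentially the same three-phase schedule and coordinate-wise analysis as the paper: a constant-stepsize warm-up, a harmonic-decay phase with the invariant $\var{k}_{A+t}\le 2\eta_{A+t}\sigma^2$, and the geometric step-decay phase analysed via the induction you state (which is the paper's Claim~\ref{clm2}). Your treatment of the sub-phases $\ell>\ell^*$ is slightly more explicit than the paper's bare inequality $\var{k}_T\le\var{k}_{B+\ell^*\hat T}$, but the argument is otherwise identical.
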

\begin{proof}
	The learning rate scheme is as follows. 
	
	We first break $T$ into three equal sized parts. Let $A = T/3$ and $B = 2T/3$. In the first $T/3$ steps, we use a constant learning rate of $1/\infbound$. Note that at the end of this phase, (since $T>\kappa$) the dependence on the initial error decays geometrically. In the second $T/3$ steps, we use a polynomial decay learning rate $\eta_{A+t} = \frac{1}{\mu(\kappa + t/2)}$. In the third $T/3$ steps, we break the steps into $\log_2(\kappa)$ equal sized phases. In the $\ell^{\textrm{th}}$ phase, the learning rate to be used is $\frac{5 \log_2 \kappa}{2^\ell \cdot \mu \cdot T}$. Note that the learning rate in the first phase depends on strong convexity and that in the last phase depends on smoothness (since the last phase has $\ell = \log \kappa$). 
	
	Recall the variance in the $k^{\textrm{th}}$ coordinate can be upper bounded by
	\begin{align*}
	\var{k}_T &\eqdef \E{\left(\w^{(k)}_T - \left(\ws\right)^{(1)}\right)^2} \leq \prod_{j = 1}^{T} \left(1-\eta_j \lambda^{(k)} \right)^2 \var{1}_0 + \lambda^{(k)} \sigma^2 \sum_{j=1}^{T} \eta_j^2 \prod_{i=j+1}^{T} \left(1-\eta_i \lambda^{(k)} \right)^2 \\
	&\qquad \leq \exp\left(-2\sum_{j = 1}^{T} \eta_j \lambda^{(k)} \right) \var{1}_0 + \lambda^{(k)} \sigma^2 \sum_{j=1}^{T} \eta_j^2 \exp\left(-2\sum_{i=j+1}^{T} \eta_i \lambda^{(k)} \right).
	\end{align*}
	We will show that for every $k$, we have
	\begin{align}
	\var{k}_T \le \frac{\var{k}_0}{T^3} + \frac{50\log_2\kappa}{\lambda^{(k)}T}\cdot \sigma^2., \label{eq:upperboundcoordinate}
	\end{align}
	which directly implies the theorem.
	
	We will consider the first $T/3$ steps. The guarantee that we will prove for these iterations is: for any $t \le A$, $\var{k}_t \le (1-\lambda^{(k)}/\infbound)^{2t} \var{k}_0 + \frac{\sigma^2}{\infbound}$.
	
	This can be proved easily by induction. Clearly this is true when $t = 0$. Suppose it is true for $t-1$, let's consider step $t$. By recursion of $\var{k}_t$ we know
	\begin{align*}
	\var{k}_t & = (1-\lambda^{(k)}/\infbound)^2 \var{k}_{t-1} + \lambda^{(k)} \sigma^2/(\infbound)^2\\
	&\le (1-\lambda^{(k)}/\infbound)^{2t} \var{k}_0 + \frac{\sigma^2}{\infbound}\left( (1-\lambda^{(k)}/\infbound)^2 + \lambda^{(k)}/\infbound\right)\\
	& \le (1-\lambda^{(k)}/\infbound)^{2t} \var{k}_0 + \frac{\sigma^2}{\infbound}.
	\end{align*}
	Here the second step uses induction hypothesis and the third step uses the fact that $(1-x)^2+x \le 1$ when $x \in [0,1]$. In particular, since $(1-\lambda^{(k)}/\infbound)^{2T/3} \le (1-1/\kappa)^{2T/3} \le (1-1/\kappa)^{3\kappa \log T} = 1/T^3$, we know at the end of the first phase,
	$\var{k}_A \le \var{k}_0/T^3 + \frac{\sigma^2}{\infbound}$. 
	
	In the second $T/3$ steps, the guarantee would be: for any $t \le T/3$, $\var{k}_{A+t} \le \var{k}_0/T^3 + 2\eta_{A+t}\sigma^2$. 
	
	We will again prove this by induction. The base case ($t = 0$) follows immediately from the guarantee for the first part. Suppose this is true for $A+t -1$, let us consider $A+t$, again by recursion we know
	\begin{align*}
	\var{k}_{A+t} & = (1-\lambda^{(k)}\eta_{A+t-1})^2 \var{k}_{A+t-1} + \lambda^{(k)} \sigma^2\eta_{A+t-1}^2\\
	&\le \var{k}_0/T^3 + 2\eta_{A+t-1}\sigma^2\left( (1-\lambda^{(k)}\eta_{A+t-1})^2 + \frac{1}{2}\lambda^{(k)}\eta_{A+t-1}\right)\\
	& \le \var{k}_0/T^3 + 2\eta_{A+t-1}\sigma^2(1-\frac{1}{2}\mu\eta_{A+t-1}) \le \var{k}_0/T^3 + 2\eta_{A+t}\sigma^2.
	\end{align*}
	Here the last line uses the fact that $2\eta_{A+t-1}(1-\frac{1}{2}\mu\eta_{A+t-1}) \le 2\eta_{A+t}\sigma^2$, which is easy to verify by our choice of $\eta$. Therefore, at the end of the second part, we have $\var{k}_B \le \var{k}_0/T^3 + \frac{2\sigma^2}{\mu(\kappa+T/6)}$. 
	
	Finally we will analyze the third part. Let $\hat{T} = T/3\log_2\kappa$, we will consider the variance $\var{k}_{B+\ell \hat{T}}$ at the end of each phase. We will make the following claim by induction:
	
	\begin{claim}\label{clm2}
		Suppose $2^\ell\cdot \mu \le \lambda^{(k)}$, then 
		$$
		\var{k}_{B+\ell\hat{T}} \le \var{k}_B\exp(-3\ell) + 2\hat{T}\eta_{\ell}^2 \lambda^{(k)}\sigma^2.
		$$
	\end{claim}	
	
	\begin{proof} We will prove this by induction. When $\ell = 0$, clearly we have $\var{k}_B \le \var{k}_B$ so the claim is true. Suppose the claim is true for $\ell-1$, we will consider what happens after the algorithm uses $\eta_\ell$ for $\hat{T}$ steps. By the recursion of the variance we have
		$$
		\var{k}_{\ell\hat{T}} \le \var{k}_{(\ell-1)\hat{T}} \cdot \exp(-2\eta_\ell\cdot \lambda^{(k)} \hat{T}) + \hat{T}\eta_{\ell}^2 \lambda^{(k)}\sigma^2.
		$$
		Since $2^\ell\cdot \mu \le \lambda^{(k)}$, we know $\exp(-2\eta_\ell\cdot \lambda^{(k)} \hat{T}) \le \exp(-3)$. Therefore by induction hypothesis we have
		$$
		\var{k}_{B+\ell\hat{T}} \le \var{k}_B\exp(-3\ell) + \exp(-3)\cdot 2\hat{T}\eta_{\ell-1}^2 \lambda^{(k)} + \hat{T}\eta_{\ell}^2 \lambda^{(k)} \le \var{k}_B\exp(-3\ell) + 2\hat{T}\eta_{\ell}^2 \lambda^{(k)}.
		$$
		This finishes the induction.
	\end{proof}
	
	By Claim~\ref{clm2}, Let $\ell^*$ denote the number satisfying $2^{\ell^*}\cdot \mu \leq \lambda^{(k)} < 2^{\ell^*+1}\cdot \mu $, by this choice we know $\mu/\lambda^{(k)} \ge \frac{1}{2}\exp(-3\ell^\star)$ we have
	
	\begin{align*}
	\var{k}_{T} \le \var{k}_{B+\ell^*\hat{T}} &\le \var{k}_B\exp(-3\ell^*) + 2\hat{T}\eta_{\ell^*}^2 \lambda^{(k)}\sigma^2 \\
	&\le \frac{\var{k}_0}{T^3} + \frac{24\sigma^2}{\lambda^{(k)} T} + \frac{50\log_2\kappa}{3\lambda^{(k)}T}\cdot \sigma^2. \\
	& \le \frac{\var{k}_0}{T^3} + \frac{50\log_2\kappa}{\lambda^{(k)}T}\cdot \sigma^2.
	\end{align*}
	
	
	Therefore, the function value is bounded by $\E{f(\w_T^{\text{var}})}-f(\ws) = \sum_{i=1}^d \lambda^{(k)}\var{k}_T \le \frac{50 \log_2 \kappa}{T} \cdot \sigma^2 d$.
\end{proof}

\begin{proof}[Proof of proposition~\ref{thm:compl}]
	The proof of the proposition works similar to the proof of the strongly convex case of theorem~\ref{thm:const-cut}, wherein, we combine the result of proposition~\ref{thm:compl-additive} with lemma~\ref{lem:biasDecayStronglyConvex} and lemma~\ref{lem:biasVarTradeoffNew} to obtain the result.
\end{proof}

\section{Proofs of results in Section~\ref{sec:inf-horizon}}\label{app:inf-horizon}
All of our counter-examples in this section are going to be the same simple function. Let the inputs $x$ be such that only a single co-ordinate be active on each example. We refer to this case as the ``discrete'' case. Furthermore, let each co-ordinate be a Gaussian with mean $0$ and variance for the first $d/2$ directions being $d\kappa/3$ and the final $d/2$ directions being $1$. Furthermore, consider the noise to be additive (and independent of $\x$) with mean zero. This indicates that $\infbound=\kappa$ for this problem.

Intuitively, we will show that in order to have a small error in the first eigendirection (with eigenvalue $\kappa$), one need to set a small learning rate $\eta_t$ which would be too small to achieve a small error in the second eigendirection (with eigenvalue $1$). As a useful tool, we will decompose the variance in the two directions corresponding to $\kappa$ eigenvalue and $1$ eigenvalue respectively as follows:
\begin{align}
\var{1}_T &\eqdef \E{\left(\w^{(1)}_T - \left(\ws\right)^{(1)}\right)^2} = \prod_{j = 1}^{T} \left(1-\eta_j \kappa\right)^2 \var{1}_0 + \kappa \sigma^2 \sum_{j=1}^{T} \eta_j^2 \prod_{i=j+1}^{T} \left(1-\eta_i \kappa \right)^2 \nonumber \\
&\qquad \geq \exp\left(-2\sum_{j = 1}^{T} \eta_j \kappa \right) \var{1}_0 + \kappa \sigma^2 \sum_{j=1}^{T} \eta_j^2 \exp\left(-2\sum_{i=j+1}^{T} \eta_i \kappa \right) \label{eqn:exp-1}\mbox{ and}\\
\var{2}_T &\eqdef \E{\left(\w^{(2)}_T - \left(\ws\right)^{(2)}\right)^2} = \prod_{j = 1}^{T} \left(1-{\eta_j}{}\right)^2 \var{2}_0 + {\sigma^2}{} \sum_{j=1}^{T} \eta_j^2 \prod_{i=j+1}^{T} \left(1-{\eta_i}{}\right)^2 \nonumber\\
&\qquad \geq \exp\left(-2\sum_{j = 1}^{T} \eta_j \right) \var{2}_0 + {\sigma^2}{} \sum_{j=1}^{T} \eta_j^2 \exp\left(-2\sum_{i=j+1}^{T} \eta_i \right) .\label{eqn:exp-2}
\end{align}
\rong{Since this is used by every theorem I think it makes sense to put it in front.}
\rong{I also added a constant factor 2 in the exponentials because $(1-\epsilon)^2 \ge \exp(-2\epsilon)$ and I think it is not true without the factor 2.}

\begin{theorem}\label{thm:limsup-additive}
	Consider the additive noise oracle setting, where, we have access to stochastic gradients satisfying:
	\begin{align*}
	\widehat{\nabla f}(\w) = \nabla f(\w) + \zeta = \Cov (\w-\ws) + \zeta,
	\end{align*}
	where, 
	\begin{align*}
	\E{\zeta|\w} = 0 \text{, and, } \E{\zeta\zeta^\top|\w} = {\sigma}^2\Cov
	\end{align*}
	There exists a universal constant $C > 0$, and a problem instance, such that for SGD algorithm with any $\eta_t \le 1/2\kappa$ for all $t$\footnote{Learning rate more than $2/\kappa$ will make the algorithm diverge.}, we have
	\begin{align*}
	\limsup_{T\rightarrow \infty} \frac{\E{f(\w_T)}-f(\ws)}{\left(\sigma^2 d/T\right)} \geq C \frac{\kappa }{\log (\kappa+1)}
	\, .
	\end{align*}
\end{theorem}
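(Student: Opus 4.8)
The plan is to exhibit a single fixed problem instance (the ``discrete'' two-eigenvalue example with eigenvalues $\kappa$ on $d/2$ coordinates and $1$ on the other $d/2$ coordinates, additive noise of covariance $\sigma^2\Cov$, $\infbound=\kappa$) and show that for \emph{any} admissible stepsize sequence $\{\eta_t\}$ with $\eta_t\le 1/2\kappa$, the ratio $\big(\E{f(\w_T)}-f(\ws)\big)/(\sigma^2 d/T)$ exceeds $C\kappa/\log(\kappa+1)$ for infinitely many $T$. The core is the tension between the two directions already flagged in the text: to make the $\kappa$-direction small at time $T$ one needs $\eta_T$ quite small, but then the $1$-direction error, which only decays at rate roughly $\sum\eta_j$, cannot have caught up. First I would fix a target window: suppose, for contradiction, that for all sufficiently large $T$ we have $\E{f(\w_T)}-f(\ws) < C\frac{\kappa}{\log(\kappa+1)}\cdot\frac{\sigma^2 d}{T}$. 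Using $\E{f(\w_T)}-f(\ws) = \sum_k \lambda^{(k)}\var{k}_T \ge \frac{d}{2}\kappa\,\var{1}_T$ and $\ge \frac{d}{2}\var{2}_T$, this forces simultaneously $\var{1}_T \lesssim \frac{\sigma^2}{\log(\kappa+1)}\cdot\frac{1}{T}$ and $\var{2}_T \lesssim \frac{\kappa\sigma^2}{\log(\kappa+1)}\cdot\frac{1}{T}$ for all large $T$.

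Next I would extract a \emph{stepsize upper bound} from the first inequality. By the variance lower bound~\eqref{eqn:exp-1}, dropping the initial-condition term and keeping only the last summand $j=T$ gives $\var{1}_T \ge \kappa\sigma^2\eta_T^2$; more usefully, keeping the terms $j$ in a suffix block of length $\sim 1/(\kappa\eta_T)$ (on which the product factor $\exp(-2\kappa\sum_{i>j}\eta_i)$ is $\Omega(1)$, using that $\eta$ is typically slowly varying — one has to be a bit careful here since $\eta_t$ need not be monotone, so I'd instead argue via: either $\eta_T$ is ``representative'' of a neighborhood, or the sequence drops sharply, which itself is restrictive) yields $\var{1}_T \gtrsim \eta_T\sigma^2$ up to constants when $\kappa\eta_T$ is not too small. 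Combining with $\var{1}_T \lesssim \frac{\sigma^2}{T\log(\kappa+1)}$ forces, for all large $T$,
\[
\eta_T \;\lesssim\; \frac{1}{T\log(\kappa+1)} \qquad\text{(roughly).}
\]
The cleaner route, which I'd actually pursue, avoids suffix-block subtleties: use the exact stationary-point analysis (the function $s(\eta,\lambda)$ from the proof of Theorem~\ref{thm:poly-additive} and Claim~\ref{clm1}) to lower bound $\var{1}_T$ by $\min(\var{1}_0, s(\eta_T,\kappa)) \approx \min(\var{1}_0, \sigma^2\eta_T/2)$ when the sequence is eventually small, giving the same conclusion $\eta_T \lesssim 1/(T\log(\kappa+1))$.

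Then I would feed this stepsize bound into the \emph{second} direction. From~\eqref{eqn:exp-2}, $\var{2}_T \ge \exp(-2\sum_{j=1}^T\eta_j)\var{2}_0$. If I apply the derived bound $\eta_j \lesssim 1/(j\log(\kappa+1))$ for $j \ge T_0$ (some fixed threshold), then $\sum_{j=T_0}^{T}\eta_j \lesssim \frac{1}{\log(\kappa+1)}\sum_{j=T_0}^T \frac1j \lesssim \frac{\log T}{\log(\kappa+1)}$, so $\var{2}_T \gtrsim \exp\!\big(-c\log T/\log(\kappa+1)\big)\,\var{2}_0 = T^{-c/\log(\kappa+1)}\,\var{2}_0$. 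Hence $\E{f(\w_T)}-f(\ws) \ge \frac{d}{2}\var{2}_T \gtrsim \sigma^2 d \cdot T^{-c/\log(\kappa+1)}$. Comparing with the assumed upper bound $C\frac{\kappa}{\log(\kappa+1)}\frac{\sigma^2 d}{T}$ gives $T^{1 - c/\log(\kappa+1)} \lesssim \frac{\kappa}{\log(\kappa+1)}$, which fails for all large $T$ as soon as $1 - c/\log(\kappa+1) > 0$, i.e. $\kappa$ larger than an absolute constant. That contradiction shows the $\limsup$ must be at least $C\kappa/\log(\kappa+1)$; for small $\kappa$ the statement is trivial by adjusting $C$. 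Finally, the passage from this additive-noise statement to Theorem~\ref{thm:limsup} is immediate from Lemma~\ref{lem:lbreduction} (multiplicative-noise iterates dominate the additive ones in the PSD order).

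\textbf{Main obstacle.} The delicate step is the first implication: deducing a pointwise bound $\eta_T \lesssim 1/(T\log(\kappa+1))$ from smallness of $\var{1}_T$, \emph{without} assuming monotonicity of $\{\eta_t\}$. A large $\eta_T$ at an isolated time need not by itself inflate $\var{1}_T$ if later steps shrink; so one must argue more carefully — e.g., show that either $\var{1}$ is large now, or there was a recent time at which it was large and it hasn't decayed enough, or the stepsizes in a whole recent window were small. The stationary-point/contraction lemma (Claim~\ref{clm1}) is the right hammer here: it lets one conclude $\var{1}_t \ge \min(\var{1}_0, s(\eta_t,\kappa))$ pointwise once $s(\eta_t,\kappa)$ has dropped below $\var{1}_0$, turning ``the variance is small at $T$'' directly into ``$\eta_T$ is small,'' and this is exactly the robustness against non-monotone schedules that the argument needs. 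Managing the constants so that the exponent $1 - c/\log(\kappa+1)$ stays bounded away from $0$ for all $\kappa$ above an absolute threshold (and absorbing the small-$\kappa$ case into $C$) is the remaining bookkeeping.
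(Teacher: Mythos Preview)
You correctly identify the delicate step, but your proposed resolution fails: Claim~\ref{clm1} does \emph{not} provide robustness against non-monotone schedules. Its proof explicitly uses monotonicity of $\eta_t$ (so that $s(\eta_t,\kappa)$ is decreasing and once the iterate lies above the stationary level it stays above). Without monotonicity the inequality $\var{1}_T \ge s(\eta_T,\kappa)$ can fail by an arbitrary factor. Concretely: run with a tiny constant stepsize $\epsilon_1 \to 0$ long enough that $\var{1}$ equilibrates near zero, then take a single step with $\eta_T = \delta/\kappa$; one gets $\var{1}_T \approx \kappa\sigma^2\eta_T^2 = \sigma^2\delta^2/\kappa$ while $s(\eta_T,\kappa) \approx \sigma^2\delta/(2\kappa)$, so $\var{1}_T/s(\eta_T,\kappa) \approx 2\delta \to 0$. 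Consequently, from $\var{1}_T \lesssim \sigma^2/(T\log\kappa)$ the only pointwise conclusion you can extract is $\eta_T^2 \lesssim 1/(T\kappa\log\kappa)$ (the $j=T$ term in~\eqref{eqn:exp-1}), giving $\sum_{j\le T}\eta_j \lesssim \sqrt{T/(\kappa\log\kappa)}$; the direction-$2$ bias then decays like $\exp\!\big(-c\sqrt{T/(\kappa\log\kappa)}\big)$, far too fast to contradict the assumed $O(\kappa\sigma^2/(T\log\kappa))$ bound. So the argument as written does not close.

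The paper's proof is structured differently and never attempts pointwise control of $\eta_T$. Starting from any $T=T_0$, it greedily partitions subsequent time into blocks $(T_{i-1},T_i]$ with $\sum_{t\in\text{block}}\eta_t \in [1/\kappa,3/\kappa]$ (possible since each $\eta_t \le 1/(2\kappa)$). On each block Cauchy--Schwarz gives $\sum\eta_t^2 \ge (\sum\eta_t)^2/\Delta_i \ge 1/(\kappa^2\Delta_i)$, and since the suffix factor $\exp(-2\kappa\sum_{i>j}\eta_i)$ in~\eqref{eqn:exp-1} is at least $e^{-6}$ within a block, one obtains $\kappa\,\var{1}_{T_i} \gtrsim \sigma^2/(\kappa\Delta_i)$ with no monotonicity needed. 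Assuming every block endpoint is ``good'' (ratio at most $\tau$) then forces $T_{i-1} \lesssim \tau\Delta_i$, so block lengths grow geometrically and after $\kappa$ blocks $\sum_{j\ge 2}\Delta_j \gtrsim (T_1/\tau)\exp(\kappa/c\tau)$. But the total $\sum\eta$ over all $\kappa$ blocks is at most $3$, so the direction-$2$ variance injected during block~1 survives to $T_\kappa$ with only a constant decay, making the ratio at $T_\kappa$ of order $\exp(\kappa/c\tau)/\tau$, which exceeds $\tau$ once $\tau \le c'\kappa/\log(\kappa+1)$. The block decomposition plus Cauchy--Schwarz is the idea your plan is missing: it converts ``error small at block endpoints'' directly into lower bounds on block \emph{lengths}, bypassing any need to control individual stepsizes.
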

\begin{proof}
	Fix $\tau = \kappa/C\log (\kappa+1)$ where $C$ is a universal constant that we choose later. We need to exhibit 
	that the $\limsup$ is larger than $\tau$. 
	For simplicity we will also round $\kappa$ up to the nearest integer. 
	
	Let $T$ be a given number. Our goal is to exhibit a $\nt > T$ such that $\frac{f(\w_{\nt})-f(\ws)}{\left(\sigma^2/\nt\right)} \geq \tau$. Given the step size sequence $\eta_t$, consider the sequence of numbers $T_0 =T, T_1,\cdots,T_\kappa$ such that $T_i$ is the first number that
	$$
	\frac{1}{\kappa} \le \sum_{t = T_{i-1}+1}^{T_i} \eta_t \le \frac{3}{\kappa}.
	$$
	
	Note that such a number always exists because all the step sizes are at most $2/\kappa$. We will also let $\dt_i$ be $T_i - T_{i-1}$.
	Firstly, from \eqref{eqn:exp-1} and~\eqref{eqn:exp-2}, we see that $\sum_t \eta_t = \infty$. Otherwise, the bias will never decay to zero.	If $f(\w_{T_{i-1}+ \dt_i})-f(\ws) > \frac{\tau\sigma^2 d}{T_{i-1}+\dt_i}$ for some $i = 1,\cdots,\kappa$, we are done. If not, we obtain the following relations:
	\begin{align*}
	&\quad \frac{\sigma^2}{\dt_1} \leq \sigma^2 \sum_{t = 1}^{\dt_1} \eta_{T_0+t}^2 \leq \frac{\exp(3)}{\kappa}\cdot \E{\left(\w_{T_0+\dt_1}^{(1)} - \left(\ws\right)^{(1)}\right)^2} \\&\leq \exp(3)(f(\w_{T_0+\dt_1})-f(\ws)) \leq \frac{\exp(3)\tau \sigma^2}{T_0+\dt_1} \\
	&\Rightarrow T_0 \leq \left(\exp(3)\tau-1\right) \dt_1.
	\end{align*}
	Here the second inequality is based on \eqref{eqn:exp-1}. We will use $C_1$ to denote $\exp(3)$. Similarly, we have
	\begin{align*}
	&\quad \frac{\sigma^2}{\dt_2} \leq \sigma^2 \sum_{t = 1}^{\dt_2} \eta_{T_1+t}^2 \leq \frac{C_1}{\kappa} \E{\left(\w_{T_1+\dt_2}^{(1)} - \left(\ws\right)^{(1)}\right)^2} \leq C_1 (f(\w_{T_1+\dt_2})-f(\ws)) \leq \frac{C_1\tau \sigma^2}{T_1+\dt_2} \\
	&\Rightarrow T_1 \leq \left(C_1\tau-1\right) \dt_2 \quad \Rightarrow \quad T_0 \leq \frac{\left(C_1\tau-1\right)^2}{C_1\tau} \dt_2.
	\end{align*}
	Repeating this argument, we can show that
	\begin{align*}
	T = T_0 \leq \frac{\left(C_1\tau-1\right)^i}{\left(C_1\tau\right)^{i-1}}  \dt_i \quad \mbox{ and } \quad T_i \leq \frac{\left(C_1\tau-1\right)^{j-i}}{\left(C_1\tau\right)^{j-i-1}}  \dt_j \quad \forall \; i < j.
	\end{align*}
	We will use $i=1$ in particular, which specializes to
	\begin{align*}
	T_1 \leq \frac{\left(C_1\tau-1\right)^{j-1}}{\left(C_1\tau\right)^{j-2}}  \dt_j \quad \forall \; j \geq 2.
	\end{align*}
	Using the above inequality, we can lower bound the sum of $\dt_j$ as
	\begin{align}
	\sum_{j=2}^{\kappa} \dt_j &\geq T_1 \cdot \sum_{j=2}^{\kappa} \frac{\left(C_1\tau\right)^{j-2}}{\left(C_1\tau-1\right)^{j-1}} \geq T_1 \cdot \frac{1}{C_1\tau} \cdot \sum_{j=2}^{\kappa} \left(1+\frac{1}{C_1\tau}\right)^{j-2} \nonumber \\ &\geq T_1\cdot \frac{1}{C_1\tau} \cdot \exp\left(\kappa/(C_1\tau)\right).\label{eqn:nsumineq}
	\end{align}
	This means that
	\begin{align*}
	\E{f(\w_{T_i})}-f(\ws) &\geq \frac{d}{2} \cdot \E{\left(\w_{T_i}^{(2)} - \left(\ws\right)^{(2)}\right)^2} \geq \exp(-6) \sigma^2 d\cdot \sum_{i=1}^{\dt_1} \eta_{T+i}^2 \\
	&\geq \frac{\exp(-6)\sigma^2 d}{\dt_1}
	\geq \frac{\exp(-6)\sigma^2 d}{T_1} \geq \frac{\exp\left(\kappa/(C_1\tau) - 3\right)}{C_1\tau} \cdot \frac{\sigma^2 d}{\sum_{j=2}^{\kappa} \dt_j},
	\end{align*}
	where we used~\eqref{eqn:nsumineq} in the last step. Rearranging, we obtain
	\begin{align*}
	\frac{\E{f(\w_{T_{\kappa}})}-f(\ws)}{\left(\sigma^2 d/T_\kappa\right)} \geq \frac{\exp\left(\kappa/(C_1\tau) - 3\right)}{C_1\tau}. 
	\end{align*}
	
	If we choose a large enough $C$ (e.g., $3C_1$), the right hand side is at least $\frac{\exp\left((C/C_1)\log (\kappa+1) - 3\right)}{\kappa} \ge \kappa$.
	
\end{proof}

\begin{proof}[Proof of theorem~\ref{thm:limsup}]
	Theorem~\ref{thm:limsup} follows as a straightforward consequence of Theorem~\ref{thm:limsup-additive} and lemma~\ref{lem:lbreduction}.
\end{proof}

\begin{theorem}\label{thm:badfraction}
	There exists universal constants $C_1,C_2 > 0$ such that
	for any $\tau \le \frac{\kappa}{CC_1\log (\kappa+1)}$ where $C$ is the constant in Theorem~\ref{thm:limsup}, for any SGD algorithm and any number of iteration $T > 0$ there exists a $T' \ge T$ such that for any $\nt \in [T',(1+1/C_2\tau)T']$ we have
	$\frac{\E{f(\w_{\nt})}-f(\ws)}{\left(\sigma^2 d/\nt\right)} \geq \tau$.
\end{theorem}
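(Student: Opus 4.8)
The plan is to combine the existence of a single very bad iterate (from Theorem~\ref{thm:limsup-additive}) with a ``no fast recovery'' estimate that propagates badness over an entire multiplicative window. By Lemma~\ref{lem:lbreduction} it suffices to treat the additive-noise oracle, and I would use exactly the discrete construction of Section~\ref{app:inf-horizon} (Hessian with eigenvalues $\Theta(\kappa)$ on $d/2$ coordinates and $\Theta(1)$ on the other $d/2$, so $\infbound=\Theta(\kappa)$, run with $\eta_t\le 1/(2\infbound)$ as in Theorem~\ref{thm:limsup}), started at $\w_0=\ws$. Writing $\var{\lambda}_t$ for the common per-coordinate error in the eigenvalue-$\lambda$ block, one has $\E{f(\w_t)}-f(\ws)\asymp d\big(\kappa\,\var{\kappa}_t+\var{1}_t\big)$ and, for $s<t$,
\[
\var{\lambda}_t \;=\; \Big(\prod_{i=s+1}^{t}(1-\eta_i\lambda)^2\Big)\var{\lambda}_s \;+\; \lambda\sigma^2\sum_{j=s+1}^{t}\eta_j^2\prod_{i=j+1}^{t}(1-\eta_i\lambda)^2 ;
\]
throughout I would use the chord bound $(1-x)^2\ge e^{-3x}$ for $x\in[0,\tfrac12]$, valid since $\eta_i\lambda\le\tfrac12$ always.

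\emph{Step 1 (the anchor).} The hypothesis $\tau\le\kappa/(CC_1\log(\kappa+1))$ says precisely that $C_1\tau$ lies at or below the critical threshold of Theorem~\ref{thm:limsup-additive}; its proof in fact shows that for \emph{every} threshold $\theta\le\kappa/(C\log(\kappa+1))$ and every $T$, some block-endpoint $T'\ge T$ of the step-size partition constructed there satisfies $\E{f(\w_{T'})}-f(\ws)>\theta\,\sigma^2 d/T'$. Applied with $\theta=C_1\tau$, this produces $T'\ge T$ with $\E{f(\w_{T'})}-f(\ws)>C_1\tau\,\sigma^2 d/T'$.

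\emph{Step 2 (the window $[T',(1+\tfrac{1}{C_2\tau})T']$ stays bad).} Fix $\nt$ in this window, put $N:=\nt-T'\le T'/(C_2\tau)\le\nt/(C_2\tau)$ and $S:=\sum_{t=T'+1}^{\nt}\eta_t$, and split on $S$. If $S\le 1/\kappa$, then $\prod_{i=T'+1}^{\nt}(1-\eta_i\lambda)^2\ge e^{-3\lambda S}\ge e^{-3}$ for both $\lambda\in\{1,\kappa\}$, so $\var{\lambda}_{\nt}\ge e^{-3}\var{\lambda}_{T'}$ and hence $\E{f(\w_{\nt})}-f(\ws)\ge e^{-3}\big(\E{f(\w_{T'})}-f(\ws)\big)>e^{-3}C_1\tau\,\sigma^2 d/T'\ge\tau\,\sigma^2 d/\nt$ once $C_1$ is a large enough absolute constant (using $\nt\ge T'$). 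If $S>1/\kappa$, I would retain only the noise injected over the last ``$1/\kappa$ worth'' of steps into the fast block: with $j^\star$ the least index in $\{T'+1,\dots,\nt\}$ for which $\sum_{t=T'+1}^{j^\star}\eta_t>S-1/\kappa$, one gets $\sum_{j=j^\star}^{\nt}\eta_j\ge 1/\kappa$, and $\sum_{i=j+1}^{\nt}\eta_i<1/\kappa$ for all $j\ge j^\star$ so $\prod_{i=j+1}^{\nt}(1-\eta_i\kappa)^2\ge e^{-3}$ there; dropping $\var{\kappa}_{T'}$, dropping the slow block, and applying Cauchy--Schwarz over the $\le N$ surviving terms yields $\var{\kappa}_{\nt}\gtrsim\sigma^2/(\kappa N)$, and therefore
\[
\E{f(\w_{\nt})}-f(\ws)\ \gtrsim\ d\,\kappa\,\var{\kappa}_{\nt}\ \gtrsim\ \frac{d\sigma^2}{N}\ \ge\ \frac{d\sigma^2 C_2\tau}{\nt}\ \ge\ \tau\,\sigma^2 d/\nt
\]
once $C_2$ is a large enough absolute constant. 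Finally, the multiplicative-noise statement of the theorem follows from Lemma~\ref{lem:lbreduction}, exactly as in the proofs of Theorems~\ref{thm:poly} and~\ref{thm:limsup}.

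\emph{Main obstacle.} The only genuinely subtle case is $S>1/\kappa$: at the anchor $T'$ the excess risk may be carried \emph{entirely} by the fast ($\lambda=\kappa$) coordinates, whose old error collapses within $o(T'/\tau)$ iterations the moment the subsequent step sizes cease to be tiny, so one cannot merely carry $f(\w_{T'})$ forward. The fix is the dichotomy above: tiny steps freeze the whole error, while non-tiny steps are exactly the ones that pump enough fresh noise into the fast block (a steady-state/noise-floor effect, made quantitative by the $j^\star$-truncation plus Cauchy--Schwarz) to keep $f$ above $\tau\sigma^2 d/\nt$ unaided. The secondary point requiring care is that Step~1 must invoke the Theorem~\ref{thm:limsup-additive} construction at the strictly sub-critical level $C_1\tau$, which is precisely the role of the extra factor $C_1$ in the stated hypothesis on $\tau$.
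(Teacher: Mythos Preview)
Your proposal is correct, and the high-level architecture matches the paper: (i) invoke the proof of Theorem~\ref{thm:limsup-additive} at the inflated threshold $C_1\tau$ to locate an anchor $T'\ge T$ with excess risk exceeding $C_1\tau\,\sigma^2 d/T'$, (ii) show the badness persists across the multiplicative window, and (iii) pass from additive to multiplicative noise via Lemma~\ref{lem:lbreduction}.

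The genuine difference is in step (ii). The paper isolates this as Lemma~\ref{lem:slowdecrease} and argues \emph{coordinate-wise and iteratively}: it first identifies which eigendirection $\lambda_i$ carries the error at $T'$, then partitions the subsequent steps into blocks of step-size-sum $\approx 1/\lambda_i$; within each block the bias in that coordinate survives, and at each block-end either Cauchy--Schwarz on $S^2=\sum\eta_t^2$ forces the block to already be long enough to cover the window, or the accumulated variance in that \emph{same} coordinate is again $\gtrsim C_1\tau\sigma^2/(\lambda_i t)$ and the argument restarts. Your Step~2 is more direct and avoids both the coordinate identification and the restart loop: for each $\tilde T$ in the window you split only on whether $S=\sum_{T'+1}^{\tilde T}\eta_t$ exceeds $1/\kappa$. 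When $S\le 1/\kappa$ every eigendirection's contraction is mild, so the \emph{entire} excess risk at $T'$ carries forward. When $S>1/\kappa$ you discard all history and keep only the fresh noise pumped into the \emph{fast} block over the last $1/\kappa$ worth of steps, and Cauchy--Schwarz over $\le N$ terms gives $\kappa\,\var{\kappa}_{\tilde T}\gtrsim \sigma^2/N$. The observation you exploit---and the paper does not---is that once $S>1/\kappa$ the fast-block noise floor alone already gives $\gtrsim d\sigma^2/N\ge C_2\tau\,\sigma^2 d/\tilde T$, \emph{regardless} of which coordinate was bad at the anchor. This buys you a shorter, non-iterative proof; the paper's block-by-block argument, in exchange, packages step (ii) as a standalone lemma (Lemma~\ref{lem:slowdecrease}) phrased at the level of a single bad coordinate.
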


\begin{theorem}\label{thm:badfraction-additive}
	Consider the additive noise oracle setting, where, we have access to stochastic gradients satisfying:
	\begin{align*}
	\widehat{\nabla f}(\w) = \nabla f(\w) + \zeta = \Cov (\w-\ws) + \zeta,
	\end{align*}
	where, 
	\begin{align*}
	\E{\zeta|\w} = 0 \text{, and, } \E{\zeta\zeta^\top|\w} = {\sigma}^2\Cov
	\end{align*}
	There exists universal constants $C_1,C_2 > 0$ such that
	for any $\tau \le \frac{\kappa}{CC_1\log (\kappa+1)}$ where $C$ is the constant in Theorem~\ref{thm:limsup}, for any SGD algorithm and any number of iteration $T > 0$ there exists a $T' \ge T$ such that for any $\nt \in [T',(1+1/C_2\tau)T']$ we have
	$\frac{\E{f(\w_{\nt})}-f(\ws)}{\left(\sigma^2 d/\nt\right)} \geq \tau$.
\end{theorem}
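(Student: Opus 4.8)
The plan is to bootstrap the $\limsup$ argument of Theorem~\ref{thm:limsup-additive}: rather than exhibiting a single time with large normalized suboptimality, I want to exhibit a whole multiplicative window of such times, on the \emph{same} counterexample (diagonal $\Cov$ with $d/2$ eigenvalues equal to $\kappa$ and $d/2$ equal to $1$, additive noise). As for the other results of this section, the multiplicative-oracle version (Theorem~\ref{thm:badfraction}) then follows from Lemma~\ref{lem:lbreduction}. I would set $\tau_0 := C_1\tau$ for a large absolute constant $C_1$; the hypothesis $\tau \le \frac{\kappa}{CC_1\log(\kappa+1)}$ keeps $\tau_0$ in the range where the Theorem~\ref{thm:limsup-additive} argument goes through. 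Given $T$, build the times $T_0 = T < T_1 < T_2 < \cdots$ as there, with $\tfrac1\kappa \le \sum_{t=T_{i-1}+1}^{T_i}\eta_t \le \tfrac3\kappa$ for every $i$ (possible because $\sum_t \eta_t = \infty$; if $\sum_t \eta_t < \infty$ the bias never decays and the conclusion is trivial). Call $T_i$ \emph{bad} if $\E{f(\w_{T_i})} - f(\ws) \ge \tau_0 \sigma^2 d / T_i$.

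The key new ingredient is a \emph{persistence} estimate: there is an absolute constant $c_\star>0$ with $\E{f(\w_{\nt})} - f(\ws) \ge c_\star(\E{f(\w_{T_i})} - f(\ws))$ for every $i$ and every $\nt \in [T_i, T_{i+1}]$. To get it, discard the non-negative fresh-noise contribution accumulated after $T_i$, so each coordinate obeys $\var{k}_{\nt} \ge (\prod_{T_i < t \le \nt}(1-\eta_t\lambda^{(k)})^2)\var{k}_{T_i}$; over such a sub-interval $\sum_t\eta_t \le 3/\kappa$, so for an eigenvalue-$1$ coordinate the product is $\ge \exp(-2\sum_t\eta_t) \ge \exp(-6/\kappa) \ge e^{-12}$ (as $\kappa\ge2$), while for an eigenvalue-$\kappa$ coordinate, using $\eta_t\kappa\le\tfrac12$ (hence $-\log(1-\eta_t\kappa)\le2\eta_t\kappa$) and $\sum_t\eta_t\kappa\le3$, the product is $\ge\exp(-4\sum_t\eta_t\kappa)\ge e^{-12}$. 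Since $\E{f(\w)}-f(\ws)$ is a fixed non-negative combination of the $\var{k}$'s, this yields $c_\star=e^{-12}$. In particular, if $T_i$ is bad then every $\nt\in[T_i,T_{i+1}]$ is bad with parameter $\tau$, because $\E{f(\w_{\nt})}-f(\ws)\ge c_\star\tau_0\sigma^2 d/T_i\ge(c_\star\tau_0)\sigma^2 d/\nt=\tau\sigma^2 d/\nt$.

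Finally, I would produce a bad $T_{i^*}$ whose following interval is multiplicatively wide, by a dichotomy. If some $T_{i^*}$ is bad with $T_{i^*+1}$ not bad, then the estimate already present in the proof of Theorem~\ref{thm:limsup-additive} — lower bounding $\var{1}_{T_{i^*+1}}\ge \kappa\sigma^2 e^{-6}\sum_{T_{i^*}<t\le T_{i^*+1}}\eta_t^2 \ge e^{-6}\sigma^2/(\kappa\Delta_{i^*+1})$ via \eqref{eqn:exp-1} and Cauchy--Schwarz, then invoking non-badness of $T_{i^*+1}$ — gives $\Delta_{i^*+1}\ge T_{i^*}/(C_2'\tau_0)$ for an absolute $C_2'$, hence $T_{i^*+1}\ge(1+1/(C_2\tau))T_{i^*}$ with $C_2:=C_1C_2'$; then $T':=T_{i^*}$ works by the preceding paragraph. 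Otherwise there is no bad$\to$not-bad transition, so the sequence $(T_j)$ is not-bad on a (possibly empty) initial segment and bad forever after; the all-not-bad alternative is impossible, since a block of $\kappa+1$ consecutive not-bad $T_j$'s would reproduce the contradiction branch of the Theorem~\ref{thm:limsup-additive} proof. Hence $T_j$ is bad for all $j\ge j_0$, and then persistence together with badness of $T_{j-1}$ shows that \emph{every} $t\ge T_{j_0}$ is bad with parameter $\tau$; taking $T':=\max(T,T_{j_0})$ makes the whole half-line bad, so $[T',(1+1/(C_2\tau))T']$ trivially qualifies. In both cases $T'\ge T$. I expect the main obstacle to be the constant bookkeeping — matching the persistence constant, the width constant $C_2$, and the inflation $C_1$ to the precise form of the hypothesis — together with confirming that the above dichotomy is exhaustive for the infinite construction.
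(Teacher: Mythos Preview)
Your proposal is correct and reaches the result, but it is organized differently from the paper. The paper factors the work into a standalone persistence lemma (Lemma~\ref{lem:slowdecrease}): given any single time $T$ with $\E{f(\w_T)}-f(\ws)\ge C_1\tau\sigma^2 d/T$, it identifies the coordinate $i$ responsible, builds intervals of learning-rate mass $[1/\lambda_i,3/\lambda_i]$ (so possibly $\lambda_i=1$, not only $\kappa$), and iterates a two-case argument---either Cauchy--Schwarz makes the current interval long enough, or the accumulated variance keeps the endpoint bad and one repeats. Theorem~\ref{thm:badfraction-additive} is then literally one line: Theorem~\ref{thm:limsup-additive} supplies a bad $T'\ge T$, and Lemma~\ref{lem:slowdecrease} supplies the window.

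You instead keep the $1/\kappa$-mass partition $\{T_i\}$ from the proof of Theorem~\ref{thm:limsup-additive} verbatim, prove a \emph{uniform} persistence bound over each $[T_i,T_{i+1}]$ simultaneously for both eigenvalues (so you never need to single out the bad coordinate), and replace the lemma's internal iteration by a global dichotomy on whether a bad$\to$not-bad transition ever occurs. Your Case~1 then uses only the eigenvalue-$\kappa$ variance accumulation plus non-badness of $T_{i^*+1}$ to get the width, which is a cleaner bookkeeping than the paper's coordinate-dependent $1/\lambda_i$ intervals. The trade-off is modularity: the paper's lemma is reusable from any bad point, while your argument is tied to the specific $\{T_i\}$ construction. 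Your self-identified obstacle (constant matching and exhaustiveness of the dichotomy) is real but routine; in particular, choosing $C_1\ge 1/c_\star$ with $c_\star=e^{-12}$ and then $C_2=2e^{12}C_1$ closes both cases, and the ``no transition $\Rightarrow$ monotone pattern'' step is sound.
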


To prove Theorem~\ref{thm:badfraction-additive}, we rely on the following key lemma, which says if a query point $\w_T$ is bad (in the sense that it has expected value more than $10\tau\sigma^2 d/T$), then it takes at least $\Omega(T/\tau)$ steps to bring the error back down.

\begin{lemma}\label{lem:slowdecrease}
	There exists universal constants $C_1,C_2 > 0$ such that
	for any $\tau \le \frac{\kappa}{CC_1\log (\kappa+1)}$ where $C$ is the constant in Theorem~\ref{thm:limsup}, suppose at step $T$, the query point $\w_T$ satisfies $f(\w_T)-f(\ws) \ge C_1\tau\sigma^2 d/T$, then for all $\tilde{T} \in [T,(1+\frac{1}{C_2\tau})T]$ we have $\E{f(\w_{\tilde{T}})}-f(\ws) \ge \tau\sigma^2 d/T \ge \tau\sigma^2 d/\tilde{T}$.
\end{lemma}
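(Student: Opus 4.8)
The plan is to reduce to the additive‑noise oracle (the multiplicative case then follows verbatim via Lemma~\ref{lem:lbreduction}, as for the other lower bounds here) and to track the error separately in the two eigendirections of the ``discrete'' instance, whose eigenvalues are $\kappa$ and $1$ and for which $\infbound=\kappa$, so $\eta_t\le 1/(2\kappa)$. Writing $\var{1}_t,\var{2}_t$ for the per‑coordinate error in the eigenvalue‑$\kappa$ and eigenvalue‑$1$ blocks, one has $\E{f(\w_t)}-f(\ws)\ge \tfrac d2\max\!\big(\kappa\var{1}_t,\var{2}_t\big)$ as well as $\E{f(\w_t)}-f(\ws)\le d\big(\kappa\var{1}_t+\var{2}_t\big)$, and each $\var{i}_t$ obeys $\var{i}_{t+1}=(1-\eta_t\lambda_i)^2\var{i}_t+\lambda_i\sigma^2\eta_t^2$ with $\lambda_1=\kappa$, $\lambda_2=1$; unrolling this over $[T,\tilde T]$ exactly as in~\eqref{eqn:exp-1}--\eqref{eqn:exp-2} gives the two one‑sided bounds
\begin{align*}
\var{i}_{\tilde T}\ \ge\ \prod_{j=T+1}^{\tilde T}(1-\eta_j\lambda_i)^2\,\var{i}_{T}
\qquad\text{and}\qquad
\var{i}_{\tilde T}\ \ge\ \lambda_i\sigma^2\sum_{t=T+1}^{\tilde T}\eta_t^2\prod_{j=t+1}^{\tilde T}(1-\eta_j\lambda_i)^2 .
\end{align*}
Since $\w_T$ is bad, $\kappa\var{1}_T+\var{2}_T\ge C_1\tau\sigma^2/T$, so some direction $i^\star$ carries $\lambda_{i^\star}\var{i^\star}_T\ge\tfrac12 C_1\tau\sigma^2/T$; I will show this single direction keeps the error $\ge 2\tau\sigma^2/T$ over the whole window $[T,(1+\tfrac1{C_2\tau})T]$, which already proves the lemma.

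Fix such a $\tilde T$, put $M=\tilde T-T\le T/(C_2\tau)$, $S=\sum_{t=T+1}^{\tilde T}\eta_t$, and set $i=i^\star$. I would use $\eta_j\lambda_i\le\tfrac12$ (true for both eigenvalues since $\eta_j\le1/(2\kappa)$) together with the elementary inequality $(1-x)^2\ge e^{-3x}$ on $[0,\tfrac12]$, and split on the size of $\lambda_i S$. If $\lambda_i S\le 1$: the first bound gives $\var{i}_{\tilde T}\ge e^{-3\lambda_i S}\var{i}_T\ge e^{-3}\var{i}_T$, so $\lambda_i\var{i}_{\tilde T}\ge \tfrac12 e^{-3}C_1\tau\sigma^2/T$. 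If $\lambda_i S>1$: let $A=\{\,t\in[T+1,\tilde T]:\lambda_i\sum_{j=t+1}^{\tilde T}\eta_j\le1\,\}$, a suffix of the window on which every tail product $\prod_{j=t+1}^{\tilde T}(1-\eta_j\lambda_i)^2\ge e^{-3}$; the defining inequality of the smallest index of $A$ forces $\sum_{t\in A}\eta_t>1/\lambda_i$, so Cauchy--Schwarz gives $\sum_{t\in A}\eta_t^2\ge(\sum_{t\in A}\eta_t)^2/|A|>1/(\lambda_i^2M)$, and the second bound yields $\var{i}_{\tilde T}\ge e^{-3}\lambda_i\sigma^2/(\lambda_i^2M)=e^{-3}\sigma^2/(\lambda_iM)$, i.e.\ $\lambda_i\var{i}_{\tilde T}\ge e^{-3}\sigma^2/M\ge e^{-3}C_2\tau\sigma^2/T$. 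Choosing the universal constants $C_1,C_2$ large enough makes both alternatives $\ge 2\tau\sigma^2/T$, whence $\E{f(\w_{\tilde T})}-f(\ws)\ge\tfrac d2\lambda_i\var{i}_{\tilde T}\ge\tau\sigma^2d/T\ge\tau\sigma^2d/\tilde T$.

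The conceptual crux, and the step I expect to need the most care, is the ``$\lambda_i S>1$'' case: the fresh variance injected into a direction while shrinking it is $\lambda_i\sigma^2\sum\eta_t^2\gtrsim \lambda_i\sigma^2(\text{budget})^2/M$, and shrinking direction $i$ costs learning‑rate budget $\gtrsim1/\lambda_i$, so the eigenvalue \emph{cancels} and this re‑deposits $\gtrsim\sigma^2/M$ of excess risk no matter which eigendirection is involved. This is exactly why a bad iterate cannot be repaired quickly: fixing the hard ($\kappa$) direction forces $\Omega(1/\kappa)$ of budget, which over a window of length $\le T/(C_2\tau)$ re‑injects $\gtrsim C_2\tau\sigma^2/T$ of error back into that very direction; fixing the easy direction forces $\Omega(1)$ of budget, with the same effect. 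The delicate bookkeeping is to check that all constants ($C_1$, $C_2$, the $e^{-3}$, the window slack) are absolute, and in particular $\kappa$‑free, which the suffix/Cauchy--Schwarz argument is designed to deliver. The hypothesis $\tau\le\kappa/(CC_1\log(\kappa+1))$ is only inherited from Theorem~\ref{thm:limsup} to guarantee that genuinely bad iterates exist, and is not otherwise used here. Finally, Theorem~\ref{thm:badfraction-additive} (hence Theorem~\ref{thm:badfraction}) follows by using Theorem~\ref{thm:limsup-additive} to produce, past any prescribed $T$, a time $T'$ whose iterate is bad, and then invoking this lemma at $T'$.
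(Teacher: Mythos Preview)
Your proof is correct, and the core ingredients are exactly those the paper uses: identify the bad eigendirection, control the bias decay by the learning-rate budget, and use Cauchy--Schwarz to lower-bound the freshly injected variance in terms of the window length. The essential cancellation of $\lambda_i$ that you highlight in your last paragraph is precisely what drives both arguments.

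Where you differ from the paper is in organization. The paper proceeds \emph{iteratively}: it defines a step length $\Delta$ so that $\sum_{t=T+1}^{T+\Delta}\eta_t\in[1/\lambda_i,3/\lambda_i]$, shows every iterate in $[T,T+\Delta]$ is bad via the bias term, and then does a case split on $S^2=\sum_{t=T+1}^{T+\Delta}\eta_t^2$: either $S^2$ is small and Cauchy--Schwarz forces $\Delta\ge T/(C_2\tau)$ (done), or $S^2$ is large and the noise term makes $T+\Delta$ itself a bad point, so the whole argument repeats from there. You instead argue \emph{directly} for each fixed $\tilde T$ in the window, splitting on whether $\lambda_i S\le 1$ (bias term survives) or $\lambda_i S>1$ (your suffix $A$ isolates a final stretch with budget $>1/\lambda_i$ and bounded tail products, then Cauchy--Schwarz on $A$ gives the same noise lower bound). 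Your direct approach sidesteps the recursion and the bookkeeping of ``eventually case~1 triggers or case~2 happens enough times,'' at the cost of needing the suffix construction; the paper's iteration is conceptually closer to the block decomposition used in the proof of Theorem~\ref{thm:limsup-additive}. Both yield absolute constants independent of $\kappa$, which is the crux.
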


\begin{proof}[Proof of Lemma~\ref{lem:slowdecrease}]
	Since $f(\w_T)-f(\ws) \ge C_1\tau\sigma^2 d/T$ and\\ $f(\w_T) = \frac{d}{2}\left(\kappa \left(\w_T^{(1)} - \left(\ws\right)^{(1)}\right)^2 + \left(\w_T^{(2)} - \left(\ws\right)^{(2)}\right)^2\right)$, we know either 
	$\left(\w_T^{(1)} - \left(\ws\right)^{(1)}\right)^2 \ge C_1\tau\sigma^2/2\kappa T$ or $\left(\w_T^{(2)} - \left(\ws\right)^{(2)}\right)^2 \ge C_1\tau \sigma^2/2T$. Either way, we have a coordinate $i$ with eigenvalue $\lambda_i$ ($\kappa$ or $1$) such that $\left(\w_T^{(i)} - \left(\ws\right)^{(i)}\right)^2 \ge C_1\tau\sigma^2/(2T\lambda_i$). 
	
	Similar as before, choose $\dt$ to be the first point such that
	$$
	\eta_{T+1}+\eta_{T+2}+\cdots + \eta_{T+\dt} \in [1/\lambda_i,3/\lambda_i].
	$$
	First, by~\eqref{eqn:exp-1} or~\eqref{eqn:exp-2}, we know for any $T\le \nt \le T+\dt$, $\E{\left(\w_{\nt}^{(i)} - \left(\ws\right)^{(i)}\right)^2} \ge \exp(-6)C_1\tau\sigma^2/(2\lambda_i T)$ just by the first term. When we choose $C_1$ to be large enough the contribution to function value by this direction alone is larger than $\tau\sigma^2/T$. Therefore every query in $[T,T+\dt]$ is still bad.
	
	We will consider two cases based on the value of $S^2 := \sum_{\nt = T+1}^{T+\dt} \eta_{\nt}^2$.
	
	If $S^2 \le C_2\tau/(\lambda_i^2 T)$ (where $C_2$ is a large enough universal constant chosen later), then by Cauchy-Schwartz we know
	$$
	S^2 \cdot \dt \ge (\sum_{\nt = T+1}^{T+\dt} \eta_{\nt})^2 \ge 1/\lambda_i^2.
	$$
	Therefore $\dt \ge T/C_2\tau$, and we are done.
	
	If $S^2 > C_2\tau/(\lambda_i^2 T)$, by Equation \eqref{eqn:exp-1} and \eqref{eqn:exp-2} we know
	\begin{align*}
	\E{\left(\w_{T+\dt}^{(i)} - \left(\ws\right)^{(i)}\right)^2} &\ge \sigma^2 \sum_{\nt = T+1}^{T+\dt}\eta_{\nt}^2 \exp\left(-2\lambda_i \sum_{j=\nt+1}^{T+\dt} \eta_j \right)\\ &\ge \exp(-6)\sigma^2 \sum_{\nt = T+1}^{T+\dt}\eta_{\nt}^2 \ge \exp(-6) \cdot C_2\tau\sigma^2/(\lambda_i^2 T).
	\end{align*}
	Here the first inequality just uses the second term in Equation \eqref{eqn:exp-1} or \eqref{eqn:exp-2}, the second inequality is because $\sum_{j=\nt+1}^{T+\dt} \eta_j \le \sum_{j=T+1}^{T+\dt} \eta_j \le 3/\lambda_i$ and the last inequality is just based on the value of $S^2$. In this case as we can see as long as $C_2$ is large enough, $T+\dt$ is also a point with $\E{f(\w_{T+\dt})}-f(\ws) \ge \lambda_i \E{\left(\w_{T+\dt}^{(i)} - \left(\ws\right)^{(i)}\right)^2} \ge C_1\tau\sigma^2/(T+\dt)$, so we can repeat the argument there. Eventually we either stop because we hit case 1: $S^2 \le C_2\tau/\lambda_i^2 T$ or the case 2 $S^2 > C_2\tau/\lambda_i^2 T$ happened more than $T/C_2\tau$ times. In either case we know for any $\nt \in [T,(1+1/C_2)T]$ $\E{f(\w_{\nt})}-f(\ws) \ge \tau\sigma^2/T \ge \tau\sigma^2/\nt$ as the lemma claimed.
\end{proof}

\begin{proof}[Proof of Theorem~\ref{thm:badfraction-additive}]
Theorem~\ref{thm:badfraction-additive} is an immediate corollary of Theorem~\ref{thm:limsup-additive} and Lemma~\ref{lem:slowdecrease}.
\end{proof}

\begin{proof}[Proof of Theorem~\ref{thm:badfraction}]
	Theorem~\ref{thm:badfraction} is an immediate corollary of Theorem~\ref{thm:badfraction-additive} and lemma~\ref{lem:lbreduction}
\end{proof}

\section{Details of experimental setup}
\label{app:exp}
\subsection{Synthetic 2-d Streaming Least Squares Experiments}
\label{sapp:syn2d}
As mentioned in the main paper, we consider four condition numbers namely $\kappa\in\{50,100,200,400\}$. We run all experiments for a total of $\kappa_{\max}^2=400^2=160000$ iterations. The two eigenvalues of the Hessian are $1$ and $1/\kappa$ respectively and the noise level $\sigma^2=1$ and we average our results with five random seeds. All our grid search results for the polynomially decaying learning rates are conducted on a $8\times 8$ grid of learning rates $\times$ decay factor and whenever a best run lands at the edge of the grid, the grid is extended so that we have the best run in the interior of the grid search. For the step decay schedules, note that we fix the learning rate (details below), and vary only the decay factor.

For the $O(1/t)$ learning rate, we search for decay parameter over $8-$points log-spaced between $\{1/(200\kappa), 5000/\kappa\}$. The starting learning rate is searched over $8$ points logarithmically spaced between $\{1/\kappa,5\}$.

For the $O(1/\sqrt t)$ learning rate, the decay parameter is searched over $8$ logarithmically spaced points between $\{1/(2500\kappa),100/\kappa\}$. The starting learning rate is searched between $\{1/(10\kappa),5\}$ with $8$ logarithmically spaced points.


For the step decay schedule experiments, we kept the initial learning rate to be $0.1$ and swept over when to decay in multiples of $T/\log{T}$, i.e., vary some parameter $c\in\{0.25,0.5,0.75,1.0,1.25,1.5,2,4\}$ where the learning rate decays by a factor of $2$ every $c\cdot T/log{T}$ steps. We found that the values chosen in most experiments were very close to $1$, i.e., they were either $1$ or $1.25$ or some very rare cases, $1.5$.

With regards to the suffix iterate averaging, we used a constant stepsize of $0.1$ and averaged iterates over the final half of the iterations.

\subsection{Non-Convex experiments on \cifar~ dataset with a 44-layer residual net}
\label{sapp:noncvx}
As mentioned in the main paper, for all the experiments, we use the Nesterov's Accelerated gradient method~\citep{Nesterov83} implemented in pytorch~\footnote{\url{https://github.com/pytorch}} with a momentum set to $0.9$ and batchsize set to $128$, total number of training epochs set to $100$, $\ell_2$ regularization set to $0.0005$. 

With regards to learning rates, we consider $10-$values geometrically spaced as $\{1,0.6,\cdots,0.01\}$. To set the decay factor for any of the schemes such as~\ref{eq:p1},\ref{eq:p2}, and~\ref{eq:e1}, we use the following rule. Suppose we have a desired learning rate that we wish to use towards the end of the optimization (say, something that is $100$ times lower than the starting learning rate, which is a reasonable estimate of what is typically employed in practice), this can be used to obtain a decay factor for the corresponding decay scheme. In our case, we found it advantageous to use an additively spaced grid for the learning rate $\gamma_t$, i.e., one which is searched over a range  $\{0.0001,0.0002,\cdots,0.0009,0.001,\cdots,0.009\}$ at the $80^{th}$ epoch, and cap off the minimum possible learning rate to be used to be $0.0001$ to ensure that there is progress made by the optimization routine. For any of the experiments that yield the best performing gridsearch parameter that falls at the edge of the grid, we extend the grid to ensure that the finally chosen hyperparameter lies in the interior of the grid. All our gridsearches are run such that we separate a tenth of the training dataset as a validation set and train on the remaining $9/10^{th}$ dataset. Once the best grid search parameter is chosen, we train on the entire training dataset and evaluate on the test dataset and present the result of the final model (instead of choosing the best possible model found during the course of optimization).

\end{document}